\theoremstyle{theorem}
\newtheorem{theorem}{Theorem}
\numberwithin{theorem}{section}
\theoremstyle{lemma}
\newtheorem{lemma}[theorem]{Lemma}
\theoremstyle{remark}
\newtheorem{remark}[theorem]{Remark}
\theoremstyle{corollary}
\theoremstyle{proposition}
\newtheorem{proposition}[theorem]{Proposition}
\theoremstyle{definition}
\theoremstyle{claim}
\newcommand{\PP}{\mathbb{P}} % not math operator due to preceding spacing
\newcommand{\E}{\mathbb{E}} % not math operator due to preceding spacing
\newcommand{\MIRLS}{{\texttt{Mix-IRLS}}\xspace}
\newcommand{\MIRLST}{{\texttt{Mix-IRLS:tuned}}\xspace}
\newcommand{\EM}{\texttt{EM}\xspace}
\newcommand{\AltMin}{\texttt{AltMin}\xspace}
\newcommand{\GD}{\texttt{GD}\xspace}
\newcommand{\ILTS}{\texttt{ILTS}\xspace}
\begin{document}

\title{Imbalanced Mixed Linear Regression}
\author{Pini Zilber \footnotemark[1]\thanks{Faculty of Mathematics and Computer Science, Weizmann Institute of Science \newline (\href{mailto:pini.zilber@weizmann.ac.il}{pini.zilber@weizmann.ac.il}, \href{mailto:boaz.nadler@weizmann.ac.il}{boaz.nadler@weizmann.ac.il})}
\and Boaz Nadler \footnotemark[1]}
\date{}

\maketitle
\nopagebreak

\begin{abstract}
We consider the problem of mixed linear regression (MLR), where each observed sample belongs to one of $K$ unknown linear models. 
In practical applications, the proportions of the $K$ components are often imbalanced.
Unfortunately, most MLR methods do not perform well in such settings. 
Motivated by this practical challenge, in this work we propose \MIRLS, 
a novel, simple and fast algorithm for MLR with excellent performance on both balanced and imbalanced mixtures.
In contrast to popular approaches that recover the $K$ models simultaneously, \MIRLS does it sequentially using tools from robust regression. 
Empirically, \MIRLS succeeds in a broad range of settings where other methods fail.
These include imbalanced mixtures, small sample sizes, presence of outliers, and an unknown
number of models $K$. In addition, \MIRLS outperforms competing methods on several real-world datasets, in some cases by a large margin. 
We complement our empirical results by deriving a recovery guarantee for
\MIRLS, which highlights its advantage on imbalanced mixtures.
\end{abstract}

{\footnotesize keywords: mixture regression model, mixture of linear models, robust regression, iteratively reweighted least squares}

%%%%%%%%%%%%%%%%%%%%%%%%%%%%%%%%%%%%%%%%%%%%%%
%%%%%%%%%%%%%%%%%%%%%%%%%%%%%%%%%%%%%%%%%%%%%%
\section{Introduction}
In this paper we consider a simple generalization of the linear regression problem, known as mixed linear regression (MLR) \cite[Chapter~14]{bishop2006pattern}. In MLR, each sample belongs to one of $K$ unknown linear models, and it is not known to which one. MLR can thus be viewed as a combination of linear regression and clustering.
Despite its simplicity, the presence of multiple linear components makes MLR highly expressive and thus a useful model for data representation in various applications, including
trajectory clustering \cite{gaffney1999trajectory},
health care analysis \cite{deb2000estimates},
market segmentation \cite{wedel2000market},
face recognition \cite{chai2007locally},
population clustering \cite{ingrassia2014model},
%music perception \cite{de1989mixtures},
%biological responses fit \cite{vieth1989fitting},
%traffic modeling \cite{elhenawy2016automatic},
drug sensitivity prediction \cite{li2019drug}
and relating genes to disease phenotypes \cite{chang2021supervised,sun2022robust}.

Several methods were developed to solve MLR, including expectation maximization \cite{de1989mixtures,bishop2006pattern}, alternating minimization  \cite{yi2014alternating,yi2016solving} and gradient descent \cite{zhong2016mixed}.
These methods share three common features: they all (i) require as input the number of components $K$; (ii) estimate the $K$ models simultaneously; and (iii) tend to perform better on balanced mixtures, where the proportions of the $K$ models are approximately equal.
As illustrated in \cref{sec:simulations}, given data from an imbalanced mixture, these methods may fail.
In addition, most of the theoretical guarantees in the literature assume a balanced mixture.
Since imbalanced mixtures are ubiquitous in applications, it is of practical interest to develop MLR methods able to handle such settings, as well as corresponding recovery guarantees.

\begin{figure*}[ht]
\centering
	\includegraphics[width=1\linewidth]{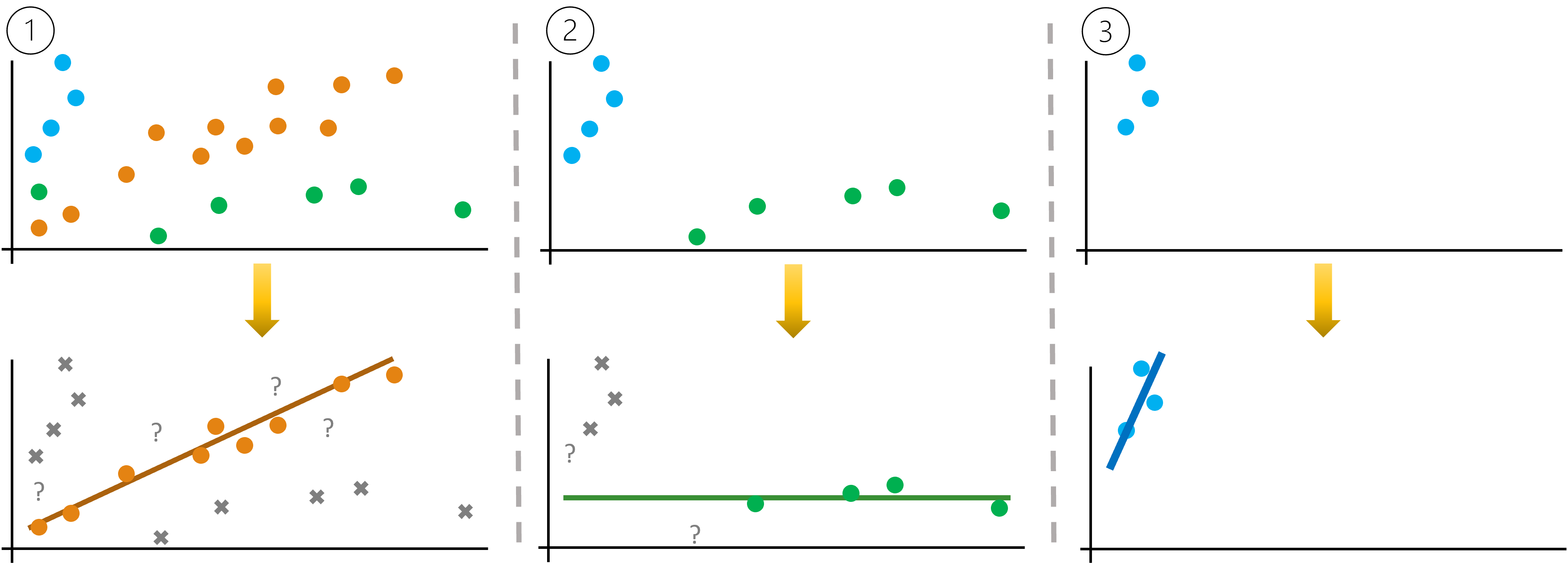}
	\caption{Illustration of \MIRLS. The data is a mixture of $K=3$ linear components. At each step, \MIRLS excludes samples with poor fit (marked with 'X') and with moderate fit (marked with '?'), and performs linear regression on the good-fit samples. The poor-fit samples are passed to the next step.}
\label{fig:MIRLS_illustration}
\end{figure*}

In this paper, we present \MIRLS, a novel and conceptually different iterative algorithm for MLR, able to handle both balanced and imbalanced mixtures.
\MIRLS is computationally efficient, simple to implement, and scalable to large problems.
In addition, \MIRLS can successfully recover the underlying components with only relatively few samples, is robust to noise and outliers, and does not require as input the number of components $K$.
In \cref{sec:simulations,sec:experiments} we illustrate the competitive advantage of \MIRLS over several other methods, on both synthetic and real data.

To motivate our approach, let us consider a highly imbalanced mixture, where most samples belong to one model. In this case, the samples that belong to the other models may be regarded as {\em outliers} with respect to the dominant one. The problem of finding the dominant model may thus be viewed as a specific case of robust linear regression, a well studied problem with a variety of effective solutions, see e.g.~\cite{huber1981robust,wilcox2011introduction}.
After finding the dominant model, we may remove its associated samples from the observation set and repeat the process to find the next dominant model. This way, the $K$ linear models are found {\em sequentially} rather than simultaneously as in the aforementioned methods.
This key difference makes our approach applicable to scenarios with imbalanced mixtures, and does not require to know a-priori the number of components.

As illustrated in \cref{fig:MIRLS_illustration}, to
facilitate the sequential recovery of \MIRLS, we found it important to allow in its intermediate steps an "I don't know" assignment to some of the samples.
Specifically, given coefficient estimates of the current model, we partition the samples to three classes, according to their fit to the found model: good, moderate and poor fit. The samples with good fit are used to re-estimate the model coefficients; those with poor fit are assumed to belong to a yet undiscovered model, and hence are passed to the next step; the moderate-fit samples, on whose model identity we have only low confidence ("I don't know"), are ignored, but used later in a refinement phase.

At each step, we perform robust regression to recover the currently dominant model.
A popular robust regression technique is iteratively reweighted least squares (IRLS) \cite{holland1977robust,chatterjee1997robust,mukhoty2019globally}. IRLS iteratively solves weighted least squares subproblems,
where the weight of each sample depends on its residual with respect to the current model estimate.
As the iterations progress, outliers are hopefully assigned smaller and smaller weights, and ultimately ignored.

On the theoretical front, in \cref{sec:theory} we present a recovery guarantee for our method. Specifically, we show that in a population setting with an imbalanced mixture of two components, \MIRLS successfully recovers the linear models. A key novelty in our analysis is that it holds for a sufficiently imbalanced mixture rather than a sufficiently balanced one (or even a perfectly balanced one) as is common in the literature \cite{yi2014alternating,balakrishnan2017statistical,kwon2021minimax}. In addition, unlike most available guarantees, our result allows an unknown $K$, and it is insensitive to the initialization, allowing it to be arbitrary.

To the best of our knowledge, our work is the first to specifically handle imbalance in the MLR problem, providing both a practical algorithm as well as a theoretical recovery guarantee.
The basic idea of \MIRLS, sequential recovery using robust regression, was also proposed by \cite{banks2009cherry}. They used random sample consensus (RANSAC) approach instead of IRLS, and without our "I don't know" concept. To find a component, \cite{banks2009cherry} randomly pick $(d+2)$ samples from the data and run ordinary least squares (OLS) on them, in hope that they all belong to the same component. As discussed by the authors, their approach is feasible only in low dimensional settings, as the probability that all chosen samples belong to the same component decreases exponentially with the dimension $d$; specifically, the authors studied only cases with $d\leq 5$. In addition, the authors did not provide a theoretical guarantee for their method. In contrast, our \MIRLS method is scalable to high dimensions, and theoretically justified.
%We also remark that Banks et al.~\yrcite{banks2009cherry} did not motivate their method by imbalanced mixtures, which is the focus of our paper.

\paragraph{Notation.}
For a positive integer $K$, denote $[K] = \{1, \ldots, K\}$, and the set of all permutations over $[K]$ by $[K]!$.
For a vector $u$, denote its Euclidean norm by $\|u\|$.
For a matrix $X$, denote its operator norm (a.k.a.~spectral norm) by $\|X\|$ and its smallest singular value by $\sigma_\text{min}(X)$.
Given a matrix $X\in \mathbb R^{n\times d}$ and an index set $S \subseteq [n]$, $X_S \in \mathbb R^{|S|\times d}$ is the submatrix of $X$ that corresponds to the rows in $S$.
Denote by $\text{diag}(w)$ the diagonal matrix whose entries are $W_{ii} = w_i$ and $W_{ij} = 0$ for $i\neq j$. Denote the probability of an event $A$ by $\PP[A]$. Denote the expectation and the variance of a random variable $x$ by $\E[x]$ and $\text{Var}[x]$, respectively. The cumulative distribution function of the standard normal distribution $\mathcal N(0,1)$ is $\Phi$.

%%%%%%%%%%%%%%%%%%%%%%%%%%%%%%%%%%%%%%%%%%%%%%
%%%%%%%%%%%%%%%%%%%%%%%%%%%%%%%%%%%%%%%%%%%%%%
\section{Problem Setup}
Let $\{(x_i, y_i)\}_{i=1}^n$ be $n$ pairs of explanatory variables $x_i \in \mathbb R^d$ and corresponding responses $y_i \in \mathbb R$.
In standard linear regression, one assumes a linear relation between the response and the explanatory variables, namely $y_i = x_i^\top \beta^* + \epsilon_i$ where $\epsilon_i\in \mathbb R$ are random noise terms with zero mean. A common goal is to estimate the vector of regression coefficients $\beta^* \in \mathbb R^d$.
In mixed linear regression (MLR), in contrast, the assumption is that each response $y_i$ belongs to one of $K$ different linear models $\{\beta^*_k\}_{k=1}^K$. Formally,
\begin{align}\label{eq:MLR}
y_i &= x_i^\top \beta^*_{c^*_i} + \epsilon_i, \quad i=1, \ldots, n,
\end{align}
where $c^* = (c_1^*, \ldots, c_n^*)^\top \in [K]^n$ is the label vector.
Importantly, we do not know to which component each pair $(x_i, y_i)$ belongs, namely $c^*$ is unknown.
For simplicity, we assume the number of components $K$ is known, and later on discuss the case where it is unknown.
Given the $n$ samples $\{(x_i, y_i)\}_{i=1}^n$, the goal is to estimate $\beta^* \equiv \{\beta_1^*, \dotsc, \beta_K^*\} \subset \mathbb R^d$, possibly by concurrently estimating $c^*$.
See \cref{fig:tone_perception} for a real-data visualization of MLR in the simplest setting of $d=1$ and $K=2$.

To make the recovery of the regression vectors $\beta^*$ feasible, sufficiently many samples must be observed. The minimal number of samples depends on the dimension and the mixture proportions.
Denote the vector of mixture proportions by $p = (p_1, \ldots, p_K)$, with $p_k = |\{i\in [n] : c^*_i = k\}|$.
Then the information limit on the sample size, namely the minimal number of observations required to make $\beta^*$ identifiable in the absence of noise, is $n_\text{inf} = d/\min(p)$.

The lack of knowledge of the labels $c^*$ makes MLR significantly more challenging than standard linear regression. Even in the simplified setting of $K=2$ with perfect balance ($p_1=p_2=1/2$) and no noise ($\epsilon = 0$), the problem is NP-hard without further assumptions \cite{yi2014alternating}.

%%%%%%%%%%%%%%%%%%%%%%%%%%%%%%%%%%%%%%%%%%%%%%
%%%%%%%%%%%%%%%%%%%%%%%%%%%%%%%%%%%%%%%%%%%%%%
\section{The Mix-IRLS Method}\label{sec:method}
For simplicity, we present our algorithm assuming $K$ is known; the case of an unknown $K$ is discussed in \cref{remark:unknown_K} below.
\MIRLS consists of two phases.
In its first (main) phase, \MIRLS sequentially recovers each of the $K$ components $\beta^*_1, \ldots, \beta^*_K$ by treating the remaining components as outliers. The sequential recovery is the core idea that distinguishes \MIRLS from most other methods. In the second phase, we refine the estimates of the first phase by optimizing them simultaneously, similar to existing methods.
As discussed below, accurate estimates are often already found in the first phase, in which case the second phase is unneeded.
For brevity, we defer the description of the second phase to \cref{sec:method_phaseII}.

Before we dive into details, let us give a brief overview of the main phase mechanism. At each round of the main phase, \MIRLS estimates the largest component present in the data using techniques from robust regression. Then, it partitions the samples to three subsets, according to their fit to the found component: good, moderate and poor. \MIRLS refines the component estimate using only the samples with good fit, and proceeds to the next round with the poor fit samples. The moderate fit samples are ignored in the main phase, as we have low confidence in their component assignment - they either may or may not belong to the found component.
The partition of the samples at each round is performed with the aid of two parameters: a threshold $0 \leq w_\text{th} < 1$ and an oversampling ratio $\rho \geq 1$.

\begin{algorithm}[tb]
\caption{\MIRLS: main phase} \label{alg:MIRLS_phaseI}
\SetKwInOut{Input}{input}
\SetKwInOut{Output}{output}
\Input{samples $\{(x_i, y_i)\}_{i=1}^n$, number of components $K$, parameters $w_\text{th}$, $\rho$, $\eta$, $T_1$}
\Output{estimates $\beta_1, \ldots, \beta_K$}
%such that $y_i \approx x_i^\top \beta_{k(i)}$ for some function $k: [n]\to [K]$
set $S_1 = [n]$ \\
\For{$k=1$ {\bfseries to} $K$}
{
	initialize $\beta_k$ randomly \\
	\For{$t=1$ {\bfseries to} $T_1$}
	{
		compute $r_{i,k} = |x_i^\top \beta_k - y_i|, \quad \forall i\in S_k$ \\
		compute $w_{i,k} = (1 + \eta r_{i,k}^2 / \bar{r}_k^2)^{-1}, \quad \forall i\in S_k$ \\
		compute $\beta_k = (X_{S_k}^\top W_k X_{S_k})^{-1} X_{S_k}^\top W_k\, y_{S_k}$
	}
	set $S_{k+1} = \{i\in S_k : w_{i,k} \leq w_\text{th} \}$ \\
	set $S'_k = \{\rho \cdot d \text{ samples in } S_k \text{ with largest } w_{i,k} \}$ \\
	\If{$k < K$ and $|S_{k+1}| < \rho\cdot d$ \label{alg:resetting_criterion}}
	{
		start \MIRLS over with $w_\text{th} \leftarrow w_\text{th} + 0.1$
	}
	compute $\beta_k = (X_{S'_k}^\top X_{S'_k})^{-1} X_{S'_k}^\top y_{S'_k}$
}
\end{algorithm}

Next, we give a detailed description of the main phase of \MIRLS. A pseudocode appears in \cref{alg:MIRLS_phaseI}.
We begin by initializing the set of active samples as the entire dataset, $S_1 = [n]$.
Next, we perform the following procedure for $K$ rounds.
At the beginning of round $k \in [K]$, we start from a random guess $\beta_k$ for the $k$-th vector.
Then, we run the following IRLS scheme for $T_1$ iterations:
\begin{subequations}\label{eq:IRLS}\begin{alignat}{3}
r_{i,k} &= \left|x_i^\top \beta_k - y_i\right|, \, \forall i\in S_k, &\text{(residuals)} \label{eq:IRLS_r} \\
w_{i,k} &= \frac{1}{1 + \eta \cdot r_{i,k}^2 / \bar r_k^2}, \, \forall i\in S_k, &\text{(weights)} \label{eq:IRLS_W} \\
\beta_k &= (X_{S_k}^\top W_k X_{S_k})^{-1} X_{S_k}^\top W_k\, y_{S_k}, \quad &\text{(estimate)} \label{eq:IRLS_beta}
\end{alignat}\end{subequations}
where $X = \begin{pmatrix} x_1 & \cdots & x_n \end{pmatrix}^\top$ and $y = \begin{pmatrix} y_1 & \cdots & y_n \end{pmatrix}^\top$, $\bar r_k = \text{median}\{r_{i,k} \mid i\in S_k\}$, $\eta>0$ is a parameter of \MIRLS, and $W_k = \text{diag}(w_{1,k}, w_{2,k}, \dotsc)$.
The estimate \eqref{eq:IRLS_beta} is the minimizer of the weighted objective $\|W^\frac{1}{2} \left(y_{S_k} - X_{S_k}\beta_k\right) \|^2$.
After $T_1$ iterations of \cref{eq:IRLS_r,eq:IRLS_W,eq:IRLS_beta}, we define the subset $S_{k+1}$ of 'poor fit' samples that seem to belong to another component,
\begin{subequations}\label{eq:index_subsets}
\begin{align}\label{eq:index_subsets_Sk+1}
S_{k+1} = \left\{i\in S_k : w_{i,k} \leq w_\text{th} \right\} .
\end{align}
This serves as the set of active samples for the next round.
In addition, we define the subset $S_k'$ of 'good fit' samples that seem to belong to the $k$-th component,
\begin{align}\label{eq:index_subsets_Sk'}
S'_k &= \left\{\rho\cdot d \text{ samples in } S_k \text{ with largest weights } w_{i,k} \right\}.
\end{align}
\end{subequations}
This subset is used to refine the estimate by performing OLS,
\begin{align}\label{eq:phase1_beta}
\beta_k^\text{phase-I} = (X_{S'_k}^\top X_{S'_k})^{-1} X_{S'_k}^\top y_{S'_k}.
\end{align}
The subsets $S_k', S_{k+1}$ are in general disjoint, unless the threshold $w_\text{th}$ is too low or the oversampling ratio $\rho$ is too large.
The choice for the value of $w_\text{th}$ is discussed in \cref{remark:reset_alg,sec:theory}.
A suitable value for $\rho$ depends on the ratio between the sample size $n$ and the information limit $n_\text{inf} = d/\min(p)$. In the challenging setting of $n \approx n_\text{inf}$, $\rho$ should be set close to $1$; otherwise, \MIRLS would reach one of the $K$ rounds with less than $d$ active samples, making the recovery of the yet undiscovered components impossible. If $n \gg n_\text{inf}$, then $\rho$ can be set to a higher value.

This concludes the main phase of \MIRLS.
The second (refinement) phase, described in \cref{sec:method_phaseII}, improves the estimates $\beta_k^\text{phase-I}$ using also the moderate-fit samples that were ignored in the first phase.
Yet, in many cases, empirically, the main phase is sufficient to accurately recover the components.
This is theoretically established in \cref{sec:theory}.

\begin{remark}[Parameter tuning]\label{remark:parameters}
\MIRLS has four input parameters:
$\eta, w_\text{th}, \rho$ and $T_1$.
As empirically demonstrated in \cref{sec:simulations,sec:experiments}, there is no need to carefully tune these parameters, as in a wide range of settings including both synthetic and real-world data, \MIRLS performs well with a fixed set of values, specified in \cref{sec:additional_details}.
Moreover, in many cases, tuning \MIRLS parameters only slightly improves its performance.
\end{remark}

\begin{remark}[Threshold adaptation]\label{remark:reset_alg}
If the input threshold $w_\text{th}$ is too low, the algorithm will fail to detect all the components, as the number of 'poor fit' samples passed to the next round in \eqref{eq:index_subsets_Sk+1} is too small. This happens when $|S_k| < \rho d$ for some $k$, as in this case there are not enough samples to confidently determine the $k$-th component using Eq.~\eqref{eq:index_subsets_Sk'}.
To handle this case, we increase the value of $w_\text{th}$ by $0.1$, and start \MIRLS over; see \cref{alg:resetting_criterion} in \cref{alg:MIRLS_phaseI}.
\end{remark}

\begin{remark}[Unknown/overestimated $K$]\label{remark:unknown_K}
If $K$ is unknown, or only an upper bound $K_\text{max}$ is given, we ignore the resetting criterion (\cref{alg:resetting_criterion} in \cref{alg:MIRLS_phaseI}), and instead run phase I until there are too few samples to estimate the next component, namely $|S_{k+1}| < \rho d$.
We then proceed to the second phase with $K$ set to the number of components with at least $\rho d$ associated samples.
\end{remark}

%%%%%%%%%%%%%%%%%%%%%%%%%%%%%%%%%%%%%%%%%%%%%%
%%%%%%%%%%%%%%%%%%%%%%%%%%%%%%%%%%%%%%%%%%%%%%
\section{Simulation Results}\label{sec:simulations}
We present simulation results on synthetic data in this section, and on real-world data in the next one. In both sections, we compare the performance of \MIRLS to the following three algorithms: (i) \AltMin\xspace - alternating minimization \cite{yi2014alternating,yi2016solving};
(ii) \EM\xspace - expectation maximization \cite[Chapter~14]{faria2010fitting,bishop2006pattern};
and (iii) \GD\xspace - gradient descent on a factorized objective \cite{zhong2016mixed}.
We implemented all methods in MATLAB.\footnote{MATLAB and Python code implementations of \MIRLS are available at \url{github.com/pizilber/MLR}.}
In some of the simulations, we additionally ran a version of \EM for which the mixture proportions $p$ are given as prior knowledge, but it hardly improved its performance and we did not include it in our results.

\begin{figure}[t]
\centering
	\subfloat{
		\includegraphics[width=0.5\linewidth]{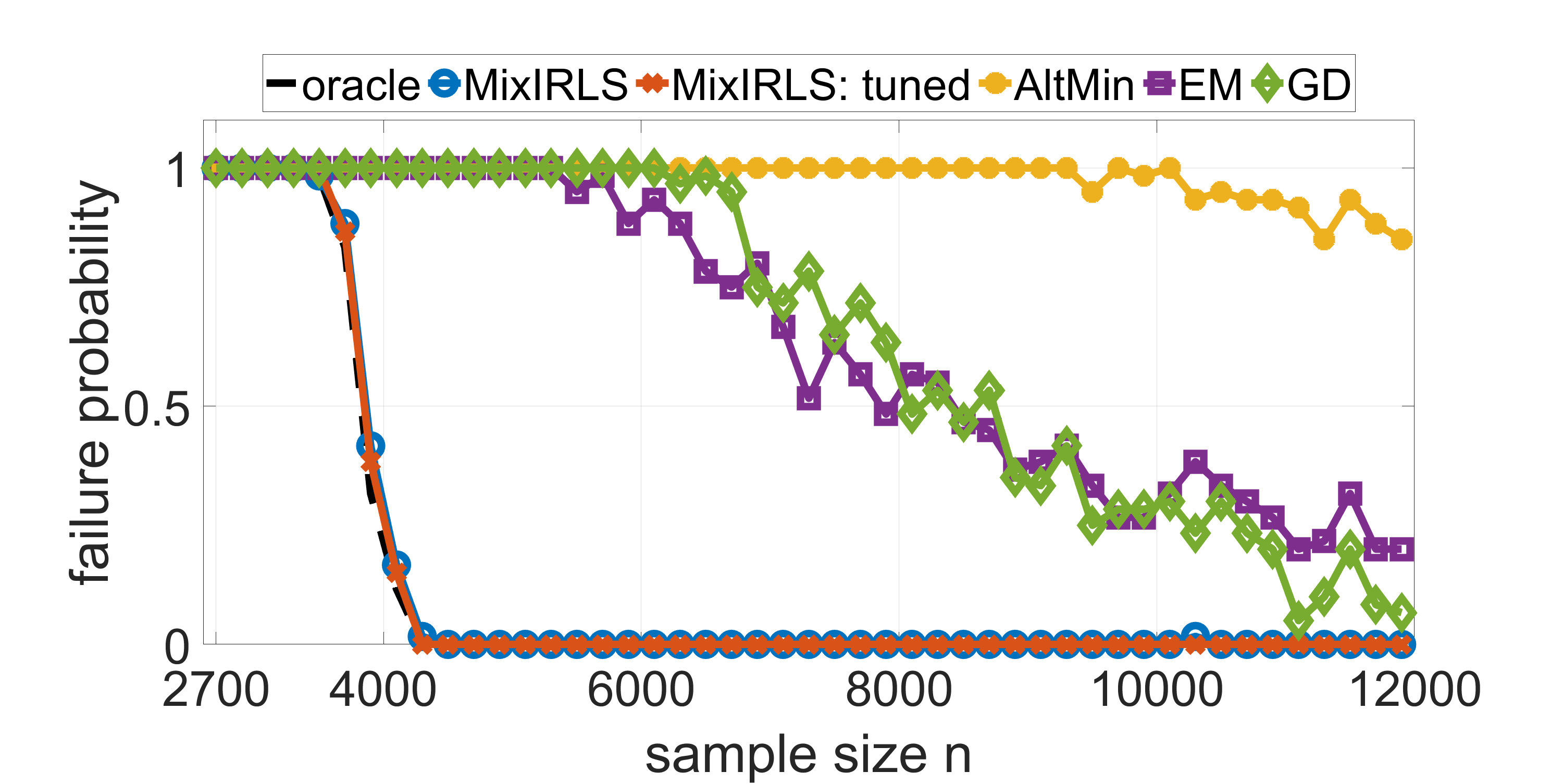}
	}
	\subfloat{
		\includegraphics[width=0.5\linewidth]{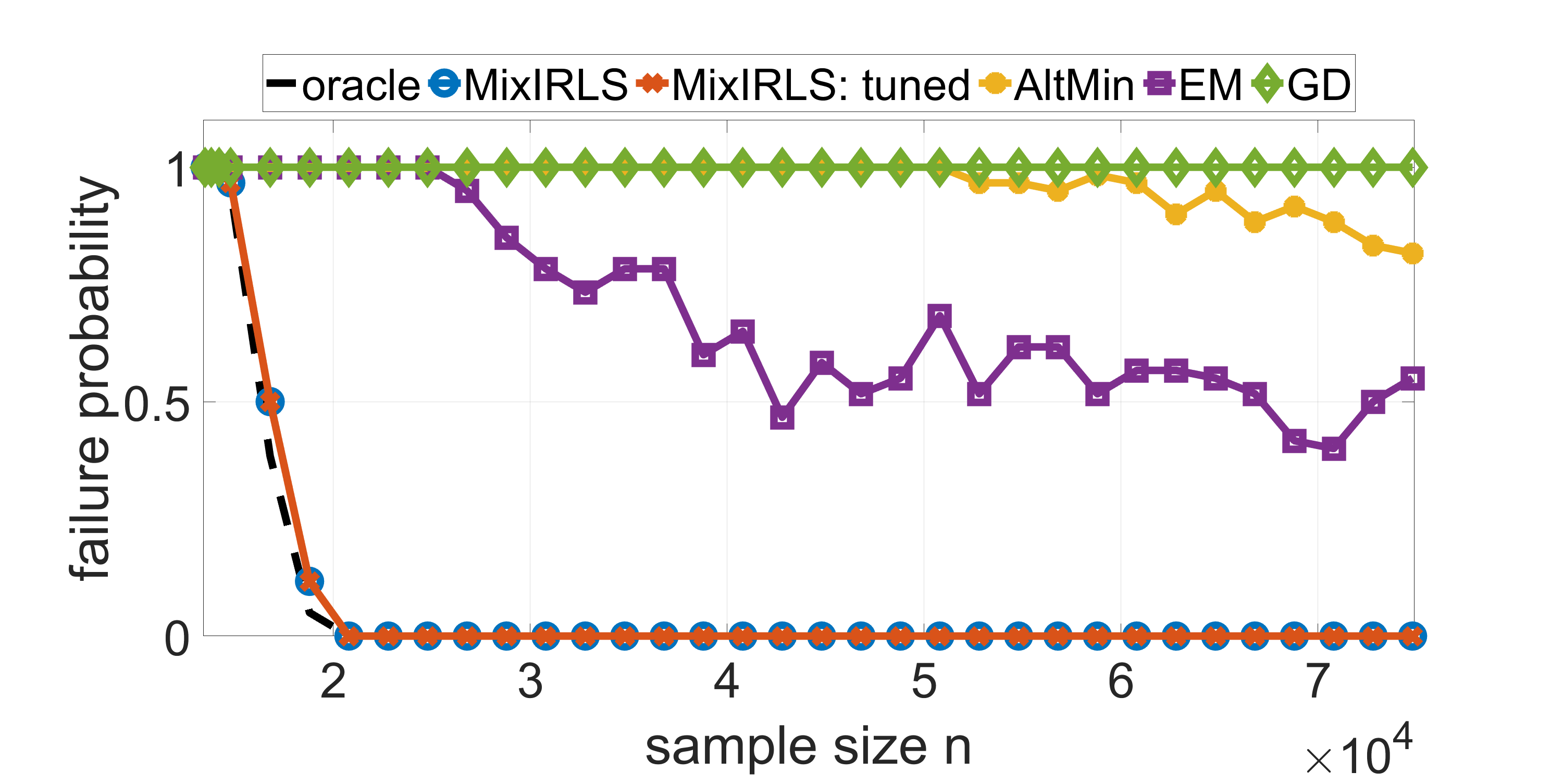}
	}
	\caption{A comparison of various MLR algorithm. Depicted is the percentage of runs, out of 50 random initializations, for which $F_\text{latent} > 2\sigma$ (see \eqref{eq:objective_latent}), as a function of the sample size $n$. The dimension and noise level are fixed at $d=300$ and $\sigma = 10^{-2}$. Mixture: $K=3$ with $p = (0.7, 0.2, 0.1)$ (left panel); $K = 5$ with $p = (0.63, 0.2, 0.1, 0.05, 0.02)$ (right panel).}
\label{fig:varying_n}
\end{figure}

All methods start from the same random initialization, as described shortly. Other initializations did not change the qualitative results; see \cref{sec:additional_details} for more details.
For \EM and \GD we added a single \AltMin refinement step at the end of each algorithm to improve their estimates.
In all simulations, we also run a version of \MIRLS with tuned parameters, denoted \MIRLST. In addition, we plot the performance of an oracle which is provided with the true labels $c^*$ and separately computes the OLS solution for each component.
Details on maximal number of iterations, early stopping criteria and parameter tuning of \MIRLST and \GD appear in \cref{sec:additional_details}.

Similar to \cite{zhong2016mixed,ghosh2020alternating}, in each simulation we sample the entries of the explanatory variables $X$ and of the regression vectors $\beta^*$ from the standard normal distribution $\mathcal N(0,1)$. In this section, the dimension is fixed at $d = 300$. The entries of the additive noise term $\epsilon$ are sampled from a normal distribution with zero mean and standard deviation $\sigma = 10^{-2}$. Additional simulations with other values of $d$ and $\sigma$ appear in \cref{sec:additional_simulations}.
As discussed in the introduction, a central motivation for the development of \MIRLS is dealing with imbalanced mixtures. Hence, in this section, the labels $c^*_i$ follow a multinomial distribution with highly imbalanced proportions. In \cref{sec:additional_simulations}, we present results for balanced and moderately imbalanced mixtures.

We measure the accuracy of an estimate $\beta \equiv \{\beta_1, \hdots, \beta_K\}$ by the following quantity:
\begin{align}\label{eq:objective_latent}
%F(\beta; \beta^*) &= \min_{\sigma\in [K]!} \frac 1K \sum_{k=1}^K \left(\frac{\|\beta_{\sigma(k)} - \beta_k^*\|}{\|\beta^{\text{(oracle)}}_k - \beta_k^*\|} - 1\right),
F_\text{latent}(\beta; \beta^*) &= \min_{\sigma\in [K]!} \frac 1K \sum_{k=1}^K \|\beta_{\sigma(k)} - \beta_k^*\|.
\end{align}
The minimization over all $K!$ permutations makes the accuracy measure invariant to the order of the regression vectors in $\beta$.
A similar objective was used by \cite{yi2016solving,zhong2016mixed}.

All algorithms are initialized with the same vector $\beta$, whose entries are sampled from the standard normal distribution $\mathcal N(0,1)$.
For each simulation, we run 50 independent realizations, each with a different random initialization $\beta$, and report the median errors and the failure probability. The latter quantity is defined as the percentage of runs whose error $F_\text{latent}$ \eqref{eq:objective_latent} is above $2\sigma$; see an explanation for this choice in \cref{sec:additional_details}.
Due to space considerations, some figures appear in \cref{sec:additional_simulations}.

In the first simulation, we examine the performance of the algorithms as a function of the sample size $n$. The results are depicted in \cref{fig:varying_n}, and the corresponding runtimes in \cref{fig:varying_n_time} (\cref{sec:additional_simulations}).
Given moderate sample sizes, all competing methods get stuck in bad local minima.
Importantly, this behavior is not due to the presence of noise, and as shown in \cref{sec:additional_simulations}, the qualitative result does not change in a noiseless setting.
\MIRLS, in contrast, recovers the components with sample size very close to the oracle's minimum.
Moreover, as shown in \cref{fig:heatmap} (\cref{sec:additional_simulations}), the nearly optimal performance of \MIRLS is invariant to the dimension $d$. 
It does depend, however, on the mixture proportions: for a moderately imbalanced mixture, the oracle performs reasonably better than \MIRLS.
Yet, even in this case \MIRLS markedly outperforms the other methods; see \cref{fig:varying_n_K5_moderateImbalance} (\cref{sec:additional_simulations}).

\begin{figure}[t]
\centering
	\subfloat{
		\includegraphics[width=0.5\linewidth]{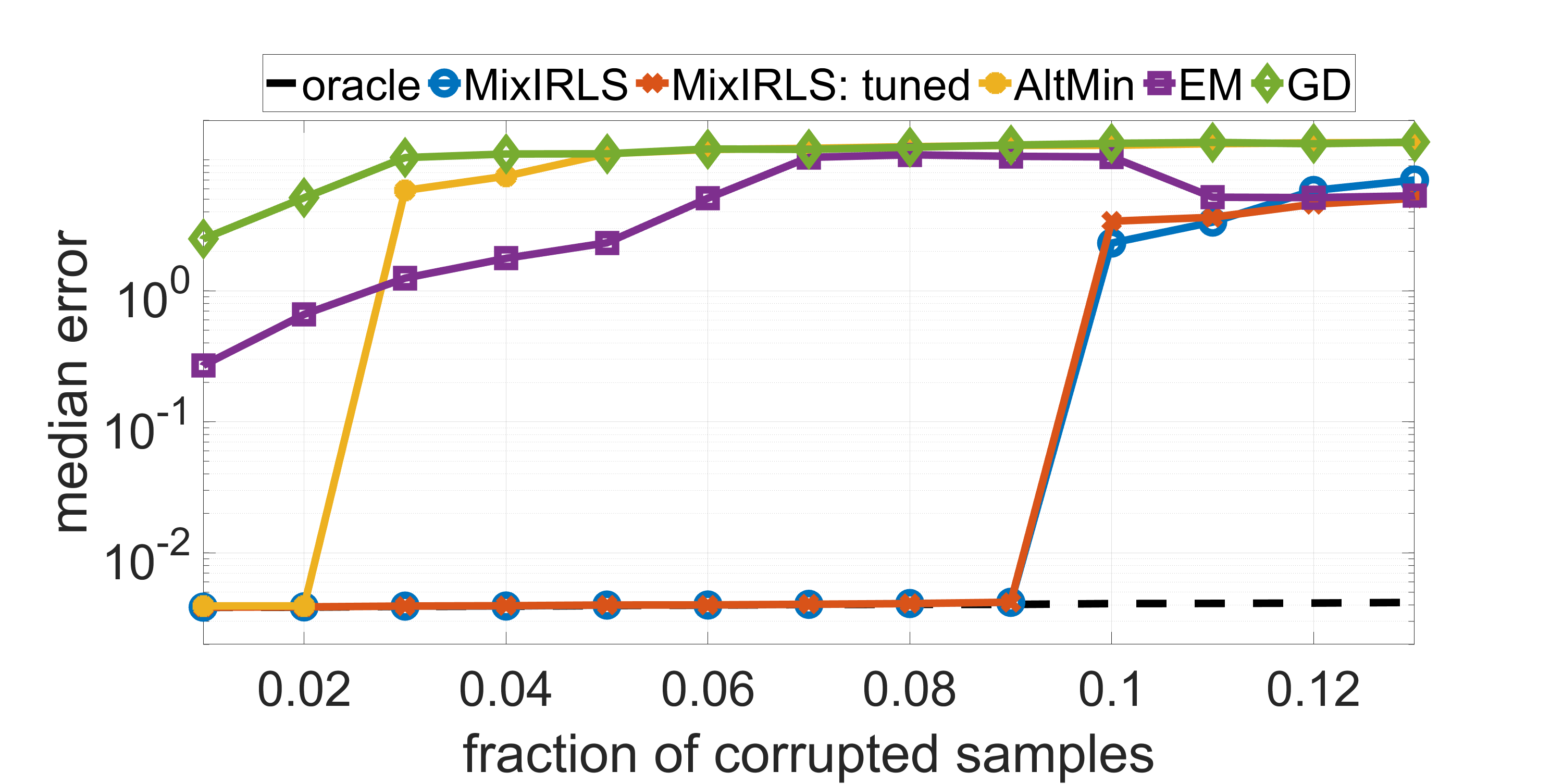}
	}
	\subfloat{
		\includegraphics[width=0.5\linewidth]{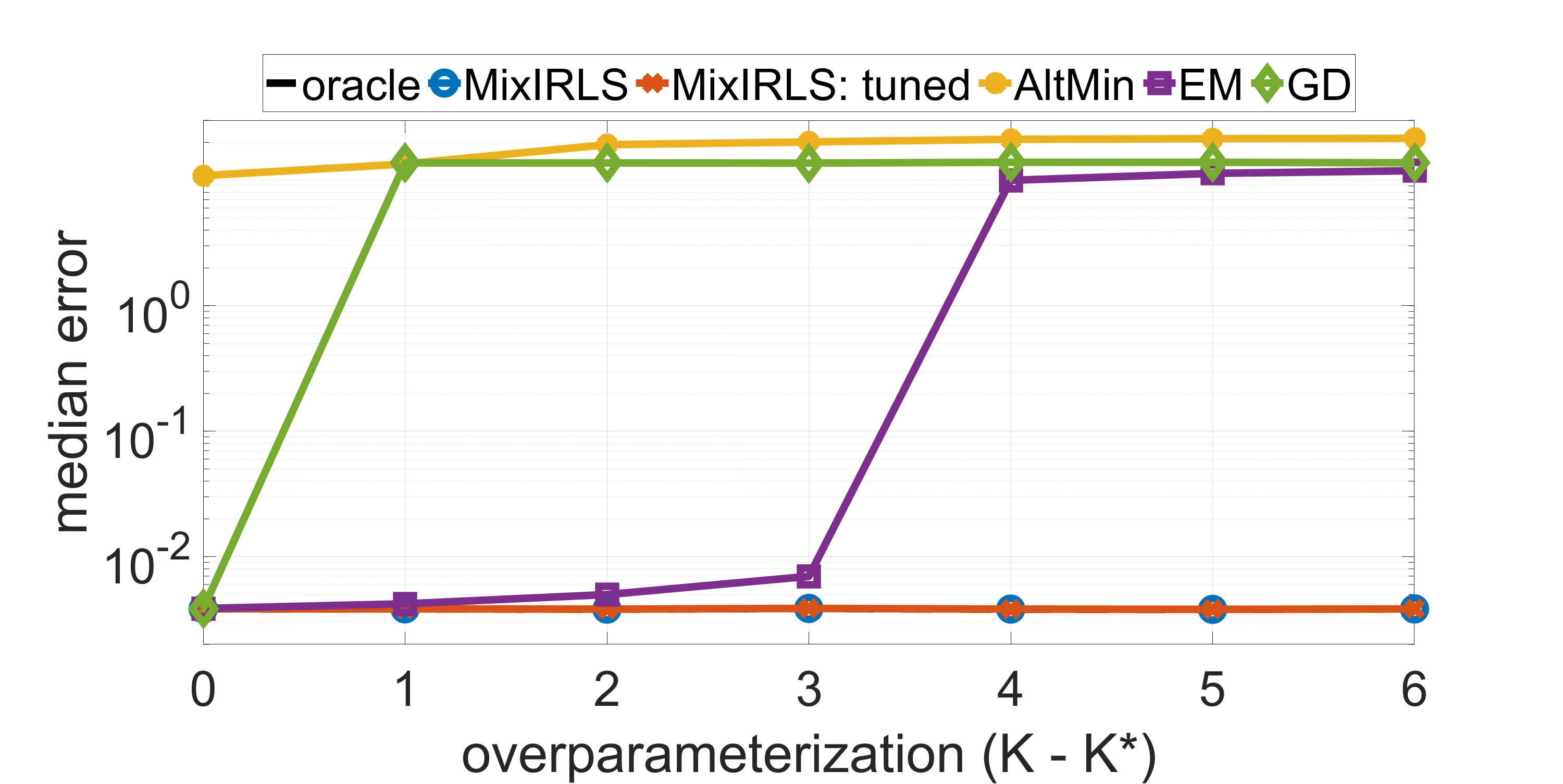}
	}
	\caption{Comparison of the robustness of several MLR algorithms to outliers and to overparameterization, for the same values of $d, \sigma, K$ and $p$ as in \cref{fig:varying_n}(left), and $n = 12000$. Qualitatively similar results for a balanced mixture appear in \cref{fig:balanced_corruptions,fig:balanced_overparam} (\cref{sec:additional_simulations}). X-axis: fraction of outliers (left panel); the difference between the number of components $K$ given to the algorithms and the true $K^*$ (right panel).}
\label{fig:robustness}
\end{figure}

Another observation from \cref{fig:varying_n} is the negligible advantage of the tuned \MIRLS variant compared to its untuned one. In fact, in all our simulations the tuning variant had only a small advantage over the untuned one,
implying that \MIRLS can be viewed as a tuning-free method.

Next, we explore the robustness of the algorithms to additive noise, outliers and overestimation.
\Cref{fig:noise} (\cref{sec:additional_simulations}) shows that all algorithms are stable to additive noise, but only \MIRLS matches the oracle performance in all runs.

To study robustness to outliers, in the following simulation we artificially corrupt a fraction $f\in (0,1)$ of the observations. A corrupted response $\tilde y_i$ is sampled from a normal distribution with zero mean and standard deviation $\sqrt{\tfrac{1}{n} \sum_{j=1}^n y_j^2}$, independently of the original value $y_i$.
\cref{fig:robustness}(left) shows the error of the algorithms as a function of the corruption fraction $f$.
To let the algorithms cope with outliers while keeping the comparison fair, we made the \textit{same} modification in all of them: at each iteration, the estimate $\beta$ is calculated based on the $\lceil (1-f)n \rceil$ samples with smallest residuals. In \MIRLS, we implemented this modification only in the second phase. 
As shown in \cref{sec:additional_simulations}, empirically, \MIRLS can deal with a corruption fraction of $f=0.09$, which is over $4$ times more corrupted samples than the other algorithms. In the balanced setting, \MIRLS can deal with roughly twice as large corrupted samples ($f=0.17$), which is almost 6 times more outliers than other methods.
This should not be surprising given that robust regression is at the heart of {\MIRLS}'s mechanism.

The final simulation considers the case of an unknown number of components. Specifically, the various algorithms are given as input a number $K$ equal to or larger than the true number $K^*$.
Here, the error is defined
similar to \eqref{eq:objective_latent}, but with $K^*$ instead of $K$.
Namely, the error is calculated based on the best $K^*$ vectors in $\beta$, ignoring its other $K-K^*$ vectors.
\Cref{fig:robustness}(right) shows that most algorithms have similar performance at the correct parameter value $K = K^*$, and \EM succeeds also at small overparameterization, $K-K^* \leq 3$.
Only \MIRLS, in both its tuned and untuned variants, is insensitive to the overparameterization, and succeeds with unbounded $K$. This feature is attained thanks to the sequential nature of \MIRLS (\cref{remark:unknown_K}). Similar results hold in the case of a balanced mixture; see \cref{sec:additional_simulations}.

%%%%%%%%%%%%%%%%%%%%%%%%%%%%%%%%%%%%%%%%%%%%%%
%%%%%%%%%%%%%%%%%%%%%%%%%%%%%%%%%%%%%%%%%%%%%%
\section{Real-World Datasets}\label{sec:experiments}
We begin with the classic problem of music perception, based on Cohen's standard dataset \cite{cohen1980influence}. In her thesis, Cohen investigated the human perception of tones by using newly available electronic equipment.
The $n=150$ data points acquired in her experiment are shown in \cref{fig:tone_perception}.
Cohen discussed two music perception theories:
One theory predicted that in this experimental setting, the perceived tone (y-axis) would be fixed at $2.0$, while the other theory predicted an identity function ($y=x$).
The results, depicted in \cref{fig:tone_perception}, support both theories.
As a mathematical formulation of this finding, Cohen proposed the MLR model \eqref{eq:MLR} with $K=2$, where the labels $c^*$ are i.i.d.~according to a Bernoulli distribution; see also \cite{de1989mixtures}.

\begin{figure}[t]
\centering
	\subfloat{
		\includegraphics[width=0.6\linewidth]{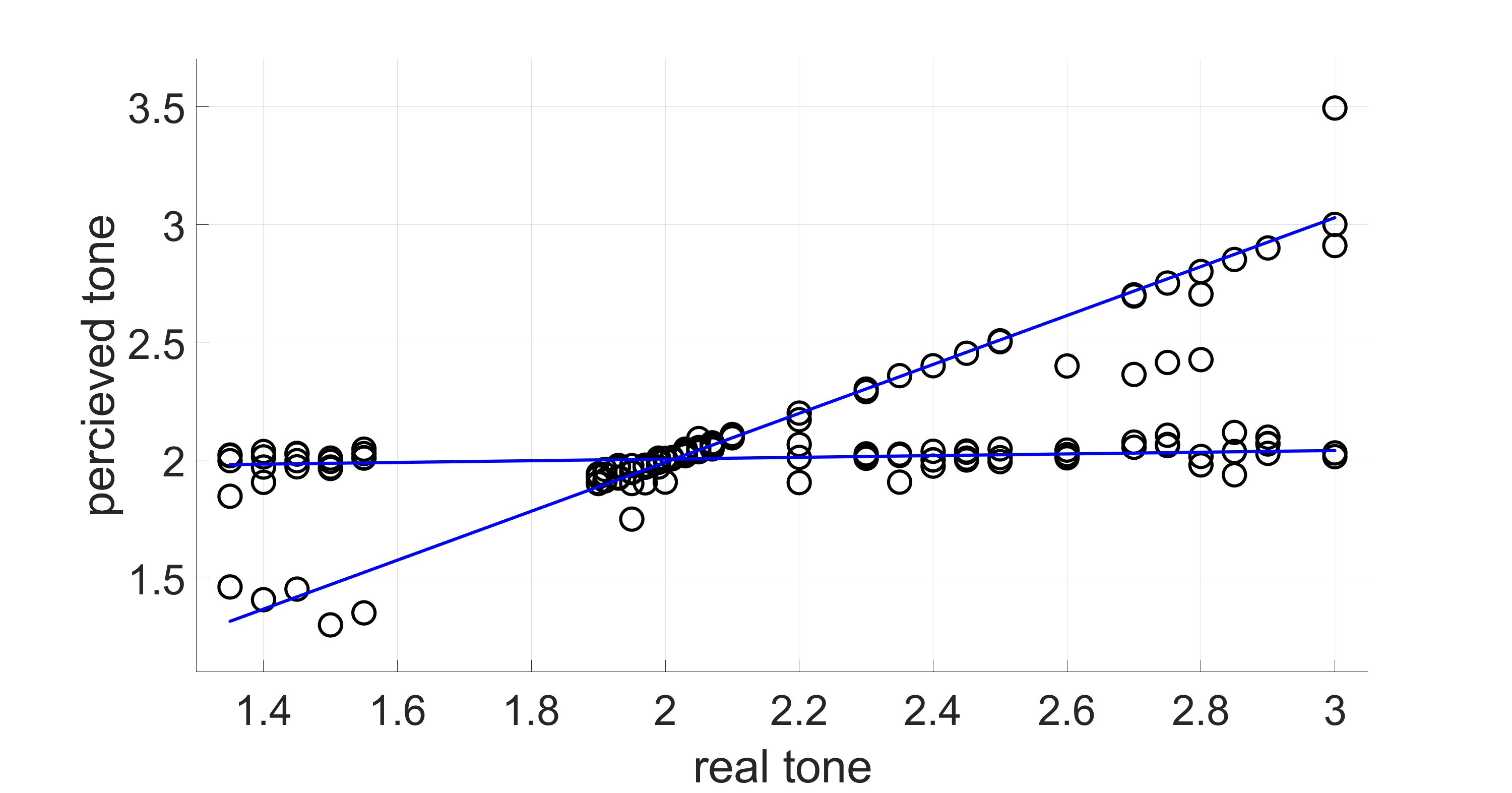}
	}
	\caption{Black circles: music perception data \cite{cohen1980influence} ($d=1$,  $n=150$). Blue lines: \MIRLS estimate.}
\label{fig:tone_perception}
\end{figure}

In \cref{fig:tone_perception}, the untuned version of \MIRLS is shown to capture the two linear trends in the data. Notably, untuned \MIRLS was not given the number of components, but automatically stopped at $K=2$ with its default parameters.
By increasing the sensitivity of \MIRLS to new components via the parameter $w_\text{th}$, it is possible to find $K=3$ or even more components; see \cref{fig:tone_perception_outliersK3} (\cref{sec:additional_experiments}).

Next, we compare the performance of \MIRLS to the algorithms listed in the previous section on four of the most popular benchmark datasets for multi-linear regression, all of which are available on Kaggle (see \cref{sec:additional_details}): medical insurance cost,
red wine quality,
World Health Organization (WHO) life expectancy,
and fish market.
The task in each dataset is to predict, respectively: medical insurance cost from demographic details; wine quality from its physicochemical properties; life expectancy from demographic and medical details; and fish weight from its dimensions.
For these datasets, MLR is at best an approximate model, and its regression vectors $\beta^*$ are unknown. Hence, we replace \eqref{eq:objective_latent} by the following quality measure which represents the fit of an MLR model to the data:
\begin{align}\label{eq:objective_observed}
F_\text{real}(\beta; X,y) &= \frac{1}{\text{Var}[y]} \cdot \frac 1n \sum_{i=1}^n \min_{j\in [K]} (x_i^\top \beta_j - y_i)^2 ,
\end{align}
resembling the K-means objective for clustering \cite{hastie2009elements}.
To illustrate the improvement of a multi-component model, we also report the error of a single component ordinary least squares (OLS) solution.

\begin{figure}[t]
\centering
	\subfloat{
		\includegraphics[width=0.5\linewidth]{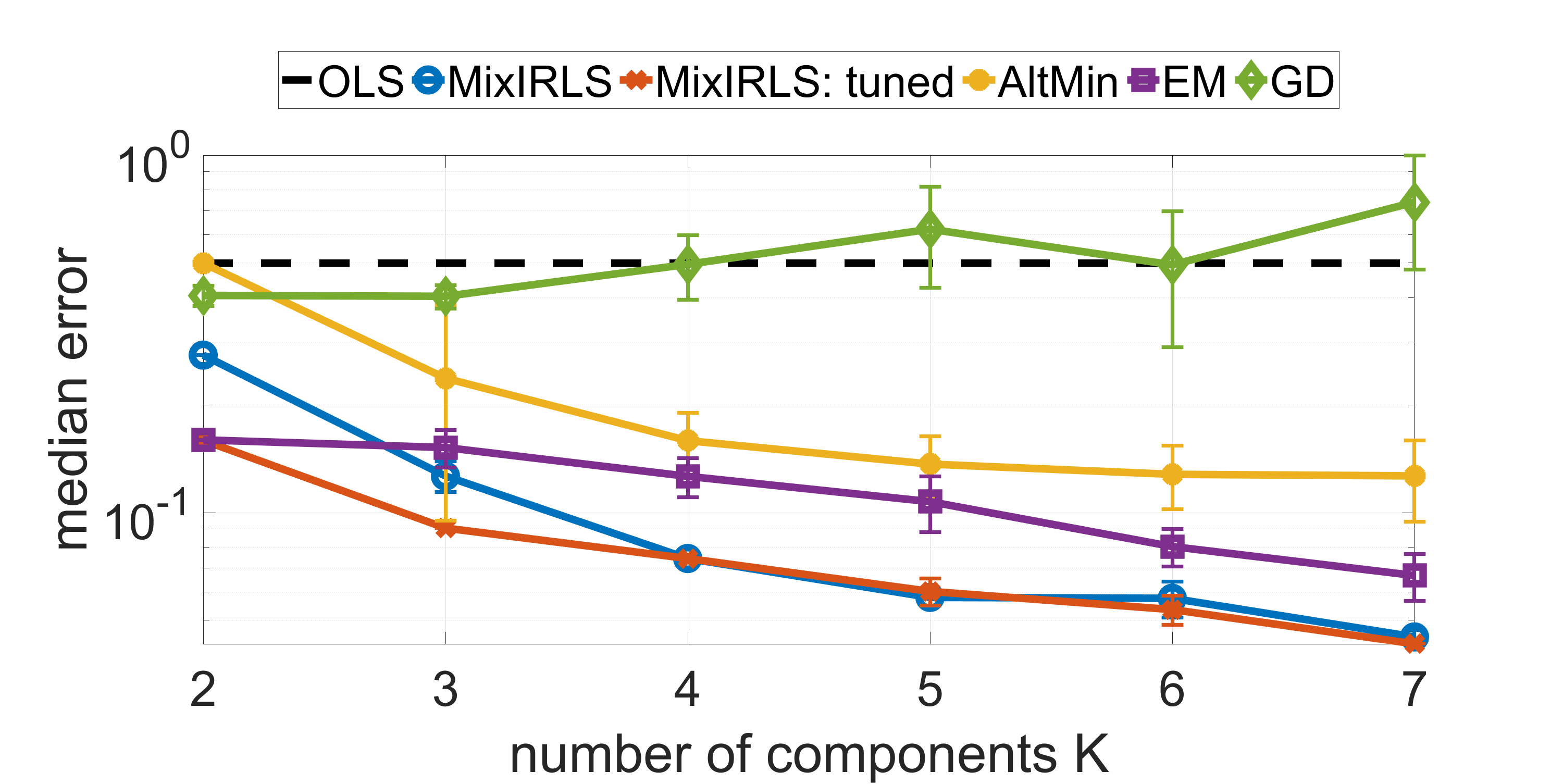}
	}
	\subfloat{
		\includegraphics[width=0.5\linewidth]{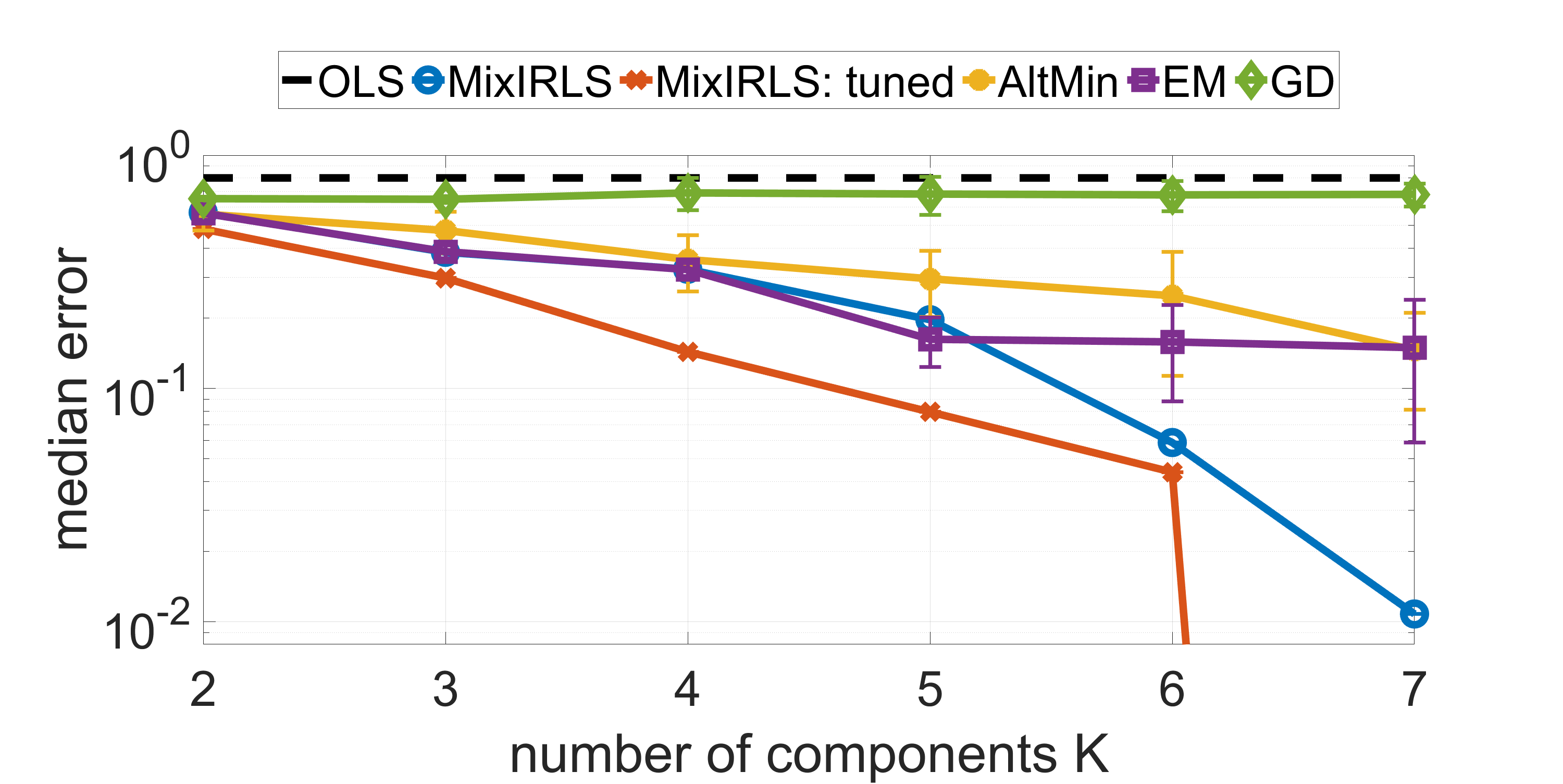}
	}
	\caption{Comparison of several MLR algorithms on the medical insurance (left) and the wine quality (right) datasets. Note that the y-axis is on a log scale. Median estimation errors are calculated across 50 random initializations according to \eqref{eq:objective_observed}, and error bars correspond to the median absolute deviation. While not shown in the figure, in the right panel the median error of \MIRLS with $K=7$ components is $6\cdot 10^{-15}$ (machine precision).}
\label{fig:medical_wine}
\end{figure}

As the number of components $K$ is unknown, we explore the algorithms' performance given different values of $K$, ranging from 2 to 7.
All algorithms start from the same random initialization.
In each experiment, we report the median error \eqref{eq:objective_observed} across 50 different random initializations; the minimal errors across the realizations are reported in \cref{sec:additional_experiments}.
Additional details appear in \cref{sec:additional_details}.

\Cref{fig:medical_wine} shows the performance of the algorithms on the medical insurance and the wine quality datasets. The results for the other two datasets are deferred to \cref{sec:additional_experiments}.
In general, both \MIRLS and \MIRLST improve upon the other methods, sometimes by 30\% or more. In addition, \MIRLST clearly outperforms \MIRLS. 
Most significantly is the case of the wine quality dataset with $K=7$ components. Unlike the other datasets, the response $y$ in this dataset is discrete, taking the values from 3 to 8. Hence, with $K\geq 6$, it is possible to perfectly fit an MLR model up to machine precision error. Notably, \MIRLST is the only algorithm that achieves this error with $K=7$ in at least half of the realizations.

%%%%%%%%%%%%%%%%%%%%%%%%%%%%%%%%%%%%%%%%%%%%%%
%%%%%%%%%%%%%%%%%%%%%%%%%%%%%%%%%%%%%%%%%%%%%%
\section{Recovery Guarantee for Mix-IRLS}\label{sec:theory}
In this section, we theoretically analyze \MIRLS in a population setting with an infinite number of samples. For simplicity, we assume the explanatory variables are normally distributed with identity covariance,
\begin{align}\label{eq:gaussian_assumption}
x_i \sim \mathcal N(0, I_d) .
\end{align}
The responses $y_i$ are assumed to follow model \eqref{eq:MLR} with $K=2$ components $(\beta_1^*, \beta_2^*)$ and labels $c^*_i$ generated independently of $x_i$ according to mixture proportions $p_1 \geq p_2 > 0$.
This setting was considered in several previous works on the \texttt{EM} method for MLR \cite{balakrishnan2017statistical,daskalakis2017ten, klusowski2019estimating,kwon2019global}.

We assume the noise terms $\epsilon_i$ are all i.i.d., zero-mean random variables, independent of $x_i$ and $c_i^*$. We further assume they are bounded and follow a symmetric distribution,
\begin{align}\label{eq:noise_assumptions}
|\epsilon_i| &\leq \sigma_\epsilon
\quad \text{and} \quad
\PP[\epsilon_i] = \PP[-\epsilon_i].
\end{align}

For analysis purposes, we consider a slightly modified variant of \MIRLS, described in \cref{sec:proof}. In this variant, \MIRLS excludes samples $x_i$ with large magnitude, $\|x_i\|^2 > R$ where $R$ is a fixed parameter. A natural choice for its value is $R \sim \E[\|x_i\|^2] = d$, e.g.~$R=2d$.
In a high-dimensional setting with $d\gg 1$, such a choice excludes an exponentially small in $d$ proportion of the samples.
For simplicity, we present our result assuming $R$ is large, corresponding to large dimension $d$; the general result appears in \cref{lem:K2_population}. %(\cref{sec:proof}).

The following theorem states that given a sufficiently imbalanced mixture, namely $p_2/p_1$ is small enough, \MIRLS successfully recovers the underlying vectors $\beta_1^*$ and $\beta_2^*$.

\begin{theorem}\label{thm:K2_population}
Let $\{(x_i,y_i)\}_{i=1}^\infty$ be i.i.d.~from a mixture of $K=2$ components with proportions $(p_1,p_2)$ and regression vectors $(\beta_1^*, \beta_2^*)$, and denote their separation $\Delta = \beta_1^* - \beta_2^*$.
Suppose the noise terms $\epsilon_i$ follow \eqref{eq:noise_assumptions} with $\sigma_\epsilon$ satisfying
\begin{align}\label{eq:bounded_balance_thm}
q \equiv \gamma + \left(\frac{1}{p_1} + \frac{1}{\sqrt R}\right) \frac{\sigma_\epsilon}{\|\Delta\|} < \frac 12,
\end{align}
where $\gamma = 5p_2 / (4p_1)$.
Assume that \MIRLS's parameters $\rho$, $w_\text{th}$ and $R$ satisfy $\rho=\infty$,
\begin{align}\label{eq:wth_assumption}
\frac{1}{1 + \eta (1-q)^2 \|\Delta\|^2} < w_\text{th} < \frac{1}{1 + \eta q^2 \|\Delta\|^2} ,
\end{align}
and that $R$ is sufficiently large.
Then starting from an arbitrary initialization, the first phase of \MIRLS with at least one iteration ($T_1\geq 1$) recovers $\beta^*$ up to an error that decreases with increasing $R$,
\begin{align}
\max_{k=1,2}\|\beta_k - \beta_k^*\| \leq \frac{1}{\sqrt R} \frac{\sigma_\epsilon}{\sigma_\epsilon + \gamma \|\Delta\|} .
\end{align}
Specifically, in the absence of noise ($\sigma_\epsilon = 0$), the true regression vectors are perfectly recovered.
\end{theorem}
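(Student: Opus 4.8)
The plan is to reduce the entire analysis to a handful of one-dimensional Gaussian integrals and then track the estimate through a single round of the main phase. Because $x_i \sim \mathcal N(0,I_d)$ is rotationally invariant and the only distinguished direction is $\Delta = \beta_1^* - \beta_2^*$, I would first decompose each $x_i = z_i\hat\Delta + x_i^\perp$ with $z_i = x_i^\top\hat\Delta \sim \mathcal N(0,1)$ and $x_i^\perp \perp \hat\Delta$. Writing the response as $y_i = x_i^\top\beta_1^* + \epsilon_i - (x_i^\top\Delta)\,\mathbb 1[c_i^*=2]$, the population version of the weighted least squares step \eqref{eq:IRLS_beta}, namely $\beta = (\E[w\,xx^\top])^{-1}\E[w\,xy]$, becomes
\begin{align*}
\beta - \beta_1^* = \big(\E[w\,xx^\top]\big)^{-1}\Big(\E[w\,x\epsilon] - p_2\,\E_2[w\,xx^\top]\,\Delta\Big),
\end{align*}
where $\E_2[\cdot]$ denotes expectation restricted to the second component. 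By symmetry all matrices here are of the form $aI + b\,\hat\Delta\hat\Delta^\top$, so the vector identity collapses to scalar integrals in $z$ and $\epsilon$, and the noise symmetry \eqref{eq:noise_assumptions} kills the pieces odd in $\epsilon$.

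Next I would establish the \textbf{one-step behaviour from an arbitrary initialization}. The key structural fact is that the weights \eqref{eq:IRLS_W} are normalized by the median residual, so they are essentially scale-free in the initialization: for a generic starting $\beta$ the residuals of both components are dominated by the common term $x^\top\beta$, the weights are nearly equal across samples, and the population step returns approximately the mixture mean $p_1\beta_1^* + p_2\beta_2^* = \beta_1^* - p_2\Delta$. Since $p_2 \le \gamma \le q$, this already lands within $q\|\Delta\|$ of the dominant vector $\beta_1^*$ (it suffices to reach a neighbourhood of \emph{either} component, since the sequential design then recovers the other in round two). I would then make this quantitative: after one step the contamination term $p_2\,\E_2[w\,xx^\top]\Delta$ is of order $\gamma\|\Delta\|$ and the noise term of order $\sigma_\epsilon$, so every first-component sample has residual at most $q\|\Delta\|$ while every typical second-component sample has residual at least $(1-q)\|\Delta\|$. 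This is exactly the gap that makes the threshold interval \eqref{eq:wth_assumption} meaningful; note that the interval is nonempty precisely when $(1-q)^2 > q^2$, i.e.\ when $q < \tfrac12$, which is the hypothesis \eqref{eq:bounded_balance_thm}.

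With the residual gap in hand the recovery is almost immediate. By \eqref{eq:wth_assumption} the threshold $w_\text{th}$ lies strictly between the weights of the two clusters, so the poor-fit set \eqref{eq:index_subsets_Sk+1} collects (essentially) the second-component samples and the good-fit set \eqref{eq:index_subsets_Sk'} the first-component ones. In the noiseless case this separation is perfect: every first-component sample has residual exactly $0$ and hence the maximal weight $1$, strictly larger than any second-component weight, so with $\rho=\infty$ the set $S_1'$ consists purely of first-component samples and the refinement \eqref{eq:phase1_beta} returns $\beta_1^*$ exactly. The passed-on set $S_2$ is then a pure (if biased) subsample of the second component, and since each such sample lies exactly on $y = x^\top\beta_2^*$, a second application of \eqref{eq:phase1_beta} recovers $\beta_2^*$ exactly regardless of the subsampling, giving perfect recovery when $\sigma_\epsilon = 0$.

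The \textbf{main obstacle} is the noisy case, where $S_1'$ is no longer exactly pure: a first-component sample is now selected through its residual $|x^\top(\beta-\beta_1^*) - \epsilon|$, which couples the selection event to $\epsilon$, and a few second-component samples with $|x^\top\Delta| \lesssim \sigma_\epsilon$ leak in. I would bound the resulting bias of the refinement by splitting it into these two contributions and controlling each with the magnitude truncation $\|x_i\|^2 \le R$: the truncation bounds the leverage $|x^\top u| \le \sqrt R\,\|u\|$ and lets me quantify how strongly the selection correlates with the noise, which is what produces the $1/\sqrt R$ factor; at $\beta = \beta_1^*$ the selection becomes independent of $x$ and the bias vanishes, consistent with the $\sigma_\epsilon$ in the numerator. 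Carrying this estimate through both rounds, and checking that $\E_{S_1'}[xx^\top]$ stays well-conditioned after truncation and selection, yields $\max_k\|\beta_k - \beta_k^*\| \le \tfrac{1}{\sqrt R}\,\tfrac{\sigma_\epsilon}{\sigma_\epsilon + \gamma\|\Delta\|}$, whose general finite-$R$ form would be stated as \cref{lem:K2_population}. I expect the delicate bookkeeping of this selection–noise coupling, uniformly over the post-IRLS estimate, to be the technically hardest part.
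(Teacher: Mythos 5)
Your analysis of the second round is essentially the paper's: the poor-fit set $S_2$ is provably pure (it contains only second-component samples), OLS on it recovers $\beta_2^*$ exactly when $\sigma_\epsilon=0$, and the selection--noise coupling produces the $1/\sqrt{R}$ bias factor. In the paper this is \cref{lem:xxt} together with the observation that any selected sample must have a large projection, $|x_u|\geq(\gamma+\xi)\|\Delta\|\sqrt{R}$, so that $\E[x_u^2\mid S_2]$ is large and the bias $|\E[\epsilon x_u\mid S_2]|/\E[x_u^2\mid S_2]$ is small; your attribution of the $1/\sqrt{R}$ factor to the leverage bound is vaguer but compatible. Your one-step argument from an arbitrary initialization is heuristic but points in the right direction; the paper makes it rigorous (\cref{lem:Dt}) not through the median normalization you invoke, but by analyzing a modified weight $w=(1+\eta r^2/R)^{-1}$ whose large-$R$ flatness plays the role of your ``nearly uniform weights,'' with the requirement that $R$ be large relative to the initialization error (see \eqref{eq:R_bound}) being exactly how arbitrary initialization is accommodated.

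The genuine gap is in your recovery of the \emph{first} component. You claim that in the noiseless case ``every first-component sample has residual exactly $0$,'' so that the good-fit set $S_1'$ taken with respect to the round-one IRLS estimate is pure and OLS on it returns $\beta_1^*$ exactly. Both halves of this fail. First, the population IRLS iterate does not converge to $\beta_1^*$ even without noise: second-component samples always retain strictly positive weight, so the iterate carries a contamination bias of order $\gamma\|\Delta\|$; the paper only ever establishes $\|\beta_1^{(T_1)}-\beta_1^*\|\leq(q-\tilde\xi)\|\Delta\|$, never exactness, and $\beta_1^*$ is not a fixed point of the population iteration. Second, and more fundamentally, a weight-threshold good-fit set is never pure: any second-component sample whose projection on $u=\beta_1^{(T_1)}-\beta_2^*$ is small has a small residual and hence a weight above $w_\text{th}$. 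With the weight scale in force here, good fit corresponds to $|x_u|\lesssim c\sqrt{R}$ for some $c\in(q,1-q)$, which as $R$ grows captures nearly \emph{all} second-component samples, not a few leaked ones. So your $S_1'$ is massively contaminated, OLS on it returns something close to a mixture mean rather than $\beta_1^*$, and no perturbative bound on the ``leakage'' of the kind you sketch can produce the claimed rate $\tfrac{1}{\sqrt{R}}\tfrac{\sigma_\epsilon}{\sigma_\epsilon+\gamma\|\Delta\|}$. The missing idea is that purity is one-directional --- by \cref{lem:P}, only \emph{poor-fit} sets are guaranteed pure --- so the paper's modified algorithm recovers $\beta_2^*$ first from $S_2$, and then defines $S_1'$ as the set of samples with poor fit to $\beta_2^{\text{phase-I}}$ (see \eqref{eq:modified_index_subsets_S1'}), recovering $\beta_1^*$ by OLS on that pure set; this asymmetric, sequential use of poor-fit sets in both rounds is precisely what your proposal lacks.
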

\Cref{thm:K2_population} considers only the first phase of \MIRLS, as it is sufficient to recover the regression vectors in the described setting. Indeed, empirically, the second phase is often unnecessary.
The choice of an oversampling ratio $\rho = \infty$ is suited to population setting where $n=\infty$; see the discussion following Eq.~\eqref{eq:phase1_beta}.
The theorem proof appears in \cref{sec:proof}.

\begin{remark}[Required imbalance]\label{remark:theory_imbalance}
Due to \eqref{eq:bounded_balance_thm}, \cref{thm:K2_population} holds only for a sufficiently imbalanced mixture.
In the absence of noise ($\sigma_\epsilon=0$), \eqref{eq:bounded_balance_thm} imposes the constraint ${\gamma < 1/2}$. The numerical factor $5/4$ in the definition of $\gamma$ is not strict, and $\gamma$ can actually get as close as desired to $p_2/p_1$, at the expense of increasing $R$ by a constant factor. Hence, the constraint essentially reads $p_2/p_1 < 1/2$. It other words, the most balanced mixture allowed by our guarantee is
$$
p = (2/3, 1/3).
$$
More generally, there is a trade-off between balance and noise: the required imbalance ratio $p_1/p_2$ increases with the noise level $\sigma_\epsilon$.

We emphasize that empirically, \MIRLS works well also on balanced mixtures $1/2 \leq p_2/p_1 \leq 1$; see \cref{sec:additional_simulations}. Hence it is an open problem to provide theoretical guarantees in this regime as well.
\end{remark}

\begin{remark}[Allowed range for $w_\text{th}$]\label{remark:theory_allowed_range_wth}
\cref{thm:K2_population} limits the range of $w_\text{th}$, see \eqref{eq:wth_assumption}.
This range depends on $q$, which in turn depends on the noise level and the mixture imbalance. For example, at a noise level $\sigma_\epsilon = 10^{-2}$, proportions $p = (4/5, 1/5)$, separation of $\|\Delta\| = 1$ and parameter choice of $\eta = 1$, the allowed range is $0.69 \leq w_\text{th} \leq 0.9$.
\end{remark}

\begin{remark}[Overparameterization / unknown $K$]\label{remark:theory_overparameterization}
In practical scenarios, the number of components $K$ is often unknown. Remarkably, \cref{thm:K2_population} can be extended to an overparameterized setting, where \MIRLS is given an overestimate for the number of components $K > 2$, together with a corresponding (arbitrary) initialization $(\beta_1, \dotsc, \beta_K)$.
This is explicitly discussed in \cref{sec:theory_unknownK} (\cref{proposition:unknownK}), and also demonstrated empirically in \cref{fig:robustness}(right).
\end{remark}

\Cref{thm:K2_population} and \cref{remark:theory_overparameterization} theoretically support several empirical findings from previous sections: unlike other methods, \MIRLS performs better on imbalanced mixtures than on balanced ones; it copes well with an overparameterized $K$; and it works well starting from a random initialization.
Our analysis (\cref{sec:proof}) sheds light on the inner mechanism of \MIRLS that enables these features.

\paragraph*{Comparison to prior work.}
Several works derived recovery guarantees for \AltMin \cite{yi2014alternating,yi2016solving,ghosh2020alternating} and \GD \cite{zhong2016mixed,li2018learning} in a noiseless setting.
More related to our \cref{thm:K2_population} are works that studied the population \EM algorithm in the presence of noise \cite{balakrishnan2017statistical,daskalakis2017ten, klusowski2019estimating,kwon2019global,kwon2020converges}.
These latter works assumed a perfectly balanced mixture, $p_1 = p_2 = 1/2$. An exception is \cite{kwon2020converges}, who allowed for $K>2$ and an imbalanced mixture. However, their allowed imbalance is limited. In addition, they required a sufficiently accurate initialization.
A key novelty in our result is not only that we allow for highly imbalanced mixtures, but that large imbalance actually makes recovery \textit{easier} for \MIRLS: since $q$ is monotonically decreasing with the mixture imbalance, the allowed range \eqref{eq:wth_assumption} of the parameter $w_\text{th}$ increases with the imbalance. Furthermore, our result holds for an arbitrary initialization.
The downside is that \cref{thm:K2_population} requires sufficient imbalance (see \cref{remark:theory_imbalance}), and does not provide a recovery guarantee for our method on a balanced mixture, even though empirically, our method works well also on balanced mixtures.

Our result is novel in another aspect as well. In contrast to most existing guarantees, \cref{thm:K2_population} holds also for an arbitrary input number of components $K \geq 2$; see \cref{remark:theory_overparameterization}.

\MIRLS recovers the first component by treating the samples from the second component as outliers. Our guarantee allows the second component to consist up to $1/3$ of the data; see \cref{remark:theory_imbalance}.
For comparison, in the context of robust regression, \cite{mukhoty2019globally} recently analyzed an IRLS method, and allowed less than $1/5$ corrupted samples. Our higher tolerance is possible thanks to the strong linearity assumption of MLR \eqref{eq:MLR}.

%%%%%%%%%%%%%%%%%%%%%%%%%%%%%%%%%%%%%%%%%%%%%%
%%%%%%%%%%%%%%%%%%%%%%%%%%%%%%%%%%%%%%%%%%%%%%
\section{Summary and Discussion}
In this work, we presented a novel method to solve MLR, \MIRLS, that handles both imbalanced and balanced mixtures.
\MIRLS is also robust to outliers and to an overestimated number of components.
In particular, under suitable conditions, it can be run with an overestimated $K > K^*$ and will automatically find the true number of components $K^*$.

The basic idea of {\MIRLS} - sequential recovery using tools from robust regression - was also proposed by Banks et al.~\cite{banks2009cherry}. Several important differences between their method and \MIRLS were listed in the introduction; first and foremost is the scalability of the algorithm.
It is interesting to note that \cite{aitkin1980mixture} also made a connection between MLR and robust regression, but the other way around: as a simplified theoretical model, they assumed the outliers follow a linear model, and applied the \EM algorithm to the obtained MLR problem to detect them.
%\cite{hand2018convex} also made use of the IRLS algorithm in their MLR method. However, they used it to minimize a noise-free convexified $\ell_1$ objective, in contrast to \MIRLS which sequentially applies it directly to the MLR objective as a robust regression tool.

Although stronger than several existing results in certain aspects, our current theoretical analysis suffers from two limiting assumptions: only $K=2$ components, and a population setting with infinitely many samples. While the first assumption is common in the literature (e.g., \cite{balakrishnan2017statistical,kwon2019global}, and many others), population analysis is usually accompanied by a finite-sample one.
We plan to extend our analysis to finite-sample setting in the future.
Another interesting future research direction is to formally prove \MIRLS's robustness to outliers, as was empirically demonstrated in \cref{sec:simulations}.

Additional related work, as well as future methodological research problems such as dealing with non-Gaussian noise, model-based clustering (where $c_i^*$ is a function of $x_i$) and non-linear regression, are discussed in \cref{sec:additional_references}.

%%%%%%%%%%%%%%%%%%%%%%%%%%%%%%%%%%%%%%%%%%%%%%
%%%%%%%%%%%%%%%%%%%%%%%%%%%%%%%%%%%%%%%%%%%%%%
\section*{Acknowledgements}
The research of P.Z. was partially supported by a fellowship for data science from the Israeli Council for Higher Education (CHE). B.N.~is the incumbent of the William Petschek Professorial Chair of Mathematics. We thank Yuval Kluger and Ofir Lindenbaum for interesting discussions. We thank the authors of \cite{zhong2016mixed} for sharing their code with us.

\bibliographystyle{alpha}
\bibliography{MLR}

%%%%%%%%%%%%%%%%%%%%%%%%%%%%%%%%%%%%%%%%%%%%%%%%%%%%%%%%%%%%%%%%%%%%%%%%%%%%%%%
%%%%%%%%%%%%%%%%%%%%%%%%%%%%%%%%%%%%%%%%%%%%%%%%%%%%%%%%%%%%%%%%%%%%%%%%%%%%%%%
% APPENDIX
%%%%%%%%%%%%%%%%%%%%%%%%%%%%%%%%%%%%%%%%%%%%%%%%%%%%%%%%%%%%%%%%%%%%%%%%%%%%%%%
%%%%%%%%%%%%%%%%%%%%%%%%%%%%%%%%%%%%%%%%%%%%%%%%%%%%%%%%%%%%%%%%%%%%%%%%%%%%%%%
\newpage
\appendix

%%%%%%%%%%%%%%%%%%%%%%%%%%%%%
\section{Additional Related Work and Future Research Directions}\label{sec:additional_references}
The most popular approach to solve MLR is, arguably, expectation-maximization and its variants.
In recent years, this approach was extensively studied both theoretically and empirically.
Most of the works on \EM \cite{faria2010fitting,huang2012mixture,balakrishnan2017statistical,daskalakis2017ten,
klusowski2019estimating,kwon2019global,kwon2020converges,zhang2020estimation,kwon2021minimax} made two simplifying assumptions:
(i) Gaussian noise, $\epsilon \sim \mathcal N(0, \sigma^2 I)$; and (ii) model-free clustering. The second assumption means that the cluster assignment $c_i^*$ is random and independent of the sample position in space $x_i$.
In this work, we made similar assumptions in our simulations (\cref{sec:simulations}) and in our theoretical analysis (\cref{sec:theory}).
Several other works on expectation-maximization extended this setting in both directions: (i) non-Gaussian noise
\cite{khalili2007variable,hunter2012semiparametric,song2014robust,yao2014robust,
hu2017robust,barazandeh2021efficient},
and (ii) model-based clustering, where $c_i^*$ potentially depends on $x_i$
\cite{young2010mixtures,hunter2012semiparametric,ingrassia2014model}.
While nothing in our approach explicitly assumes Gaussian noise or model-free clustering, we did not explicitly address the non-Gaussian and model-based clustering settings in our work, and we leave it for a future research.

Besides expectation-maximization, other approaches proposed in the literature are alternating minimization
\cite{yi2014alternating,yi2016solving,ghosh2020alternating,pal2022learning},
convex relaxation
\cite{chen2014convex,hand2018convex,jiang2021nonparametric},
and gradient descent applied to a suitable objective \cite{zhong2016mixed,li2018learning,diamandis2021wasserstein}.
These methods, as well as expectation-maximization, recover the linear models simultaneously, and the corresponding works did not pay specific attention to the imbalanced MLR setting.
In particular, all the available theoretical guarantees in the literature either assume a perfectly balanced mixture \cite{yi2014alternating,chen2014convex,balakrishnan2017statistical,daskalakis2017ten,
klusowski2019estimating,kwon2019global,ghosh2020alternating,
diamandis2021wasserstein,kwon2021minimax}, or at least a sufficiently balanced one \cite{yi2016solving,zhong2016mixed,li2018learning,chen2020learning,kwon2020converges}.
In contrast, our guarantee holds for a sufficiently \textit{imbalanced} mixture.

Under certain assumptions (e.g.~model-free clustering), the clustering task in MLR can be viewed as a special case of subspace clustering; see \cite{zhong2016mixed,pimentel2017mixture}.
More generally, MLR is a special case of finite mixture models; see \cite{mclachlan2019finite} for a comprehensive review on this broader field. The framework presented in this paper can, in principle, be applied to non-linear mixture models: given a robust non-linear regressor, we can use it to separate the components of the mixture. This is another appealing direction for future research.

%%%%%%%%%%%%%%%%%%%%%%%%%%%%%
\section{The Second Phase of \MIRLS}\label{sec:method_phaseII}
The first phase of \MIRLS calculates estimates $\beta_1^\text{phase-I}, \ldots, \beta_K^\text{phase-I}$ for the regression vectors.
In the second phase, we initialize $\beta = \beta^\text{phase-I}$, and then run the following scheme for $T_2$ iterations. A pseudocode appears in \cref{alg:MIRLS_phaseII}. First, we calculate the following residuals and modified weights,
\begin{subequations}\label{eq:phase2_IRLS}\begin{align}
r_{i,k} &= \left|x_i^\top \beta_k - y_i\right|, \quad \forall i\in [n],\, \forall k\in [K], \label{eq:phase2_residuals} \\
\tilde w_{i,k} &= \frac{1/(r_{i,k}^2 + \epsilon_\text{mp})}{\sum_{k'=1}^K 1/(r_{i,k'}^2 + \epsilon_\text{mp})}, \quad \forall i\in [n],\, \forall k\in [K], \label{eq:phase2_weights}
\end{align}\end{subequations}
where $\epsilon_\text{mp}$ is the machine precision.
Next, we binarize some of the weights in a two-step scheme.
Let
\begin{align}\label{eq:H}
H &= \left\{i\in [n]: \exists k\in [K] \,\text{ s.t. } \tilde w_{i,k} \geq \frac{2}{3}\right\}
\end{align}
be the subset of samples with a single dominant weight.
The numerical constant $2/3$ is arbitrary, and the performance of \MIRLS is insensitive to its exact value.
(i) For each sample in $H$, we set its highest weight to $1$ and zero out the others; (ii) for the samples outside $H$, we zero out the weights smaller than $1/K$, and renormalize $\tilde w_{i,k} = \tilde w_{i,k}/\sum_{k'=1}^K \tilde w_{i,k'}$. Finally, we calculate a weighted least squares,
\begin{align}\label{eq:phase2_beta}
\beta_k &= (X^\top \tilde W_k X)^{-1} X^\top \tilde W_k y, \quad \forall k\in [K],
\end{align}
where $\tilde W_k = \text{diag}(\tilde w_{1,k}, \tilde w_{2,k}, \dotsc)$.
We iterate \cref{eq:phase2_IRLS,eq:H,eq:phase2_beta} $T_2$ times.
This concludes the second phase of \MIRLS.
Note that for $K=2$, the second phase coincides with the alternating minimization algorithm \cite{yi2014alternating}. %due to the second binarization (ii).
The final output of \MIRLS is $\beta = (\beta_1, \dotsc, \beta_K)$.

\begin{algorithm}[tb]
\caption{\MIRLS: refinement phase (often unnecessary)} \label{alg:MIRLS_phaseII}
\SetKwInOut{Input}{input}
\SetKwInOut{Output}{output}
\Input{samples $\{(x_i, y_i)\}_{i=1}^n$, number of components $K$, number of iterations $T_2$, phase I estimates $\{\beta_k^\text{phase-I}\}_{k=1}^K$}
\Output{estimates $\beta_1, \ldots, \beta_K$ such that $y_i \approx x_i^\top \beta_{k(i)}$ for some function $k: [n]\to [K]$}
initialize $\beta_k = \beta^\text{phase-I}_k, \quad \forall k\in [K]$ \\
\For{$t=1$ {\bfseries to} $T_2$}
{
	compute $r_{i,k} = |x_i^\top \beta_k - y_i|, \quad \forall i\in [n],\, \forall k\in [K]$ \\
	compute $\tilde w_{i,k} = (r_{i,k}^2 + \epsilon_\text{mp})^{-1} / \sum_{k'=1}^K (r_{i,k'}^2 + \epsilon_\text{mp})^{-1}, \quad \forall i\in [n],\, \forall k\in [K]$ \\
	set $H = \{i\in [n]: \exists k\in [K] \,\text{ s.t. } \tilde w_{i,k} \geq 2/3\}$ \\
	set $\tilde w_{i,k} = 1$ if $\tilde w_{i,k} = \max_{k'} \tilde w_{i,k'}$ and $0$ otherwise, $\quad \forall i\in H, \, k\in [K]$ \\
	set $\tilde w_{i,k} = 0$, $\quad \forall i,k \text{ s.t. } \tilde w_{i,k} < 1/K$ \\
	compute $\tilde w_{i,k} = \tilde w_{i,k}/\sum_{k'=1}^K \tilde w_{i,k'}, \quad \forall i\in [n],\, \forall k\in [K]$ \\
	compute $\beta_k = (X^\top \tilde W_k X)^{-1} X^\top \tilde W_k y, \quad \forall k\in [K]$
}
\end{algorithm}

Both phases of \MIRLS employ an IRLS approach. However, as discussed earlier, they are fundamentally different: the first phase estimates the components sequentially, while the second one does it simultaneously.

\begin{remark}[Computational complexity]\label{remark:complexity}
Each round of the first phase of \MIRLS is dominated by the weighted least squares problem \eqref{eq:IRLS_beta}, whose complexity is $\mathcal O(n^2 d)$. The complexity of the first phase is thus $\mathcal O(n^2 d K T_1)$. Similarly, as each round in the second phase is dominated by the weighted least squares computation \eqref{eq:phase2_beta}, its complexity is $\mathcal O(n^2 d T_2)$. The overall number of operations in \MIRLS is thus $\mathcal O\left( n^2 d (K T_1 + T_2) \right)$.
\end{remark}

%%%%%%%%%%%%%%%%%%%%%%%%%%%%%
\section{Theoretical Guarantee with an Unknown $K$}\label{sec:theory_unknownK}
In \cref{remark:unknown_K} of the main text, we claimed that \cref{thm:K2_population} can be extended to the unknown $K$ setting. \Cref{proposition:unknownK} formulates this claim.

\begin{proposition}\label{proposition:unknownK}
Assume the conditions of \cref{thm:K2_population}, but with $K=2$ not given as input to \MIRLS. Then \MIRLS would correctly stop the IRLS scheme \eqref{eq:IRLS} after two rounds according to the stopping criterion described in \cref{remark:unknown_K}.
\end{proposition}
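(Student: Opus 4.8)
The plan is to reduce \cref{proposition:unknownK} to the analysis of \cref{thm:K2_population}. The first two rounds of the scheme \eqref{eq:IRLS} are executed identically whether or not $K$ is supplied, so rounds one and two recover $\beta_1^*$ and $\beta_2^*$ exactly as in \cref{thm:K2_population}; what remains is to show that the stopping rule of \cref{remark:unknown_K} halts phase I after precisely the second round. In the population setting this rule stops once the poor-fit set $S_{k+1}$ can no longer support the estimation of a further component, i.e.\ once $S_{k+1}$ has negligible measure. I therefore need two facts: $S_2$ has positive measure, so a second round is performed, while $S_3$ has negligible measure, so no third round begins.

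The first fact comes from the partition analysis already inside the proof of \cref{thm:K2_population}. Once the first round yields $\beta_1\approx\beta_1^*$, a second-component sample has residual $r_{i,1}=|x_i^\top\Delta-\epsilon_i|$; since $x_i^\top\Delta\sim\mathcal N(0,\|\Delta\|^2)$, a constant fraction of these exceed the residual level at which \eqref{eq:wth_assumption} assigns weight below $w_\text{th}$, and are therefore routed into $S_2$. Hence $S_2$ carries positive measure of order $p_2>0$, and the algorithm does not stop after round one.

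The second fact is the crux. Because $S_2$ is supported entirely on the second component, round two is robust regression against a single line, and \cref{thm:K2_population} returns $\beta_2$ with estimation error $\delta:=\|\beta_2-\beta_2^*\|=O(1/\sqrt R)$; write $u=(\beta_2-\beta_2^*)/\delta$ for its unit direction. The round-two residual is $r_{i,2}=|\,\delta\,u^\top x_i-\epsilon_i\,|$, so if $\tau$ denotes the residual level at which the weight \eqref{eq:IRLS_W}, in the normalization of the analysis variant of \cref{sec:proof}, equals $w_\text{th}$, then $i\in S_3$ forces $|u^\top x_i|\ge(\tau-\sigma_\epsilon)/\delta$. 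The upper bound in \eqref{eq:wth_assumption} yields $\tau\ge q\|\Delta\|$, whence $\tau-\sigma_\epsilon\ge\gamma\|\Delta\|>0$; together with $\delta=O(1/\sqrt R)$ this makes the threshold on $|u^\top x_i|$ of order $\sqrt R$. Since $u^\top x_i\sim\mathcal N(0,1)$, the measure of $S_3$ is bounded by the Gaussian tail $\PP[\,|u^\top x_i|\gtrsim\sqrt R\,]$, which is exponentially small in $R$. Hence the stopping criterion fires after the second round.

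The main obstacle is precisely this measure bound, because $S_3$ need \emph{not} be empty at every admissible noise level, so I cannot argue that all round-two weights exceed $w_\text{th}$ outright. The argument instead relies on two quantitative features of \cref{thm:K2_population} acting together: the buffer $\gamma\|\Delta\|>0$ by which the inlier threshold $\tau$ clears the noise floor $\sigma_\epsilon$---a consequence of the genuine two-component structure $p_2>0$---and the $O(1/\sqrt R)$ smallness of the estimation error $\delta$. Together they confine any violating samples to an exponentially thin Gaussian shell aligned with the single direction $u$, which the large-$R$ assumption of \cref{thm:K2_population} renders negligible. I would organize the write-up by first treating the noiseless case, where $\delta=0$, all round-two residuals vanish, and $S_3=\emptyset$ exactly, and only then adding the Gaussian-tail estimate for the noisy case.
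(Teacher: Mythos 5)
Your reduction is set up correctly, and your two sub-goals (round two happens; round three does not) are the right ones --- the first is indeed already contained in the proof of \cref{thm:K2_population}, which shows $S_2$ is an infinite set. The gap is in your second step. In the population setting the stopping rule of \cref{remark:unknown_K} with $\rho=\infty$ reads $|S_3|<\infty$: the algorithm stops only if $S_3$ contains \emph{finitely many} samples. A set of exponentially small but positive probability measure still captures infinitely many of the $n=\infty$ samples almost surely, so your Gaussian-tail bound on $\PP[\,|u^\top x_i|\gtrsim\sqrt R\,]$ does not make the criterion fire: as far as your bound shows, $S_3$ has positive measure whenever the noise level is positive and $\delta>0$, hence $|S_3|=\infty$ and the algorithm would run a third round. ``Negligible measure'' is not the right target; you need $S_3$ to be exactly empty.

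What closes the gap --- and what the paper's proof uses --- is the norm restriction built into the analysis variant of the algorithm: by \eqref{eq:modified_index_subsets_S2}, every $i\in S_2$ (hence every candidate for membership in $S_3$) already satisfies $\|x_i\|^2\le R$. The Gaussian tail you are trying to control is therefore cut off deterministically before round three begins. By Cauchy--Schwarz and \eqref{eq:beta2_phaseI_approximateRecovery}, the round-two residual of any $i\in S_2$ obeys $r_{i,2}\le \|x_i\|\,\|\beta_2^{\text{phase-I}}-\beta_2^*\|+\sigma_\epsilon \le 1+\xi\|\Delta\|$, a constant independent of $R$, whereas exclusion ($w_{i,2}\le w_\text{th}$) requires, by the upper bound in \eqref{eq:wth_assumption}, a residual exceeding $q\|\Delta\|\sqrt R$. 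For $R$ large enough no sample in $S_2$ can satisfy this, so $S_3=\emptyset$ exactly, with no probabilistic estimate needed. Your noiseless warm-up survives as a special case, but the noisy case should be handled by this deterministic bound rather than by tail probabilities.
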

Intuitively, this happens as the second round approximately recovers the second regression vector, so that removing the samples with good and moderate fit actually removes all the samples and leaves no active samples for a third round.
The formal proof appears in \cref{sec:proof_lemmaK2pop_propositionUnknownK}.

%%%%%%%%%%%%%%%%%%%%%%%%%%%%%
\section{Proof of \cref{thm:K2_population}}\label{sec:proof}
Let us first describe the modified algorithm, for which our theoretical analysis holds.
For simplicity, we suit it to the assumptions of \cref{thm:K2_population}, namely $K=2$ components and $T_1 = 1$ iterations. For clarity, a pseudocode is presented in \cref{alg:MIRLS_modified}.

\begin{algorithm}[tb]
\caption{\MIRLS: modified main phase for analysis purposes} \label{alg:MIRLS_modified}
\SetKwInOut{Input}{input}
\SetKwInOut{Output}{output}
\Input{samples $\{(x_i, y_i)\}_{i=1}^n$, parameters $\eta, w_\text{th}, R$}
\Output{estimates $\beta_1^\text{(phase-I)}, \beta_2^\text{(phase-I)}$}
set $S_1 = [n]$ \\
initialize $\beta_1$ randomly \\
compute $r_{i,1} = |x_i^\top \beta_1 - y_i|, \quad \forall i\in S_1$ \label{alg:MIRLS_modified_w1IRLS} \\
compute $w_{i,1} = (1 + \eta r_{i,1}^2 / R)^{-1}, \quad \forall i\in S_1$ \label{alg:MIRLS_modified_r1IRLS} \\
compute $\beta_1 = (X_{S_1}^\top W_1 X_{S_1})^{-1} X_{S_1}^\top W_1\, y_{S_1}$ \quad // $W_1 = \text{diag}(w_{1,1}, w_{2,1}, \ldots)$ \label{alg:MIRLS_modified_beta1} \\
compute $r_{i,1} = |x_i^\top \beta_1 - y_i|, \quad \forall i\in S_1$ \\
compute $w_{i,1} = (1 + \eta r_{i,1}^2 / R)^{-1}, \quad \forall i\in S_1$ \label{alg:MIRLS_modified_weight1} \\
set $S_2 = S_2' = \{i\in S_1 : \|x_i\|^2 \leq R \text{ and } w_{i,1} \leq w_\text{th} \}$ \label{alg:MIRLS_modified_S2} \\
\If{$|S_{2}| < \infty$ \label{alg:MIRLS_modified_resettingCriterion}} 
{
	start \MIRLS over with $w_\text{th} \leftarrow w_\text{th} + 0.1$
}
compute $\beta_2^\text{(phase-I)} = (X_{S_2'}^\top X_{S_2'})^{-1} X_{S_2'}^\top y_{S_2'}$ \label{alg:MIRLS_modified_beta2phaseI} \\
compute $r_{i,2} = |x_i^\top \beta_2^\text{(phase-I)} - y_i|, \quad \forall i\in S_1$ \\
compute $w_{i,2} = (1 + \eta r_{i,2}^2 / R)^{-1}, \quad \forall i\in S_1$ \\
set $S'_1 = \{i\in S_1 : \|x_i\|^2 \leq R \text{ and } w_{i,2} \leq w_\text{th} \}$ \label{alg:MIRLS_modified_S1'} \\
compute $\beta_1^\text{(phase-I)} = (X_{S_1'}^\top X_{S_1'})^{-1} X_{S_1'}^\top y_{S_1'}$ \label{alg:MIRLS_modified_beta1phaseI}
\end{algorithm}

First, we replace the original definition of the weights in \eqref{eq:IRLS_W}. Instead of scaling the residuals by the square median residual $\bar r_k^2$, we assume the following formula:
\begin{align}\label{eq:modified_w}
w_{i,k} &= \frac{1}{1 + \eta r_{i,k}^2 / R},
\end{align}
where $R\geq 1$ is a constant.
In addition, we change the definition of the subsets in \eqref{eq:index_subsets} as follows:
\begin{subequations}\label{eq:modified_index_subsets}
\begin{align}
S_2 = S_2' &= \{i\in S_1 : \|x_i\|^2 \leq R \text{ and } w_{i,1} \leq w_\text{th}\}, \label{eq:modified_index_subsets_S2} \\
S_1' &=  \{i\in S_1: \|x_i\|^2 \leq R \text{ and } w_{i,2} \leq w_\text{th}\}. \label{eq:modified_index_subsets_S1'}
\end{align}
\end{subequations}
The equality $S_2 = S_2'$ corresponds to taking $\rho = \infty$ in \eqref{eq:index_subsets_Sk'}. As discussed after \eqref{eq:index_subsets_Sk'}, the oversampling ratio $\rho$ is related to the sample size $n$; in our population setting with $n=\infty$, we thus take $\rho=\infty$.
For the same reason, the stopping criterion (\cref{alg:MIRLS_modified_resettingCriterion} in \cref{alg:MIRLS_modified}) reads $|S_2| < \infty$.

Definition \eqref{eq:modified_index_subsets} contains two modifications with respect to the original \eqref{eq:index_subsets}.
First, we consider only samples with bounded norm $\|x_i\|^2 \leq R$. Otherwise, with small probability, a sample $x_i$ may have large magnitude $\|x_i\|$ and consequently have large residual $r_{i,1}$, even if the estimate $\beta_1$ is close to the true $\beta_1^*$.
Second, to uniformize the definitions of $S_1'$ and $S_2'$, we added the condition $w_{i,2} \leq w_\text{th}$ to the definition of $S_1'$ \eqref{eq:modified_index_subsets_S1'}, where $w_{i,2} = 1/(1 + \eta (x_i^\top \beta_2^\text{phase-I} - y_i)^2/R)$.
Since $w_{i,2}$ is calculated based on $\beta_2^\text{phase-I}$, it needs to be calculated before $\beta_1^\text{phase-I}$. 

\cref{thm:K2_population} is formulated in the large-$R$ regime.
The following lemma is similar to \cref{thm:K2_population}, but with the exact dependence on $R$. With this lemma in hand, \cref{thm:K2_population} immediately follows.
In this section, we use the following notation for convenience:
\begin{align}\label{eq:xi}
\xi = \frac{\sigma_\epsilon}{\|\Delta\|}
\quad \text{and} \quad
\tilde\xi = \frac{\xi}{\sqrt R} .
\end{align}

\begin{lemma}\label{lem:K2_population}
Let $\{(x_i,y_i)\}_{i=1}^\infty$, $K$, $p_1, p_2$, $\beta_1^*, \beta_2^*, \gamma$ and $q$ be defined as in \cref{thm:K2_population}.
Let $(\beta_1, \beta_2)$ be an arbitrary initialization to \MIRLS, and denote $D = \|\beta_1 - \beta_1^*\|/\|\Delta\|$.
Assume the parameters of \MIRLS satisfy \eqref{eq:wth_assumption}, $\rho = \infty$, and
\begin{align}\label{eq:R_bound}
R > \max\left\{\frac{1}{(q-\tilde\xi)^2 \|\Delta\|^2}, 5(3\max\{D, 1/2\} + \xi)^2 \|\Delta\|^2  \eta \right\} .
\end{align}
Then the first phase of \MIRLS with at least one iteration ($T_1\geq 1$) approximately recovers $\beta^*$,
\begin{align}\label{eq:approximate_recovery}
\max_{k=1,2}\|\beta_k - \beta_k^*\| < \frac{1}{\sqrt R} \frac{\xi}{\xi + \gamma} .
\end{align}
Specifically, in the absence of noise ($\xi = 0$), \MIRLS perfectly recovers the two components, $\beta_k = \beta_k^*$ for $k=1,2$.
\end{lemma}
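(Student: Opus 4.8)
The plan is to carry out the whole analysis in the population limit, so that every sample average in \cref{alg:MIRLS_modified} becomes an expectation over the mixture. The argument then splits along the three operations the modified algorithm performs: one weighted least-squares update that produces the intermediate estimate $\beta_1$; the thresholding that defines $S_2$ and later $S_1'$; and the two ordinary least-squares (OLS) fits on these sets that output $\beta_2^{\text{phase-I}}$ and $\beta_1^{\text{phase-I}}$. \emph{Step (a): the intermediate estimate.} Writing the weight as $w = (1 + \eta r^2/R)^{-1}$ with $r = |x^\top\beta_1 - y|$ evaluated at the arbitrary initialization, the update is $\beta_1 = M^{-1}b$ with $M = \E[w\,xx^\top]$ and $b = \E[w\,xy]$. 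Splitting by component and substituting $y = x^\top\beta^*_{c} + \epsilon$ gives $\beta_1 - \beta_1^* = M^{-1}(-p_2 M_2\Delta + p_1\E[wx\epsilon\mid c{=}1] + p_2\E[wx\epsilon\mid c{=}2])$, where $M_c = \E[w\,xx^\top\mid c]$. Since $w\le 1$ and $\E[xx^\top]=I$ we have $M_2\preceq I$ and $M\succeq p_1 M_1$, and for $R$ large enough (this is where the initialization-dependent bound on $R$ in \eqref{eq:R_bound} enters, through $D$) the weights concentrate near $1$ on typical samples, so $M_1\succeq c\,I$ for a constant $c$ close to $1$. The leading term $-M^{-1}p_2 M_2\Delta$ is thus the familiar OLS-on-everything bias, of size at most $\gamma\|\Delta\|$ once the numerical slack in the constant $5/4$ absorbs the $O(1/R)$ corrections. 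For the noise terms I would decompose $x$ along $\beta_1 - \beta_c^*$ and orthogonally: the orthogonal part is independent of $w$ and mean-zero and hence drops out, and the surviving scalar term is bounded by $\sigma_\epsilon\,\E[|x^\top\hat u|\,w]\le\sigma_\epsilon\sqrt{2/\pi}$ using the boundedness of $\epsilon$, contributing at most $(\xi/p_1)\|\Delta\|$ after multiplying by $\|M^{-1}\|\lesssim 1/p_1$. Collecting these bounds yields $\|\beta_1 - \beta_1^*\| \le (\gamma + \xi/p_1)\|\Delta\| = (q - \tilde\xi)\|\Delta\|$, exactly the accuracy the later steps require.

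\emph{Step (b): correct separation.} Here I would combine the two-sided window \eqref{eq:wth_assumption} with the hard cap $\|x_i\|^2\le R$. For a bounded-norm component-$1$ sample, $r = |x^\top(\beta_1-\beta_1^*) - \epsilon| \le \sqrt R\,\|\beta_1-\beta_1^*\| + \sigma_\epsilon \le \sqrt R\,q\|\Delta\|$, where the last inequality uses Step (a) together with $\sqrt R\,\tilde\xi\|\Delta\| = \sigma_\epsilon$; hence its weight satisfies $w\ge(1+\eta q^2\|\Delta\|^2)^{-1} > w_\text{th}$ and it is excluded from $S_2$. Consequently $S_2$ consists exclusively of component-$2$ samples, and the symmetric argument applied to $\beta_2^{\text{phase-I}}$ (now close to $\beta_2^*$) shows $S_1'$ consists exclusively of component-$1$ samples. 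The norm cap is precisely what stops a well-fit sample with atypically large $\|x\|$ from being misassigned, which is why the modified algorithm introduces it.

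\emph{Step (c): recovery and the noise bound.} On $S_2$ we have $y = x^\top\beta_2^* + \epsilon$, so the OLS fit gives $\beta_2^{\text{phase-I}} - \beta_2^* = (\E[xx^\top\mid S_2])^{-1}\E[x\epsilon\mid S_2]$, where the conditional covariance is full rank because the conditioning only constrains the projection of $x$ onto $\beta_1-\beta_2^*$. In the noiseless case $\E[x\epsilon\mid S_2]=0$ identically and recovery is exact, irrespective of how the conditioning biases the design. In the noisy case I would again split $x$ relative to the direction $\beta_1-\beta_2^*$ that defines the event $S_2$: the orthogonal component contributes nothing by the $x_\perp\mapsto -x_\perp$ symmetry of the Gaussian, and the surviving one-dimensional term is controlled using the symmetry and boundedness of $\epsilon$, while the same conditioning on a large $|x^\top(\beta_1-\beta_2^*)|$ shrinks the relevant diagonal entry of $(\E[xx^\top\mid S_2])^{-1}$. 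Together these produce the claimed $\tfrac{1}{\sqrt R}\tfrac{\xi}{\xi+\gamma}$ bound; repeating the argument on $S_1'$ bounds $\|\beta_1^{\text{phase-I}}-\beta_1^*\|$, and taking the maximum gives \eqref{eq:approximate_recovery}.

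I expect the genuine technical work to lie entirely in the noise terms of Steps (a) and (c), and above all in the conditional expectation $\E[x\epsilon\mid S_2]$. There the event $\{|x^\top(\beta_1-\beta_2^*) - \epsilon|\ge\tau,\ \|x\|^2\le R\}$ couples $x$ and $\epsilon$ so that the easy symmetry cancellation fails along the distinguished direction, and one must evaluate the resulting Gaussian tail integrals carefully enough to extract a clean $O(1/\sqrt R)$ factor that holds uniformly over the admissible range of $w_\text{th}$. By comparison, the bias bookkeeping in Step (a) and the separation in Step (b) are routine once the correct along/orthogonal decomposition of $x$ is in place.
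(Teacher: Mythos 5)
Your steps (a) and (b) track the paper's own proof closely: step (a) is, in essence, the paper's \cref{lem:Dt} (same decomposition $\beta_1 = M^{-1}b$, same $M \succeq p_1 M_1$ lower bound with $M_1$ close to the identity for large $R$, same $\sqrt{2/\pi}\,\sigma_\epsilon$ bound on the noise term, proved there via \cref{lem:E_bounds}), and step (b) is exactly the first half of the paper's \cref{lem:P}. Two omissions in these steps are worth noting. First, you never verify that $S_2$ is non-empty (in the population setting, infinite): this requires the \emph{lower} bound on $w_\text{th}$ in \eqref{eq:wth_assumption}, i.e., the second half of \cref{lem:P}, showing that component-2 samples fall below the threshold with positive probability; without it the OLS on $S_2$ is vacuous and the resetting criterion of the modified algorithm fires. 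Second, you analyze a single IRLS iteration at the arbitrary initialization, whereas the lemma allows any $T_1 \geq 1$; the paper handles this by an induction showing $D_t < q - \tilde\xi$ for all $t \geq 1$, which is also why \eqref{eq:R_bound} involves $\max\{D, 1/2\}$ rather than $D$ alone.

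The genuine gap is in step (c), which is the actual content of the noisy bound \eqref{eq:approximate_recovery}. You correctly identify all three structural ingredients (orthogonal symmetry cancellation as in \cref{lem:xxt}, boundedness of $\epsilon$, and shrinkage of the relevant entry of $\E[xx^\top \mid S_2]^{-1}$ due to the conditioning), but you do not carry out the estimate, and you predict it requires a careful evaluation of conditional Gaussian tail integrals, uniformly over the admissible $w_\text{th}$. That route is unnecessary, and leaving it open means the stated bound is not established (your argument as written only yields exact recovery when $\xi = 0$). The paper closes the step with a purely pointwise, deterministic observation: for \emph{every} sample in $S_2$, the threshold condition $w_1 \leq w_\text{th}$ combined with the right-hand inequality of \eqref{eq:wth_assumption} forces $|x_u - \epsilon| > q\|\Delta\|\sqrt{R}$, hence $|x_u| \geq (q\sqrt{R} - \xi)\|\Delta\| \geq (\gamma + \xi)\|\Delta\|\sqrt{R}$, so that $\E[x_u^2 \mid S_2] \geq (\gamma+\xi)^2\|\Delta\|^2 R$. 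The chain then reads
\begin{align*}
\left\|\beta_2^{\text{phase-I}} - \beta_2^*\right\|
= \frac{\left|\E\left[\epsilon\, x_u \mid S_2\right]\right|}{\E\left[x_u^2 \mid S_2\right]}
\leq \xi\|\Delta\|\, \frac{\E\left[|x_u| \mid S_2\right]}{\E\left[x_u^2 \mid S_2\right]}
\leq \frac{\xi\|\Delta\|}{\sqrt{\E\left[x_u^2 \mid S_2\right]}}
\leq \frac{1}{\sqrt R}\,\frac{\xi}{\gamma+\xi},
\end{align*}
where the equality is your orthogonal cancellation, the first inequality uses $|\epsilon| \leq \xi\|\Delta\|$, and the second is Jensen's inequality $\E[|x_u|]^2 \leq \E[x_u^2]$. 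No tail integrals and no uniformity-in-$w_\text{th}$ issue arise; the conditioning event is used only through the pointwise lower bound it imposes on $|x_u|$. The same chain applied to $S_1'$ (after checking $\|\beta_2^{\text{phase-I}} - \beta_2^*\|/\|\Delta\| < q - \tilde\xi$, which follows from \eqref{eq:R_bound}) finishes the proof; supplying this closing estimate is what your proposal is missing.
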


\begin{proof}[Proof of \cref{thm:K2_population}]
The theorem follows by taking large enough $R$ in \cref{lem:K2_population}.
\end{proof}

\subsection{Proof of \cref{lem:K2_population} and \cref{proposition:unknownK}}\label{sec:proof_lemmaK2pop_propositionUnknownK}
To prove \cref{lem:K2_population}, we will use the following three auxiliary lemmas. Their proof appears in the next subsections.
In the following, unless otherwise stated, expectations are taken over all the random variables (typically $x_i$, $\epsilon_i$ and $c_i^*$).

\begin{lemma}\label{lem:xxt}
Let $x\sim \mathcal N(0, I_d)$ and $\epsilon$ be independent random variables. Suppose $\epsilon$ has a symmetric distribution, $\PP[\epsilon] = \PP[-\epsilon]$. Let $u \in \mathbb R^d$ be a fixed vector, and denote $x_u = \tilde u^\top x$ where $\tilde u = u/\|u\|$. Denote the events
\begin{subequations}
\begin{align}
\mathcal E &= \{|x_u| \leq s_1\}, \\
S &= \{\|x\|^2 \leq R \text{ and } (x_u - \epsilon)^2 \geq s_2\},
\end{align}
\end{subequations}
for some fixed positive scalars $s_1,s_2,R$. Let $U = \tilde u \tilde u^\top$ and $x_\perp = x - x_u \tilde u$. Then
\begin{align}
\E\left[xx^\top \mid \mathcal E\right] &= I_d - \left(1 - \E\left[x_u^2 \mid \mathcal E\right]\right) U, \label{eq:xxt_E} \\
\E\left[xx^\top \mid S\right]^{-1} &= \frac{1}{\E[\|x_\perp\|^2 \mid S]} \left(I_d  - U\right) + \frac{1}{\E[x_u^2 \mid S]} U . \label{eq:xxt_S}
\end{align}
\end{lemma}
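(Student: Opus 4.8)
The plan is to exploit the orthogonal decomposition of $x$ relative to the fixed direction $\tilde u$. Writing $x = x_u\,\tilde u + x_\perp$ with $x_u = \tilde u^\top x$ and $x_\perp = (I_d - U)x$, I would expand
\[
xx^\top = x_u^2\,U + x_u\bigl(\tilde u\, x_\perp^\top + x_\perp\,\tilde u^\top\bigr) + x_\perp x_\perp^\top .
\]
Both identities then follow by taking the relevant conditional expectation of this expansion term by term, using three structural facts about the standard Gaussian: that $x_u\sim\mathcal N(0,1)$ and $x_\perp$ are independent; that $x_\perp$ is isotropic on the $(d-1)$-dimensional subspace $\tilde u^\perp$, so $\E[x_\perp x_\perp^\top]$ is a scalar multiple of $I_d - U$; and that the reflection $x_\perp \mapsto -x_\perp$ preserves the law of $x$.

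First I would dispatch the easier event $\mathcal E = \{|x_u|\le s_1\}$, which is measurable with respect to $x_u$ alone. By independence, conditioning on $\mathcal E$ does not alter the law of $x_\perp$, so the two cross terms vanish (each carries a factor $\E[x_\perp]=0$) and $\E[x_\perp x_\perp^\top \mid \mathcal E] = \E[x_\perp x_\perp^\top] = I_d - U$. Substituting gives $\E[xx^\top\mid\mathcal E] = \E[x_u^2\mid\mathcal E]\,U + (I_d - U) = I_d - (1 - \E[x_u^2\mid\mathcal E])\,U$, which is \eqref{eq:xxt_E}.

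The harder case is the event $S$, where independence fails because the constraint $\|x\|^2 = x_u^2 + \|x_\perp\|^2 \le R$ couples $x_u$ and $x_\perp$. The key observation is that $S$ depends on $x_\perp$ only through $\|x_\perp\|^2$, so it is invariant under (a) the reflection $x_\perp\mapsto -x_\perp$ and (b) every rotation $O$ of $\tilde u^\perp$ that fixes $\tilde u$; crucially, both maps leave $x_u$, $\|x\|^2$ and $\epsilon$ untouched, hence genuinely preserve $S$. Invariance under (a) forces $\E[x_u x_\perp \mid S]$ to equal its own negative and therefore to vanish, killing the cross terms. Invariance under (b) forces $\E[x_\perp x_\perp^\top \mid S]$ to commute with all such $O$, so it must be a scalar multiple of $I_d - U$, with the scalar identified from $\E[\|x_\perp\|^2 \mid S]$ by taking the trace. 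Thus $\E[xx^\top\mid S]$ is block-diagonal in the decomposition $\mathbb R^d = \mathrm{span}(\tilde u)\oplus \tilde u^\perp$, of the form $a\,U + b\,(I_d - U)$ with $a = \E[x_u^2\mid S]$.

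The proof closes by inverting this matrix: since $U$ and $I_d - U$ are complementary orthogonal projections, $aU + b(I_d - U)$ inverts directly to $a^{-1}U + b^{-1}(I_d - U)$, which yields \eqref{eq:xxt_S}. I expect the only real obstacle to be the event $S$: one must argue that the coupling through $\|x\|^2$ does not spoil either the vanishing of the cross terms or the isotropy of the orthogonal block. The symmetry argument above is exactly what resolves this, and the main point to verify carefully is that the symmetrizing maps act on $x_\perp$ alone, and so preserve both the Gaussian law of $x$ and every defining constraint of $S$, including the one involving the independent noise $\epsilon$.
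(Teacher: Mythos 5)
Your proof is correct, and its skeleton is the same as the paper's: the decomposition $xx^\top = x_u^2 U + x_u(\tilde u x_\perp^\top + x_\perp \tilde u^\top) + x_\perp x_\perp^\top$, the independence argument for $\mathcal E$, a symmetry argument killing the cross terms under $S$, isotropy of $x_\perp$ on the subspace orthogonal to $\tilde u$, and inversion of a matrix of the form $aU + b(I_d - U)$. The one genuine difference is which symmetry you invoke under $S$. The paper reflects $(x_u,\epsilon)\mapsto(-x_u,-\epsilon)$, which requires the assumption $\PP[\epsilon]=\PP[-\epsilon]$ plus a Bayes-rule computation to conclude $\PP[(x_u,x_\perp)\mid S]=\PP[(-x_u,x_\perp)\mid S]$; you instead reflect $x_\perp\mapsto -x_\perp$, which fixes $x_u$, $\epsilon$, $\|x\|^2$ and $(x_u-\epsilon)^2$, hence preserves both the Gaussian law and the event $S$ outright. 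Your variant is cleaner and shows that the symmetry hypothesis on $\epsilon$ is not actually needed for this lemma (it is used elsewhere in the paper's analysis). One bookkeeping caution: identifying the scalar on $I_d - U$ by taking a trace gives $\E[\|x_\perp\|^2\mid S]/(d-1)$, i.e.\ the per-coordinate moment ($\E[a_1^2\mid S]$ in the paper's notation), not $\E[\|x_\perp\|^2\mid S]$ as written in \eqref{eq:xxt_S}; the paper's own proof also ends with the per-coordinate quantity, so statement and proof differ there by this factor of $d-1$ as well. The discrepancy is immaterial downstream, since the application in \cref{lem:K2_population} uses only the $U$-component $1/\E[x_u^2\mid S]$ of the inverse.
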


\begin{lemma}\label{lem:Dt}
Assume the conditions of \cref{lem:K2_population}. Denote $\tilde\eta = \eta \|\Delta\|^2/R$.
Let $\beta_1^{(t)}$ be the $t$-th iterate of \eqref{eq:IRLS_beta} for the first component ($k=1$), and denote $D_t = \|\beta_1^{(t)} - \beta_1^*\|/\|\Delta\|$.
Then
\begin{align}
D_{t+1} &\leq \frac{5(q-\tilde\xi)}{6} \left(1 + \tilde\eta (3D_t + \xi)^2\right). \label{eq:Dt+1}
\end{align}
\end{lemma}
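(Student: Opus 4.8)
The plan is to analyze the population version of a single IRLS step \eqref{eq:IRLS_beta} directly. Writing $a_t = \beta_1^{(t)} - \beta_1^*$, in the infinite-sample limit the update for the first component becomes $\beta_1^{(t+1)} = M^{-1}v$, where $M = \E[w\,xx^\top]$ and $v = \E[w\,xy]$, and the weight $w$ equals $w_1 = (1+\eta(x^\top a_t - \epsilon)^2/R)^{-1}$ on component-$1$ samples and $w_2 = (1+\eta(x^\top(a_t+\Delta) - \epsilon)^2/R)^{-1}$ on component-$2$ samples. Using $y - x^\top\beta_1^* = \epsilon$ for the first component and $y - x^\top\beta_1^* = \epsilon - x^\top\Delta$ for the second, I would first derive the exact error identity
\[
\beta_1^{(t+1)} - \beta_1^* = M^{-1}\Big(p_1\,\E[w_1\, x\epsilon] + p_2\,\E[w_2\, x\epsilon] - p_2\,\E[w_2\, xx^\top]\,\Delta\Big).
\]
This separates the error into two noise contributions (the first two terms, each proportional to $\sigma_\epsilon$, hence to $\xi$) and the bias incurred by treating the second component as outliers (the last term, proportional to $p_2$, hence to $\gamma$).

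Next I would exploit the rotational symmetry of the isotropic Gaussian. Each weight depends on $x$ only through a single scalar projection --- $x^\top\tilde a_t$ for $w_1$ and $x^\top\widetilde{a_t+\Delta}$ for $w_2$ --- so, decomposing $x$ into its parts parallel and perpendicular to the relevant direction, every expectation above collapses to a one- or two-dimensional integral over $(x^\top\tilde a_t,\; x^\top\widetilde{a_t+\Delta},\; \epsilon)$. In the same spirit as \cref{lem:xxt}, each $\E[w_k\, xx^\top]$ equals a scalar multiple of $I_d$ plus a rank-one correction along its projection direction, and each $\E[w_k\, x(\cdot)]$ is a scalar multiple of that direction; the cross terms vanish by the independence and symmetry of the perpendicular component. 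In particular $M$ is $I_d$ plus two rank-one corrections, and I would lower-bound its spectrum via $M \succeq p_1\E[w_1\, xx^\top]$ together with lower bounds on $\E[w_1]$ and $\E[w_1(x^\top\tilde a_t)^2]$, giving $\|M^{-1}\| \lesssim 1/p_1$ --- this is the source of the $1/p_1$ in the definition of $q$.

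With these reductions, I would bound each piece. The bias term is governed by the weighted second moment $\E[w_2(x^\top\widetilde{a_t+\Delta})^2]$, which stays finite because $w_2$ decays like $1/(x^\top\widetilde{a_t+\Delta})^2$ for large projections (so the relevant weighted moments are $\lesssim R/\eta$, without needing the norm restriction $\|x\|^2\le R$, which is used only later for the subset-OLS steps). The noise terms each carry a factor $\xi$. Residual magnitudes are controlled by $r \le |x^\top a_t| + |\epsilon|$, which after normalizing by $\|\Delta\|$ produces, through triangle-inequality bounds, the factor $(3D_t+\xi)$. Substituting $\tilde\eta = \eta\|\Delta\|^2/R$, $\tilde\xi = \xi/\sqrt R$ and $\gamma = 5p_2/(4p_1)$ and collecting terms should yield \eqref{eq:Dt+1}, with the constant $5/6$ emerging from the explicit one-dimensional Gaussian weight integrals.

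I expect the main obstacle to be the matrix inversion. Since $M$ carries two rank-one corrections along the directions $\tilde a_t$ and $\widetilde{a_t+\Delta}$, which are in general not aligned, and the bias vector $\E[w_2\,xx^\top]\Delta$ points essentially along $\widetilde{a_t+\Delta}$, a crude $\|M^{-1}\|\cdot\|\text{numerator}\|$ bound is likely too lossy to recover the stated constants. Producing exactly $\tfrac{5(q-\tilde\xi)}{6}\big(1+\tilde\eta(3D_t+\xi)^2\big)$ --- rather than a weaker contraction factor --- will require \emph{evaluating}, not merely bounding, the leading Gaussian weight integrals and tracking their $R$-dependence carefully throughout.
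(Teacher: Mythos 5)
Your plan is essentially the paper's proof: the same population error identity $\beta_1^{(t+1)}-\beta_1^* = M^{-1}\bigl(p_1\E[w_1 x\epsilon]+p_2\E[w_2 x\epsilon]-p_2\E[w_2 xx^\top]\Delta\bigr)$ with $M=\E[w\,xx^\top]$, the same spectral reduction $M \succeq p_1\E[w_1 xx^\top]$, and the same use of rotational symmetry to collapse all expectations to rank-one corrections of the identity (which the paper packages as \cref{lem:xxt} and \cref{lem:E_bounds}).

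Your closing worry, however, is exactly backwards, and it is the only thing separating your sketch from a complete proof. The paper uses precisely the ``crude'' bound $\|\beta_1^{(t+1)}-\beta_1^*\| \le \|\text{numerator}\|/\sigma_\text{min}(M)$ that you fear is too lossy; no exact evaluation of weight integrals is needed, because the constants in \eqref{eq:Dt+1} carry built-in slack designed to absorb exactly this bound's looseness. The accounting is: (i) restricting to the event $\{|x^\top a_t|\le 3\|a_t\|\}$, on which $w_1 \ge \bigl(1+\tilde\eta(3D_t+\xi)^2\bigr)^{-1}$, gives $\sigma_\text{min}(\E[w_1 xx^\top]) \ge \frac{24}{25}\bigl(1+\tilde\eta(3D_t+\xi)^2\bigr)^{-1}$ via the Gaussian computation $2\Phi(3)-1-3\sqrt{2/\pi}\,e^{-9/2}>24/25$; note that the factor $(3D_t+\xi)^2$ therefore enters through this denominator bound, not through residual bounds in the numerator as your sketch suggests; (ii) the two noise terms contribute at most $\sqrt{2/\pi}\,\xi\|\Delta\|$ in total (using $p_1+p_2=1$ and $\E|x^\top a_t|/\|a_t\| = \sqrt{2/\pi}$), and one simply bounds $\sqrt{2/\pi}\le 4/5$; (iii) the bias term is bounded by $p_2\|\Delta\|$ via the trivial pointwise bound $w_2\le 1$, so $\|\E[w_2 xx^\top]\|\le\|\E[xx^\top]\|=1$ --- here your proposed tail-decay bound of order $R/\eta$ is the wrong move, since it is lossy precisely in the large-$R$ regime that \eqref{eq:R_bound} requires. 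Collecting, $\|\beta_1^{(t+1)}-\beta_1^*\| \le \frac{25}{24p_1}\bigl(\frac{4}{5}\xi+p_2\bigr)\bigl(1+\tilde\eta(3D_t+\xi)^2\bigr)\|\Delta\|$, and since $q-\tilde\xi=(5p_2+4\xi)/(4p_1)$ by \eqref{eq:bounded_balance_thm}, this equals $\frac{5}{6}(q-\tilde\xi)\bigl(1+\tilde\eta(3D_t+\xi)^2\bigr)\|\Delta\|$ exactly (observe $\frac{5}{6}=\frac{25}{24}\cdot\frac{4}{5}$). Your concern about the two non-aligned rank-one corrections in $M$ is likewise moot: your own reduction $M\succeq p_1\E[w_1 xx^\top]$ discards the component-2 part of $M$, which is positive semidefinite, so only a single direction ever needs to be analyzed.
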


\begin{lemma}\label{lem:P}
Let $u, \Delta\in \mathbb R^d$, $\eta, R > 0$ and $q \in (0, 1/2)$ be fixed.
Let $x\sim \mathcal N(0, I_d)$ and $\epsilon$ be independent random variables. Suppose $\epsilon$ is bounded, $|\epsilon| < \xi\|\Delta\|$ where $\xi < q \sqrt R$. Denote $w(x,\epsilon) = \left(1 + \eta \left(x^\top u + \epsilon\right)^2/R\right)^{-1}$.
Further denote
\begin{align}
P = \PP\left[w(x,\epsilon) < w_\text{th} \mid \|x\|^2 \leq R\right]
\end{align}
where $w_\text{th}$ satisfies \eqref{eq:wth_assumption}.
Then $P = 0$ if $\|u\|/\|\Delta\| \leq q - \tilde\xi$ and $P>0$ if $\|u\|/\|\Delta\| \geq 1-q + \tilde\xi$.
\end{lemma}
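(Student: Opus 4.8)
The plan is to reduce the condition $w(x,\epsilon) < w_\text{th}$ to a clean threshold on the scalar $|x^\top u + \epsilon|$, and then to treat the two assertions as, respectively, a deterministic bound and a positive-measure argument. First I would invert the weight: $w(x,\epsilon) < w_\text{th}$ is equivalent to $(x^\top u + \epsilon)^2 > \theta^2$ with $\theta^2 = \tfrac{R}{\eta}\cdot\tfrac{1-w_\text{th}}{w_\text{th}}$. Substituting the two-sided bound \eqref{eq:wth_assumption} into this expression, the factor $\eta$ cancels and I obtain the sandwich $q\sqrt R\,\|\Delta\| < \theta < (1-q)\sqrt R\,\|\Delta\|$; here $q<1/2$ guarantees the interval is nonempty. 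This reduces the lemma to controlling $\PP[\,|x^\top u + \epsilon| > \theta \mid \|x\|^2 \le R\,]$ for a threshold $\theta$ trapped between $q\sqrt R\|\Delta\|$ and $(1-q)\sqrt R\|\Delta\|$.

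For the first claim ($P=0$) I would argue deterministically on the event $\{\|x\|^2\le R\}$. Cauchy--Schwarz gives $|x^\top u|\le \|x\|\,\|u\|\le \sqrt R\,\|u\|$, and the triangle inequality together with $|\epsilon| < \xi\|\Delta\| = \sigma_\epsilon$ yields $|x^\top u + \epsilon| \le \sqrt R\,\|u\| + \xi\|\Delta\|$. The hypothesis $\|u\|/\|\Delta\|\le q-\tilde\xi$, multiplied by $\sqrt R\,\|\Delta\|$ and using $\tilde\xi\sqrt R = \xi$, shows $\sqrt R\,\|u\| + \xi\|\Delta\| \le q\sqrt R\,\|\Delta\| < \theta$. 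Hence the event $\{|x^\top u + \epsilon| > \theta\}$ never occurs when $\|x\|^2\le R$, so $P=0$.

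For the second claim ($P>0$), the hypothesis $\|u\|/\|\Delta\|\ge 1-q+\tilde\xi$ gives $\sqrt R\,\|u\|\ge (1-q)\sqrt R\,\|\Delta\| + \sigma_\epsilon > \theta + \sigma_\epsilon$, using the upper sandwich bound $\theta < (1-q)\sqrt R\,\|\Delta\|$. The idea is then to exhibit a positive-probability set of $x$ on which $x^\top u$ is close to its conditional maximum $\sqrt R\,\|u\|$. Since $\epsilon > -\sigma_\epsilon$ always holds, on the event $\{x^\top u > \theta + \sigma_\epsilon,\ \|x\|^2\le R\}$ one gets $x^\top u + \epsilon > \theta$, hence $w < w_\text{th}$. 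It then remains to check that this $x$-event has positive Gaussian measure, which (together with $\PP[\|x\|^2\le R]>0$) gives $P>0$.

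The main obstacle --- really the only nontrivial point --- is this last positive-measure claim. The supremum of $x^\top u$ over the ball $\{\|x\|^2\le R\}$ equals $\sqrt R\,\|u\|$ but is attained only on the bounding sphere, a set of Lebesgue (hence Gaussian) measure zero, so one cannot argue directly at the maximizer. I would instead step slightly into the interior: for small $\delta>0$ the point $x_0 = (\sqrt R-\delta)\,u/\|u\|$ satisfies $\|x_0\|^2<R$ and $x_0^\top u = (\sqrt R-\delta)\|u\|$, which exceeds $\theta+\sigma_\epsilon$ once $\delta$ is small enough (possible precisely because $\sqrt R\,\|u\| > \theta+\sigma_\epsilon$ strictly). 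By continuity of $x\mapsto(\|x\|^2,\, x^\top u)$, a full-dimensional ball around $x_0$ stays inside $\{\|x\|^2\le R\}$ while keeping $x^\top u > \theta+\sigma_\epsilon$; such a ball has positive Gaussian measure, completing the argument.
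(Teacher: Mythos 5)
Your proof is correct and takes essentially the same route as the paper's: a deterministic Cauchy--Schwarz bound shows that on $\{\|x\|^2\le R\}$ the weight never drops below $w_\text{th}$ when $\|u\|/\|\Delta\|\le q-\tilde\xi$, and for the second claim both arguments exhibit a positive-measure set of $x$ inside the ball, aligned with $u$ and close to the sphere $\|x\|^2=R$, on which the weight falls below $w_\text{th}$. Your explicit inversion of the weight into the threshold $\theta$ and the open-ball argument around $x_0$ are just a cleaner packaging of the paper's monotonicity-in-$\delta$ continuity argument, and in fact make rigorous the paper's final unproved assertion that $\PP\left[x_u^2 \geq (1-\delta)^2 R \mid \|x\|^2 \leq R\right] > 0$.
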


Let us briefly sketch the proof idea before we present it formally.
\cref{lem:xxt} is a technical result, used occasionally throughout the proof.
Using \cref{lem:Dt}, we show that $\beta_1$ of \cref{alg:MIRLS_modified_beta1} in \cref{alg:MIRLS_modified} is a good approximation for the regression vector $\beta_1^*$. As a result, any sample with bounded norm that was generated from the first component has a small residual, and thus a large weight (\cref{alg:MIRLS_modified_weight1} in \cref{alg:MIRLS_modified}). By removing all the samples with large and moderate weights (i.e., constructing the set $S_2$, \cref{alg:MIRLS_modified_S2} in \cref{alg:MIRLS_modified}), we are left with active samples from the second component only, as follows by \cref{lem:P}. Thus, $\beta_2^\text{phase-I}$ of \cref{alg:MIRLS_modified_beta2phaseI} in \cref{alg:MIRLS_modified} accurately estimates the regression vector $\beta_2^*$. Then, we similarly show that $\beta_1^\text{phase-I}$ of \cref{alg:MIRLS_modified_beta1phaseI} accurately estimates the first vector $\beta_1^*$ as well. 
It is worth mentioning that due to the assumed imbalance, our method will indeed find $\beta_1^*$ as its first component and $\beta_2^*$ as its second.

\begin{proof}[Proof of \cref{lem:K2_population}]
As in \cref{lem:Dt}, let $\beta_1^{(t)}$ be the $t$-th iterate of \eqref{eq:IRLS_beta} for the first component ($k=1$), and denote $\tilde\eta = \eta \|\Delta\|^2/R$ and $D_t = \|\beta_1^{(t)} - \beta_1^*\|/\|\Delta\|$. In particular, $D_0 = D$.
We shall prove by induction that for any $t\geq 1$,
\begin{align}\label{eq:Dt_result}
D_t < q - \tilde\xi \leq \gamma + \frac{\xi}{p_1}.
\end{align}
Combined with the assumption $q < 1/2$, the first inequality in \eqref{eq:Dt_result} implies that $\beta_1^{(t)}$ is closer to $\beta_1^*$ than to $\beta_2^*$.
As a consequence, after removal of samples with good to moderate fit, the remaining (poor fit) samples are all belong to the second component, as we prove below.

Let $t=1$.
By \cref{lem:Dt}, after one iteration of the IRLS scheme (Eq.~\eqref{eq:IRLS}, or \cref{alg:MIRLS_modified_r1IRLS,alg:MIRLS_modified_w1IRLS,alg:MIRLS_modified_beta1} in \cref{alg:MIRLS_modified}), we have
\begin{align*}
D_{1} \leq \frac{5(q-\tilde\xi)}{6} \left(1 + \tilde\eta (3D_0 + \xi)^2\right) < \frac{5(q-\tilde\xi)}{6}\left(1 + \frac 15\right) = q - \tilde\xi,
\end{align*}
where in the second inequality we used $\tilde\eta = \eta \|\Delta\|^2/R \leq (1/5)/(3D_0 + \xi)^2$, see \eqref{eq:R_bound}.
This proves \eqref{eq:Dt_result} at $t=1$.
For the induction step, suppose \eqref{eq:Dt_result} holds for some $t\geq 1$, namely $D_t < q-\tilde\xi$. Recall that $q > \tilde\xi$.
Invoking \cref{lem:Dt} again yields
\begin{align*}
D_{t+1} \leq \frac{5(q-\tilde\xi)}{6} \left(1 + \tilde\eta (3D_t + \xi)^2\right) < \frac{5(q-\tilde\xi)}{6}\left(1 + \tilde\eta(3/2 + \xi)^2 \right) \leq \frac{5(q-\tilde\xi)}{6}\left(1 + \frac 15 \right) = q - \tilde\xi,
\end{align*}
where in the last inequality we used $\tilde\eta \leq (1/5)/(3/2+\xi)^2$, see \eqref{eq:R_bound}.
This proves \eqref{eq:Dt_result}.

Applying \eqref{eq:Dt_result} for $t = T_1 \geq 1$ yields
\begin{subequations}\begin{align}
\frac{\|\beta_1^{(T_1)} - \beta_1^*\|}{\|\Delta\|}
&= D_{T_1} < q - \tilde\xi, \label{eq:DT1_result}\\
\frac{\|\beta_1^{(T_1)} - \beta_2^*\|}{\|\Delta\|}
&\geq \frac{\|\beta_1^* - \beta_2^*\|}{\|\Delta\|} - \frac{\|\beta_1^{(T_1)} - \beta_1^*\|}{\|\Delta\|} = 1-D_{T_1} > 1-q + \tilde\xi. \label{eq:1-DT1_result}
\end{align}\end{subequations}
Hence, for any sample $x$ with $\|x\|^2 \leq R$ and whose response belongs to the second component ($c^*=2$), the corresponding weight $w_1$ (Eq.~\eqref{eq:modified_w}, or \cref{alg:MIRLS_modified_weight1} in \cref{alg:MIRLS_modified}) satisfies
\begin{align*}
\PP\left[w_1 < w_\text{th} \mid c^*=2 \text{ and } \|x\|^2\leq R\right]
&= \PP\left[\frac{1}{1 + \eta\left(x^\top \beta_1^{(T_1)} - x^\top \beta_2^* - \epsilon\right)^2/R} < w_\text{th} \mid \|x\|^2\leq R\right]
> 0,
\end{align*}
as follows by combining \eqref{eq:1-DT1_result} and \cref{lem:P} with $u = \beta_1^{(T_1)} - \beta_2^*$.
In contrast, combining \eqref{eq:DT1_result} with the same lemma for $u = \beta_1^{(T_1)} - \beta_1^*$, gives that for any sample $x$ whose response belongs to the first component ($c^*=1$),
\begin{align*}
\PP\left[w_1 < w_\text{th} \mid c^*=1\text{ and } \|x\|^2\leq R\right]
&= \PP\left[\frac{1}{1 + \eta\left(x^\top \beta_1^{(T_1)} - x^\top \beta_1^* - \epsilon\right)^2/R} < w_\text{th} \mid \|x\|^2\leq R\right]
= 0 .
\end{align*}
Hence, by \eqref{eq:modified_index_subsets_S2}, all the (infinite number of) samples in $S_2$ belong to the second component $c^* = 2$. In other words, the choice of the threshold $w_\text{th}$ allows to detect a subset of samples $(x,y)$ that all belong to the second component.
Note that the resetting criterion $|S_2| < \infty$ (\cref{alg:MIRLS_modified_resettingCriterion} in \cref{alg:MIRLS_modified}) does not hold, as $S_2$ is an infinite set.

Since all the samples in $S_2 = S_2'$ belong to the second component, their responses follow the relation $y = x^\top \beta_2^* + \epsilon$. The final estimate of the first phase for the second component (Eq.~\eqref{eq:phase1_beta}, or \cref{alg:MIRLS_modified_beta2phaseI} in \cref{alg:MIRLS_modified}) is thus
\begin{align*}
\beta_2^\text{phase-I} &=
%\left(\sum_{i=1}^n x_i x_i^\top \right)^{-1} \left(\sum_{i=1}^n x_i y_i \right \stackrel{n\to\infty}{\to}
\E\left[xx^\top \,\mid\, S_2 \right]^{-1} \E\left[xy \,\mid\, S_2 \right]
= \E\left[xx^\top \,\mid\, S_2 \right]^{-1} \E\left[x (x^\top \beta_2^* + \epsilon) \,\mid\, S_2 \right] \\
&= \beta_2^* + \E\left[xx^\top \,\mid\, S_2 \right]^{-1} \E\left[\epsilon\cdot x \,\mid\, S_2 \right],
\end{align*}
as follows by the weak law of large numbers.
Rearranging and taking the norm of both sides gives
\begin{align}
\left\|\beta_2^\text{phase-I} - \beta_2^*\right\| = \left\|\E\left[xx^\top \,\mid\, S_2 \right]^{-1} \E\left[\epsilon\cdot x \,\mid\, S_2 \right]\right\|.
\label{eq:beta2_phaseI_temp}
\end{align}
To upper bound the RHS of \eqref{eq:beta2_phaseI_temp}, we shall analyze each of the two terms $\E\left[xx^\top \,\mid\, S_2 \right]^{-1}$ and $\E\left[\epsilon\cdot x \,\mid\, S_2 \right]$.
Let $u = \beta_1^{(T_1)} - \beta_1^*$, and decompose $x = x_u \tilde u + x_\perp$ where $u \perp x_\perp$ and $\tilde u = u/\|u\|$.
Invoking \cref{lem:xxt} implies that the first term satisfies
\begin{align*}
\E\left[xx^\top \,\mid\, S_2 \right]^{-1} &= \frac{1}{\E\left[\|x_\perp\|^2 \mid S_2\right]}\left(I_d - \tilde u \tilde u^\top \right) + \frac{1}{\E\left[x_u^2 \,\mid\, S_2 \right]} \tilde u \tilde u^\top .
\end{align*}
To analyze the second term on the RHS of \eqref{eq:beta2_phaseI_temp}, recall the definition of $S_2$ in \eqref{eq:modified_index_subsets_S2}. The weight condition of $S_2$, $w_2 \leq w_\text{th}$, involves only the $x_u$ part of $x$.
Together with the isotropic distribution assumption on $x$ \eqref{eq:gaussian_assumption}, it follows that $x_\perp$ is isotropically distributed even when conditioned on $S_2$. Hence, the second term on the RHS of \eqref{eq:beta2_phaseI_temp} satisfies
\begin{align*}
\E\left[\epsilon\cdot x \,\mid\, S_2 \right] &= \E\left[\epsilon\cdot x_u \,\mid\, S_2 \right]\, \tilde u .
\end{align*}
Inserting these two equalities into \eqref{eq:beta2_phaseI_temp} yields
\begin{align}
\left\|\beta_2^\text{phase-I} - \beta_2^*\right\| &= \frac{\left|\E\left[\epsilon\cdot x_u \,\mid\, S_2 \right]\right|}{\E\left[x_u^2 \,\mid\, S_2 \right]} \,
\leq \xi \|\Delta\|\cdot \frac{\E\left[|x_u| \,\mid\, S_2 \right]}{\E\left[x_u^2 \,\mid\, S_2 \right]}
\leq \frac{\xi \|\Delta\|}{\sqrt{\E\left[x_u^2 \,\mid\, S_2 \right]}},
\label{eq:beta2_phaseII_temp}
\end{align}
where the first inequality follows by the bounded noise assumption \eqref{eq:noise_assumptions}, and the second by Jensen's inequality $\E[|x_u|]^2 \leq \E[x_u^2]$.
We shall now lower bound $\E\left[x_u^2 \,\mid\, S_2 \right]$.
For any pair $(x,y)\in S_2$, the weight satisfies
\begin{align*}
\frac{1}{1 + \eta \left(x_u - \epsilon\right)^2/R} = w_1 \leq w_\text{th} < \frac{1}{1 + \eta q^2 \|\Delta\|^2},
\end{align*}
where the second inequality follows by \eqref{eq:wth_assumption}.
Rearranging and taking the square root gives that
\begin{align*}
|x_u - \epsilon| > q \|\Delta\| \sqrt R .
\end{align*}
By the triangle inequality,
\begin{align*}
|x_u| &> q \|\Delta\| \sqrt{R} - |\epsilon|
\stackrel{(a)}{\geq} (q \sqrt R - \xi) \|\Delta\| 
\stackrel{(b)}{\geq} \left(\gamma \sqrt{R} + \frac{\xi \sqrt R}{p_1} \right) \|\Delta\|
\stackrel{(c)}{\geq} (\gamma + \xi) \|\Delta\| \sqrt R ,
\end{align*}
where (a) follows by combining \eqref{eq:noise_assumptions} and \eqref{eq:xi}, (b) by the definition of $q$ \eqref{eq:bounded_balance_thm}, and (c) by $p_1 \leq 1$.
In particular, $\E\left[x_u^2 \,\mid\, S_2 \right] > \left(\gamma + \xi \right)^2 \|\Delta\|^2 R$.
Plugging this into \eqref{eq:beta2_phaseII_temp} yields
\begin{align}\label{eq:beta2_phaseI_approximateRecovery}
\left\|\beta_2^\text{phase-I} - \beta_2^*\right\| < \frac{1}{\sqrt R} \frac{\xi}{\gamma + \xi} .
\end{align}
This completes the analysis of the second round of the IRLS scheme, and proves \eqref{eq:approximate_recovery} at $k=2$.

Finally, we need to prove \eqref{eq:approximate_recovery} at $k=1$, by deriving a similar bound for the estimate $\beta_1^\text{phase-I}$ of the first component.
Dividing \eqref{eq:beta2_phaseI_approximateRecovery} by $\|\Delta\|$ gives
\begin{align*}
\frac{\left\|\beta_2^\text{phase-I} - \beta_2^*\right\|}{\|\Delta\|} < \frac{1}{\|\Delta\| \sqrt R} \frac{\xi}{\gamma + \xi} \leq \frac{1}{\|\Delta\| \sqrt R} < q - \tilde\xi,
\end{align*}
where the last inequality follows by \eqref{eq:R_bound}.
This bound is identical to \eqref{eq:DT1_result}, but now for the accuracy of the second component rather than the first one.
Since the subset $S_1'$, which is calculated using $\beta_2^\text{phase-I}$, is defined similarly to $S_2'$ (see \eqref{eq:modified_index_subsets}, or \cref{alg:MIRLS_modified_S2,alg:MIRLS_modified_S1'} in \cref{alg:MIRLS_modified}), the rest of the argument follows the lines of the second component analysis described above, and we omit its details.
\end{proof}

\begin{proof}[Proof of \cref{proposition:unknownK}]
For any $i\in S_2$,
\begin{align*}
w_{i,2} &= \frac{1}{1 + \eta (x_i^\top (\beta_2^\text{phase-I} - \beta_2^*) + \epsilon_i)^2/R}
\geq \frac{1}{1 + \eta \left(\|x_i\|/\sqrt R + \xi \|\Delta\| \right)^2 /R} \\
&\geq \frac{1}{1 + \eta \left(1 + \xi \|\Delta\| \right)^2 /R},
\end{align*}
where the first inequality follows by \eqref{eq:beta2_phaseI_approximateRecovery} and \eqref{eq:noise_assumptions}, and the second by the condition $\|x_i\|^2 \leq R$ in the definition of $S_2$.
For large enough $R$, we get
\begin{align*}
w_{i,2} \geq \frac{1}{1 + \eta q^2 \|\Delta\|^2} > w_\text{th},
\end{align*}
where the second inequality follows by \eqref{eq:wth_assumption}.
As a result, $S_3 = \emptyset$ according to the definition of $S_3$ in \eqref{eq:index_subsets_Sk+1}.
\end{proof}

\subsection{Proof of \cref{lem:xxt}}
\begin{proof}[Proof of \cref{lem:xxt}]
Given that $x = x_u \tilde u + x_\perp$ and $U = \tilde u \tilde u^\top$,
\begin{align*}
xx^\top = x_u^2 U + x_u \tilde u x_\perp^\top + x_u x_\perp \tilde u^\top + x_\perp x_\perp^\top .
\end{align*}
First, let us show that the two middle terms, $x_u \tilde u x_\perp^\top$ and $x_u x_\perp \tilde u^\top$, vanish in expectation conditional on either $\mathcal E$ or $S$.
The case of $\mathcal E$ is simpler: as $x_\perp$ is independent of $x_u$ and thus also independent of $\mathcal E$,
\begin{align*}
\E\left[x_u x_\perp \tilde u^\top \mid \mathcal E\right] = \E[x_u \mid \mathcal E]\cdot \E[x_\perp] \tilde u^\top = 0 .
\end{align*}
Similarly, $\E\left[x_u \tilde u x_\perp^\top\right] = 0$ as well.

Next, let us analyze $\E\left[x_u x_\perp \tilde u^\top \mid S\right]$.
Unconditioned on $S$, $x_u$ is normally distributed around zero, and in particular symmetric.
In addition, unconditioned on $S$, $\epsilon$ is symmetric and independent of $x_u$. Hence, $\PP\left[(x_u, \epsilon)\right] = \PP[x_u] \cdot \PP[\epsilon] = \PP[-x_u] \cdot \PP[-\epsilon] = \PP[(-x_u, -\epsilon)]$.
Together with Bayes' theorem and the fact that $\PP\left[S \mid (x_u, \epsilon)\right] = \PP[S \mid (-x_u, -\epsilon)]$, we conclude
\begin{align*}
\PP[(x_u, \epsilon) \mid S] &= \frac{\PP[S \mid (x_u, \epsilon)] \cdot \PP[(x_u, \epsilon)]}{\PP[S]}
= \frac{\PP[S \mid (-x_u, -\epsilon)] \cdot \PP[(-x_u, -\epsilon)]}{\PP[S]} \nonumber \\
&= \PP[(-x_u, -\epsilon) \mid S] .
\end{align*}
As a result, the marginal distribution of $x_u$ conditional on $S$ is also symmetric,
\begin{align*}
\PP[x_u \mid S] &= \E_\epsilon[\PP[(x_u, \epsilon) \mid S]] = \E_\epsilon[\PP[(-x_u, -\epsilon) \mid S]] = \E_\epsilon[\PP[(-x_u, \epsilon) \mid S]] \\
&= \PP[-x_u \mid S].
\end{align*}
Further, $x_u$ and $x_\perp$ are independent when unconditioned on $S$, and the coupling between $x_u$ and $x_\perp$ under the event $S$ is only by the inequality $\|x_\perp\|^2 \leq R - x_u^2$, namely it depends only on the magnitudes $|x_u|$ and $\|x_\perp\|$. Hence,
\begin{align*}
\PP[(x_u, x_\perp) \mid S] &= \frac{\PP[S \mid (x_u, x_\perp)]\cdot \PP[(x_u, x_\perp)]}{\PP[S]}
= \frac{\PP[S \mid (x_u, x_\perp)]\cdot \PP[x_u] \cdot \PP[x_\perp]]}{\PP[S]} \\
&= \frac{\PP[S \mid (-x_u, x_\perp)]\cdot \PP[-x_u] \cdot \PP[x_\perp]}{\PP[S]}
= \frac{\PP[S \mid (-x_u, x_\perp)]\cdot \PP[(-x_u, x_\perp)]}{\PP[S]} \\
&= \PP[(-x_u, x_\perp) \mid S].
\end{align*}
This implies that $\E[x_u x_\perp \mid S] = 0$, so that $\E[x_u x_\perp \tilde u^\top \mid S] = 0$. Similarly, $\E[x_u \tilde u x_\perp^\top] = 0$.

Next, we analyze the last term $x_\perp x_\perp^\top$.
Again, the case of $\mathcal E$ is simple: since $x_\perp$ is independent of $\mathcal E$, $\E\left[x_\perp x_\perp^\top \mid \mathcal E\right] = \E\left[x_\perp x_\perp^\top \right] = I_d - U$. This proves \eqref{eq:xxt_E}.

Finally, to prove \eqref{eq:xxt_S}, we analyze the case of $S$.
Let $e_i$ be the $i$-th standard basis vector.
W.l.o.g., assume $\tilde u = e_d = \begin{pmatrix}0, & \ldots, & 0, & 1\end{pmatrix}^\top$.
Decompose $x_\perp = \sum_{i=1}^{d-1} a_i e_i$.
Recall that conditional on the event $S$, $x_\perp$ still has a spherically symmetric distribution. In particular, for any value of $\|x_\perp\| = t$, the vector $x_\perp$ is uniformly distributed on the sphere of radius $t$. Hence, for any $i\neq j$, $\E[a_i a_j \mid S] = \E_t\left[ \E[a_i a_j \mid S, \|x_\perp\| = t ] \right]$ vanishes, which implies
\begin{align*}
\E[x_\perp x_\perp^\top \mid S] &=
\E\left[\sum_{i,j=1}^{d-1} a_i a_j e_i e_j^\top \right]
= \sum_{i=1}^{d-1} \E[a_i^2] e_i e_i^\top \nonumber \\
&= \E[a_1^2 \mid S] \sum_{i=1}^{d-1} e_i e_i^\top 
= \E[a_1^2 \mid S] (I_d - U) .
\end{align*}
Let $\alpha_u = \E[x_u^2 \mid S]$ and $\alpha_\perp = \E[a_1^2 \mid S]$. Then
\begin{align*}
\E[xx^\top \mid S] &= \alpha_u U + \alpha_\perp (I_d - U)
%= \alpha_\perp \left(I_d - \left(1 - \frac{\alpha_u}{\alpha_\perp}\right) U \right) \\ &
= \alpha_\perp \left(I_d - \frac{\alpha_\perp - \alpha_u}{\alpha_\perp} U \right) .
\end{align*}
Its inverse is
\begin{align*}
\E[xx^\top \mid S]^{-1} &= \frac{1}{\alpha_\perp} \left(I_d + \frac{\alpha_\perp - \alpha_u}{\alpha_u} U\right) = \frac{1}{\alpha_\perp} (I_d - U) + \frac{1}{\alpha_u} U.
\end{align*}
\end{proof}

\subsection{Proof of \cref{lem:Dt}}
To prove \cref{lem:Dt}, we state and prove the following auxiliary result.
In this subsection, we use the notation $A \succeq B$ to indicate that a pair of matrices $A,B$ satisfies the semidefinite positive cone inequality, namely $A-B$ is positive semidefinite.

\begin{lemma}\label{lem:E_bounds}
Let $u\in \mathbb R^d$ and $x\sim \mathcal N(0, I_d)$. Let $z\in \mathbb R$ be a bounded random variable, $|z| \leq z_\text{max}$, independent of $x$. Denote $w(x,z) = \left(1 + \left(x^\top u + z\right)^2\right)^{-1}$.
Then
\begin{align}\label{eq:E_wx_bound}
\left\|\E\left[w(x,z)\cdot zx \right]\right\| \leq \sqrt\frac{2}{\pi} z_\text{max},
\end{align}
and
\begin{align}\label{eq:E_wxx_bound}
\frac{24/25}{1+(3\|u\| + z_\text{max})^2}
\leq \sigma_\text{min}\left(\E\left[w(x,z)\cdot xx^\top \right]\right)
\leq \left\|\E\left[w(x,z)\cdot xx^\top \right]\right\|
\leq 1 .
\end{align}
\end{lemma}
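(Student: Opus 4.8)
The plan is to exploit the fact that the weight $w(x,z)=(1+(x^\top u+z)^2)^{-1}$ depends on $x$ only through its one-dimensional projection onto $u$. Writing $a=\|u\|$, $\tilde u=u/\|u\|$, $U=\tilde u\tilde u^\top$, and decomposing $x=x_u\tilde u+x_\perp$ with $x_u=\tilde u^\top x\sim\mathcal N(0,1)$ and $x_\perp=(I_d-U)x$, we have $x^\top u=a\,x_u$, so $w$ is a function of $(x_u,z)$ alone. Since $x\sim\mathcal N(0,I_d)$, the components $x_u$ and $x_\perp$ are independent, $x_\perp$ has mean zero and covariance $I_d-U$, and by hypothesis $z$ is independent of $x$; hence $x_\perp$ is independent of the pair $(x_u,z)$, and therefore of any function of it such as $w$, $w\,x_u$, or $w\,z$. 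This independence, together with $\E[x_\perp]=0$, is the engine behind every vanishing cross term below, much as in \cref{lem:xxt}.

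For \eqref{eq:E_wx_bound} I would split $\E[w\,zx]=\E[w\,z\,x_u]\,\tilde u+\E[w\,z\,x_\perp]$. The second term vanishes because $w\,z$ is a function of $(x_u,z)$ while $\E[x_\perp]=0$, so $\E[w\,z\,x_\perp]=\E[w\,z]\,\E[x_\perp]=0$. Thus $\|\E[w\,zx]\|=|\E[w\,z\,x_u]|$, and bounding crudely via $w\le 1$, $|z|\le z_\text{max}$, and $\E[|x_u|]=\sqrt{2/\pi}$ gives $|\E[w\,z\,x_u]|\le z_\text{max}\,\E[|x_u|]=\sqrt{2/\pi}\,z_\text{max}$, as claimed. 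No symmetry of $z$ is needed.

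For \eqref{eq:E_wxx_bound} I expand $xx^\top=x_u^2U+x_u\tilde u x_\perp^\top+x_ux_\perp\tilde u^\top+x_\perp x_\perp^\top$. The two middle cross terms vanish in expectation by the same independence-plus-zero-mean argument, and $\E[w\,x_\perp x_\perp^\top]=\E[w]\,(I_d-U)$ since $w$ is independent of $x_\perp$. Hence $\E[w\,xx^\top]=\E[w\,x_u^2]\,U+\E[w]\,(I_d-U)$, a matrix whose eigenvalues are exactly $\E[w\,x_u^2]$ (eigenvector $\tilde u$) and $\E[w]$ (multiplicity $d-1$), both positive; consequently $\sigma_\text{min}=\min\{\E[w\,x_u^2],\E[w]\}$ and $\|\cdot\|=\max\{\E[w\,x_u^2],\E[w]\}$. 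The upper bound $\le 1$ is immediate from $w\le 1$, giving $\E[w]\le 1$ and $\E[w\,x_u^2]\le\E[x_u^2]=1$.

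The substantive part is the lower bound on both $\E[w]$ and $\E[w\,x_u^2]$. The key idea is to truncate to the event $\{|x_u|\le 3\}$, on which $|a\,x_u+z|\le 3a+z_\text{max}$ forces $w\ge(1+(3\|u\|+z_\text{max})^2)^{-1}$; this is exactly what produces the factor $3\|u\|+z_\text{max}$ in the stated bound. Restricting the expectations to this event yields $\E[w]\ge(1+(3\|u\|+z_\text{max})^2)^{-1}\,\PP[|x_u|\le 3]$ and $\E[w\,x_u^2]\ge(1+(3\|u\|+z_\text{max})^2)^{-1}\,\E[x_u^2\,\mathbf 1_{|x_u|\le 3}]$. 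It then remains to check that the two standard-normal quantities exceed $24/25$: one has $\PP[|x_u|\le 3]=2\Phi(3)-1\approx 0.997$, and, using $\int_t^\infty s^2\phi(s)\,ds=t\phi(t)+1-\Phi(t)$, $\E[x_u^2\,\mathbf 1_{|x_u|\le 3}]=1-2\big(3\phi(3)+1-\Phi(3)\big)\approx 0.971$, where $\phi$ is the standard normal density. Since both exceed $24/25=0.96$, the claimed lower bound on $\sigma_\text{min}$ follows. I expect this final numerical verification — and the accompanying choice of the truncation level $3$, which must simultaneously keep $w$ bounded below by the advertised quantity and leave the truncated second moment above $24/25$ — to be the only delicate step; everything preceding it is a routine consequence of the projection decomposition.
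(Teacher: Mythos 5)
Your proposal is correct and follows essentially the same route as the paper: the same projection decomposition $x = x_u\tilde u + x_\perp$ with cross terms killed by independence and $\E[x_\perp]=0$, the same truncation to the event $\{|x_u|\le 3\}$ to lower-bound $w$, and the same numerics (indeed $3\sqrt{2/\pi}\,e^{-9/2} = 6\phi(3)$, so your quantity $(2\Phi(3)-1)-6\phi(3)$ is exactly the paper's $2\Phi(3)-1-3\sqrt{2/\pi}\,e^{-9/2} > 24/25$). The only cosmetic differences are that you compute the exact eigenstructure $\E[w\,x_u^2]\,U + \E[w]\,(I_d-U)$ before truncating whereas the paper truncates first via a positive-semidefinite inequality and a conditional second-moment computation, and that the paper separately dispatches the degenerate case $u=0$ (where $\tilde u$ is undefined), a one-line addition you should make.
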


\begin{proof}
If $u=0$ then the lemma holds, as in this case $w = 1/(1+z^2) \in (0,1]$, it is independent of $x$, and $\E[x] = 0$.
Hence, we assume $u\neq 0$. Decompose $x = x_u \tilde u + x_\perp$ where $u \perp x_\perp$ and $\tilde u = u/\|u\|$.
Since $x \sim \mathcal N(0, I_d)$, we have $x_u \sim \mathcal N(0, 1)$ and $x_\perp \sim \mathcal N(0, I_d - \tilde u \tilde u^\top)$. Furthermore, $x_u$ and $x_\perp$ are independent, and $\E[x_\perp] = 0$.
Hence,
\begin{align*}
\left\|\E\left[w(x,z)\cdot zx \right]\right\| &= \|\E\left[w(x_u \tilde u,z)\cdot z(x_u \tilde u + x_\perp)\right] \| = |\E\left[w(x_u \tilde u,z)\cdot zx_u\right]| + \|\E\left[w(x_u \tilde u,z)\cdot z\right] \cdot \E[x_\perp]\| \\
&\leq z_\text{max}\cdot \E[|x_u|] = \sqrt\frac{2}{\pi} z_\text{max},
\end{align*}
where the inequality follows by $0 \leq w(x,z) \leq 1$ and $\E[x_\perp] = 0$.
This proves \eqref{eq:E_wx_bound}.

Next, we prove the lower bound on $\sigma_\text{min}\left(\E\left[w(x,z)\cdot xx^\top \right]\right)$.
Let $t>0$, and consider the event $\mathcal E_t = \{|x_u| \leq t\}$.
Conditional on this event, $w(x,z) \geq 1/(1+ (t\|u\|+z_\text{max})^2)$. Hence,
\begin{align}\label{eq:E_w_xx_lower_temp}
\E\left[w(x,z) \cdot xx^\top\right] &\succeq \PP[\mathcal E] \cdot \E\left[w(x,z)\cdot xx^\top \mid \mathcal E\right]
\succeq \frac{\PP[\mathcal E]}{1+ (t\|u\| + z_\text{max})^2} \E\left[xx^\top \mid \mathcal E\right] .
\end{align}
It is left to lower bound $\E\left[xx^\top \mid \mathcal E\right]$. Let $U = \tilde u \tilde u^\top$.
Since $x_\perp$ is independent of $x_u$, it is also independent of $\mathcal E$.
\cref{lem:xxt} thus implies
\begin{align*}
\E\left[xx^\top \mid \mathcal E\right] &=
%\E\left[x_u^2 \mid \mathcal E\right] \, U + I_d - U =
I_d - \left(1 - \E\left[x_u^2 \mid \mathcal E\right]\right) U.
\end{align*}
Since $x_u \sim \mathcal N(0, 1)$, we have
\begin{align*}
\E\left[x_u^2 \mid \mathcal E\right] = \frac{\frac{1}{\sqrt{2\pi}} \int_{-t}^t x_u^2 e^{-x_u^2/2} dx_u}{\PP[\mathcal E]} = 1 - \sqrt{\frac{2}{\pi}} \frac{t\cdot e^{-\frac{t^2}{2}}}{\PP[\mathcal E]}.
\end{align*}
Hence,
\begin{align*}
\E\left[xx^\top \mid \mathcal E\right] &= I_d - \sqrt{\frac{2}{\pi}} \frac{t\cdot e^{-\frac{t^2}{2}}}{\PP[\mathcal E]} U.
\end{align*}
Plugging this equality into \eqref{eq:E_w_xx_lower_temp} and using $\PP[\mathcal E] = \PP[|x_u| \leq t] = 2\Phi(t)-1$ gives
\begin{align*}
\E\left[w(x,z)\cdot xx^\top \mid \mathcal E\right] &\succeq
%\frac{1}{1+(t\|u\| + z_\text{max})^2} \left(\PP[\mathcal E] \cdot I_d - \sqrt{\frac{2}{\pi}} t\cdot e^{-\frac{t^2}{2}} U \right) \\ &=
\frac{1}{1+(t\|u\| + z_\text{max})^2} \left((2\Phi(t)-1) \cdot I_d - \sqrt{\frac{2}{\pi}} t\cdot e^{-\frac{t^2}{2}} \cdot U \right) .
\end{align*}
The RHS is, up to scaling, a rank one perturbation of the identity matrix. Hence, its smallest singular value is
\begin{align*}
\sigma_\text{min}\left( (2\Phi(t)-1) I_d - t\sqrt{\frac{2}{\pi}} e^{-t^2/2} U \right) = 2\Phi(t)-1 - t\sqrt{\frac{2}{\pi}}  e^{-t^2/2}.
\end{align*}
The lower bound in \eqref{eq:E_wxx_bound} follows by picking $t=3$, as $2\Phi(3)-1 - 3\sqrt{\frac{2}{\pi}} e^{-9/2} > 24/25$.

Finally, the upper bound in \eqref{eq:E_wxx_bound} follows trivially by $0 \leq w(x,z) \leq 1$ and $\E\left[xx^\top\right] = I_d$.
\end{proof}

\begin{proof}[Proof of \cref{lem:Dt}]
Fix some $t$, and denote the iterates at time steps $t$ and $(t+1)$ by $\beta_1 = \beta_1^{(t)}$ and $\beta_1^+ = \beta_1^{(t+1)}$, respectively.
Further denote $D = D_t$.
According to \eqref{eq:IRLS_beta},
\begin{align*}
\beta_1^+ &= \left(\sum_{i=1}^n w_{i,1} x_i x_i^\top \right)^{-1} \left(\sum_{i=1}^n w_{i,1} x_i y_i \right) .
\end{align*}
Recall that the cluster assignment of a sample, $c^*_i \in \{1,2\}$, is distributed as Bernoulli with probabilities $p_1, p_2$, independently of the sample $x_i$.
Let $r(x,c^*,\epsilon; \beta_1) = \left|x^\top \beta_1 - y(c^*, \epsilon)\right|$ and $w(x,c^*,\epsilon; \beta_1) = 1/\left(1 + \eta r(x,c^*,\epsilon; \beta_1)^2/R\right)$ where $y(c^*, \epsilon) = x^\top \beta^*_{c^*} + \epsilon$.
For simplicity of notation, from now on we suppress the dependencies of $r, w$ and $y$ on $x, c^*, \epsilon$ and $\beta_1$.

By the weak law of large numbers, as $n\to \infty$, the terms $\left(\sum_{i=1}^n w_i x_i x_i^\top \right)^{-1}$ and $ \left(\sum_{i=1}^n w_i x_i y_i \right)$ tend to $\E \left[w\cdot x x^\top \right]^{-1}$ and $\E[ w\cdot x y]$, respectively, with the expectation taken over $x$, $c^*$ and $\epsilon$.
Hence, in the population setting, the update of $\beta_1$ takes the form
\begin{align*}
\beta_1^+ \stackrel{n\to\infty}{\rightarrow}
\E \left[w\cdot x x^\top \right]^{-1} \E[ w\cdot x y] .
\end{align*}
Observe that the true regression vector $\beta_1^*$ can be written as
\begin{align*}
\beta_1^*
= \left(\E \left[w\cdot x x^\top \right] \right)^{-1} \E\left[ w\cdot x x^\top\right]  \beta_1^*
= \left(\E \left[w\cdot x x^\top \right] \right)^{-1} \E\left[ w\cdot x x^\top \beta_1^*\right].
\end{align*}
Hence, the distance of the next iterate $\beta_1^+$ from $\beta_1^*$ satisfies
\begin{align}
\|\beta_1^+ - \beta^*_1\| &= \left\|\left(\E \left[w\cdot x x^\top \right] \right)^{-1} \E\left[ w\cdot x (y - x^\top \beta_1^*)\right] \right\|
\leq \frac{\left\| \E\left[ w\cdot x (y - x^\top \beta_1^*)\right] \right\|}{\sigma_\text{min}\left(\E \left[w\cdot x x^\top \right] \right)}.
\label{eq:next_error}
\end{align}

We first analyze the denominator of the RHS in \eqref{eq:next_error}.
Denote $\delta = \beta_1 - \beta_1^*$, and for $k=1,2$ let $w_{c^*=k}$ be the weight conditional on the response $y$ having been generated from the $k$-th component. 
By the independence of $c^*$ from $x$ and $\epsilon$,
\begin{align}\label{eq:E_wxx_bound_temp}
\E\left[w \cdot xx^\top\right] &\succeq p_1 \E\left[w_{c^*=1} \cdot xx^\top \right] .
\end{align}
Now, by definition,
\begin{align*}
w_{c^*=1} = \frac{1}{1 + \eta (x^\top \beta_1 - x^\top \beta_1^* - \epsilon)^2/R} = \frac{1}{1 + \eta (x^\top \delta - \epsilon)^2/R} .
\end{align*}
Invoking Eq.~\eqref{eq:E_wxx_bound} of \cref{lem:E_bounds} with $u = \sqrt{\eta/R} \delta$, $z = \sqrt{\eta/R} \epsilon$ and $z_\text{max} = \sqrt{\eta/R} \xi \|\Delta\|$ gives
\begin{align}
\sigma_\text{min}\left(\E\left[w_{c^*=1} \cdot xx^\top\right]\right)
\geq \frac{24}{25} \frac{1}{1 + \eta (3\|\delta\| + \xi\|\Delta\|)^2/R}
\geq \frac{24}{25} \frac{1}{1 + \tilde\eta (3D + \xi)^2}.  \label{eq:sigmaMin_E_w1xx_bound}
\end{align}
Plugging this into \eqref{eq:E_wxx_bound_temp} yields
\begin{align} \label{eq:sigmaMin_bound}
\sigma_\text{min}\left(\E\left[w \cdot xx^\top\right]\right) \geq \frac{24p_1}{25} \frac{1}{1 + \tilde\eta (3D+\xi)^2}.
\end{align}
Next, we upper bound the numerator of the RHS in \eqref{eq:next_error}.
Conditional on the response $y$ having been generated from the second component $(c^*=2)$, the weight satisfies
\begin{align*}
w_{c^*=2} = \frac{1}{1 + \eta (x^\top \beta_1 - x^\top \beta_2^* - \epsilon)^2/R} = \frac{1}{1 + \eta (x^\top (\delta+\Delta) - \epsilon)^2/R} 
\end{align*}
where $\Delta = \beta_1^* - \beta_2^*$.
By the triangle inequality,
\begin{align*}
\|\E\left[w\cdot x (y - x^\top\beta^*_1)\right]\| &= \|p_1 \E\left[w\cdot x (y - x^\top\beta_1^*) \mid c^*=1] + p_2 \E[w\cdot x (y - x^\top\beta_1^*) \mid c^*=2 \right]\| \nonumber \\
&\leq p_1\|\E\left[w_{c^*=1}\cdot \epsilon x \right]\| + p_2 \|\E\left[w_{c^*=2}\cdot x (\epsilon - x^\top \Delta) \right]\| \nonumber \\
&\leq p_1 \|\E\left[w_{c^*=1}\cdot \epsilon x \right]\| + p_2 \|\E\left[w_{c^*=2}\cdot \epsilon x \right]\| + p_2 \|\E\left[w_{c^*=2}\cdot xx^\top \right] \| \cdot \| \Delta\| .
\end{align*}
We now employ \cref{lem:E_bounds} to bound each of the three terms on the RHS.
The first term is bounded using \eqref{eq:E_wx_bound} with $u = \sqrt{\eta/R} \delta$, $z = \sqrt{\eta/R} \epsilon$ and $z_\text{max} = \sqrt{\eta/R} \xi \|\Delta\|$. The second term is bounded using \eqref{eq:E_wx_bound} with $u = \sqrt{\eta/R} (\delta+\Delta)$ and the same $z, z_\text{max}$. The third term is bounded using \eqref{eq:E_wxx_bound} with the same $u, z, z_\text{max}$.
Putting everything together, we obtain
\begin{align*}
\|\E\left[w\cdot x (y - x^\top\beta^*_1)\right]\| &\leq \sqrt\frac{2}{\pi} (p_1+p_2) \xi \|\Delta\| + p_2 \|\Delta\| = \left(\sqrt\frac{2}{\pi} \xi + p_2 \right) \|\Delta\|.
\end{align*}
Plugging this, together with \eqref{eq:sigmaMin_bound}, into \eqref{eq:next_error}, gives
\begin{align*}
\|\beta_1^+ - \beta_1^*\| &\leq \frac{25}{24p_1} \left(\sqrt{2/\pi} \xi + p_2\right) \left(1 + \tilde \eta (3D+\xi)^2 \right) \|\Delta\|
\leq \frac{25}{24p_1} \left(4\xi/5 + p_2\right) \left(1 + \tilde \eta (3D+\xi)^2 \right) \|\Delta\| \\
&= \frac{5}{6} (q-\tilde\xi) \left(1 + \tilde \eta (3D+\xi)^2 \right) \|\Delta\|,
\end{align*}
where in the last equality we used the definition $q = (5p_2 + 4\xi)/(4p_1) + \tilde\xi$ \eqref{eq:bounded_balance_thm}.
\end{proof}

\subsection{Proof of \cref{lem:P}}
\begin{proof}[Proof of \cref{lem:P}]
Suppose $\|u\|/\|\Delta\| \leq q - \tilde\xi$ and $\|x\|^2 \leq R$.
Let $x_u = x^\top u/\|u\|$, so that $x^\top u = x_u \|u\|$.
Recall that $\tilde\xi = \xi/\sqrt R$ \eqref{eq:xi}.
Since $q > \tilde\xi$,
\begin{align*}
\frac{\left|x^\top u + \epsilon \right|}{\|\Delta\|} = \frac{\left|x_u \|u\| + \epsilon\right|}{\|\Delta\|} &\leq |x_u| (q-\tilde\xi) + \xi \leq \sqrt R (q-\tilde\xi) + \xi = \sqrt{R}q.
\end{align*}
Hence,
\begin{align*}
w(x,\epsilon) &\geq \frac{1}{1 + \eta q^2 \|\Delta\|^2} > w_\text{th},
\end{align*}
where the second inequality follows by \eqref{eq:wth_assumption}. This proves the first part of the lemma.

Next, suppose $\|u\|/\|\Delta\| \geq 1 - q + \tilde\xi$ and $\|x\|^2 \leq R$.
Let $\delta = 1 - |x_u|/\sqrt R \geq 0$.
Then
\begin{align*}
\frac{\left|x^\top u + \epsilon \right|}{\|\Delta\|} &= \frac{\left|x_u \|u\| + \epsilon\right|}{\|\Delta\|} \geq |x_u| (1-q+\tilde\xi) - \xi \\
&= \sqrt{R}(1-\delta) (1-q) - \delta \xi \\
&= \sqrt{R} \left((1-\delta)(1-q) - \delta \xi / \sqrt R \right),
\end{align*}
so that
\begin{align*}
w(x,\epsilon) \leq \frac{1}{1 + \eta\left((1-\delta)(1-q) - \delta \xi / \sqrt R \right)^2 \|\Delta\|^2} .
\end{align*}
The RHS is monotonically increasing in $\delta$.
For $\delta = 0$, we get $w(x) \leq 1/\left(1 + \eta (1-q)^2 \|\Delta\|^2\right)$.
Hence, for any $\zeta > 0$, there exists a sufficiently small $\delta>0$ such that
\begin{align*}
w(x,\epsilon) - \frac{1}{1 + \eta (1-q)^2 \|\Delta\|^2} < \zeta .
\end{align*}
Let $\zeta = w_\text{th} - 1/\left(1 + \eta(1-q)^2\|\Delta\|^2\right)$.
By \eqref{eq:wth_assumption}, $\zeta > 0$. Hence, there exists $\delta>0$ such that
\begin{align*}
w(x,\epsilon) - \frac{1}{1 + \eta (1-q)^2 \|\Delta\|^2} < w_\text{th} - \frac{1}{1 + \eta(1-q)^2\|\Delta\|^2},
\end{align*}
or, equivalently, $w(x,\epsilon) < w_\text{th}$.
The probability $P$ for this event satisfies
\begin{align*}
P = \PP[1 - |x_u|/\sqrt{R} \leq \delta \,\mid\, \|x\|^2 \leq R] = \PP[x_u^2 \geq (1-\delta)^2 R \,\mid\, \|x\|^2 \leq R] > 0.
\end{align*}
\end{proof}

%%%%%%%%%%%%%%%%%%%%%%%%%%%%%
\section{Additional Simulation and Experimental Details}\label{sec:additional_details}
All algorithms get as input a maximal number of iterations, and \MIRLST and \GD have additional parameters. The maximal number of iterations in \MIRLS, \MIRLST, \AltMin and \EM was set to $10^3$, and to $10^5$ in \GD.
The parameter $\rho$ of \MIRLS and \MIRLST was fixed at the value of $1$ in synthetic simulations and $2$ in real-world experiments.
To tune $\eta$ and $w_\text{th}$ of \MIRLST and the step size $\eta_\text{GD}$ of \GD, we run each simulation and experimental setting with a different set of values, and choose the best values out of 10 repetitions. The allowed values were: $\eta = \sqrt{\Phi^{-1}(0.75) / \nu} = \sqrt{0.6745 / \nu}$ where $\nu \in \{0.1, 0.5, 1, 2\}$, $w_\text{th} \in \{0.01, 0.1, 0.5, 0.75\}$, and $\eta_\text{GD} \in \{10^{-5}, 5\cdot 10^{-4}, 10^{-4}, \ldots, 5\cdot 10^{-1}, 10^{-1}\}$. In the untuned version of \MIRLS, we used the fixed values $\nu = 0.5$ and $w_\text{th} = 0.01$ for simulations, and $\nu = 1, w_\text{th} = 0.01$ for experiments.
\EM was initialized with noise levels $\sigma_1^{(0)}, \ldots, \sigma_K^{(0)}$ set to one as in \cite{diamandis2021wasserstein}, and with uniform mixture proportions $p^{(0)} = (1/K, \ldots, 1/K)$. We remark that initializing \EM with the exact noise levels does not improve its performance. In addition, as discussed the main text, \EM hardly improves given prior information of the true proportions $p$.

In all algorithms, we employed the same following stopping criterion: if the estimate does not change much between subsequent iterations,
\begin{align*}
\frac{\sum_{k=1}^K \|\beta_k^{(t)} - \beta_k^{(t-1)}\|^2}{\sum_{k=1}^K \|\beta_k^{(t)}\|^2} < \delta^2
\end{align*}
where $\delta$ is a tolerance constant, the algorithm is stopped.
The tolerance $\delta$ is set to $\tilde \delta \equiv \min(1, \max\{0.01 \sigma, 2 \epsilon_\text{mp}\})$ in \MIRLS, \AltMin and \EM, and to $0.01 \tilde\delta$ in \GD.

\paragraph{Additional simulation details.}
As described in the main text, the failure probability is defined as the percentage of runs with $F_\text{latent} > F_\text{th} \equiv 2\sigma$. Let us explain the choice of threshold $2\sigma$. As the numerical factor $2$ is arbitrary and the results are insensitive to its choice, we focus on the scaling with $\sigma$.
In well-defined standard linear regression (namely, with sample size above the information limit), the OLS error scales as $\sigma \sqrt{d/n}$. This quantity ignores the condition number of $XX^\top$, as it is close to one following our normality assumption \eqref{eq:gaussian_assumption}. However, scaling $F_\text{th}$ with $\sigma \sqrt{d/n}$ would make the failure probability invariant to the sample size $n$. Since we want to see how the different methods improve with increasing sample sizes, we set $F_\text{th}$ to be proportional to the OLS error at the information limit $n = d$, so that it scales as $\sigma$.
In MLR, our error measure \eqref{eq:objective_latent} scales as $(\sigma/K) \left(\sqrt{d/n_1} + \ldots \sqrt{d/n_K}\right)$, where $n_k$ is the number of sample that belong to component $k$. At the information limit $n = d/\text{min}(p) \equiv d/p_K$, we get the scaling $(\sigma/K) \cdot \sqrt{p_K} \left(1/\sqrt{p_1} + \ldots + 1/\sqrt{p_K}\right)$. This quantity is upper bounded by $\sigma$. Moreover, for all mixture proportions considered in our paper, this quantity is lower bounded by $\sigma/2$. Hence, also in MLR, the error threshold $F_\text{th}$ scales with $\sigma$.

We remark that empirically, this definition of $F_\text{th}$ agrees with the observed critical sample sizes. At a critical sample size, the median error undergoes a phase transition from failed to successful recovery: e.g., in \cref{fig:varying_n_prob}, the critical sample size is $n \approx 3500$ for \MIRLS and $n\approx 8000$ for \EM and \GD. In the various figures, the failure probability at the critical sample size is roughly $50\%$, implying the consistency of the definition of $F_\text{th}$; compare, for example, the two panels in \cref{fig:varying_n_K4} or in \cref{fig:varying_n_K5_moderateImbalance}.

\paragraph{Additional experimental details.}
In the real-data experiments (\cref{sec:experiments}), we ignore nominal fields and consider only numeric and ordinal ones. Nominal fields with two categories are considered ordinal. \Cref{table:datasets} details the number of samples and the dimension in each dataset.
The data is mean-centered and normalized as follows: $x'_i \leftarrow (x'_i - \bar x'_i) / \|x'_i - \bar x'_i\|$ and $y \leftarrow (y - \bar y) / \|y - \bar y\|$, where $x'_i$ is the $i$-th column of $X$ and $\bar u$ represents the mean of a vector $u$. In all datasets except fish market, a bias (intercept) term was added. In fish market, such a term makes no physical sense, as a fish of zero dimension must weight zero.

Finally, we remark that several authors proposed tensor-based initialization methods for MLR \cite{chaganty2013spectral,sedghi2016provable,yi2016solving,zhong2016mixed}.
In this work, we focus on random initialization, as it is more frequently used in real-data applications.
For completeness, we also run the initialization procedure proposed in \cite{zhong2016mixed} using the code generously provided to us by the authors.
However, in the settings explored in this paper with limited number of samples, this initialization did not seem to be more accurate than a random one and did not improve the recovery results of the algorithms.

\renewcommand{\arraystretch}{1.1}
\begin{table*}[b]
\begin{minipage}{\textwidth} % minipage, o/w footnotes dont work. But now table must be in bottom!
\renewcommand\thempfootnote{\arabic{mpfootnote}}
\caption{The number of samples $n$ and the dimension $d$ in each of the datasets, ignoring NaN samples and nominal fields.}
\centering
 \label{table:datasets}
 \begin{tabular}{c  c  c}
 \hline
 dataset name & number of samples $n$ & dimension $d$ \\
  \hline
 medical insurance cost\footnote{\url{http://www.kaggle.com/datasets/mirichoi0218/insurance}} & 1338 & 7 \\
  \hline
 red wine quality\footnote{\url{http://www.kaggle.com/datasets/uciml/red-wine-quality-cortez-et-al-2009}} & 1599 & 12 \\
 \hline
 WHO life expectancy\footnote{\url{http://www.kaggle.com/datasets/kumarajarshi/life-expectancy-who}} & 1649 & 21 \\
 \hline
 fish market\footnote{\url{http://www.kaggle.com/datasets/aungpyaeap/fish-market}} & 159 & 5 \\
 \hline
 \end{tabular}
 \end{minipage}
\end{table*}

%%%%%%%%%%%%%%%%%%%%%%%%%%%%%
\section{Additional Simulation Results}\label{sec:additional_simulations}
In \cref{fig:varying_n} of the main text, we showed the failure probability of the algorithms as a function of the sample size $n$. \Cref{fig:varying_n_time} depicts the median runtime (in seconds) of the algorithms in the same simulation. Except for \GD, the different algorithms have comparable runtimes. \Cref{fig:varying_n_prob} shows the median error of the algorithms.

The results for mixtures with $K=4$ and $K=5$, including median error, failure probability and runtimes, are depicted in \cref{fig:varying_n_K4,fig:varying_n_K5_moderateImbalance,fig:varying_n_K45_time}. Specifically, \cref{fig:varying_n_K5_moderateImbalance} and \cref{fig:varying_n_K45_time}(right) show the results for a moderately imbalanced mixture. In all these settings, \MIRLS significantly outperforms the compared methods.

\begin{figure}[t]
\centering
	\subfloat{
		\includegraphics[width=0.5\linewidth]{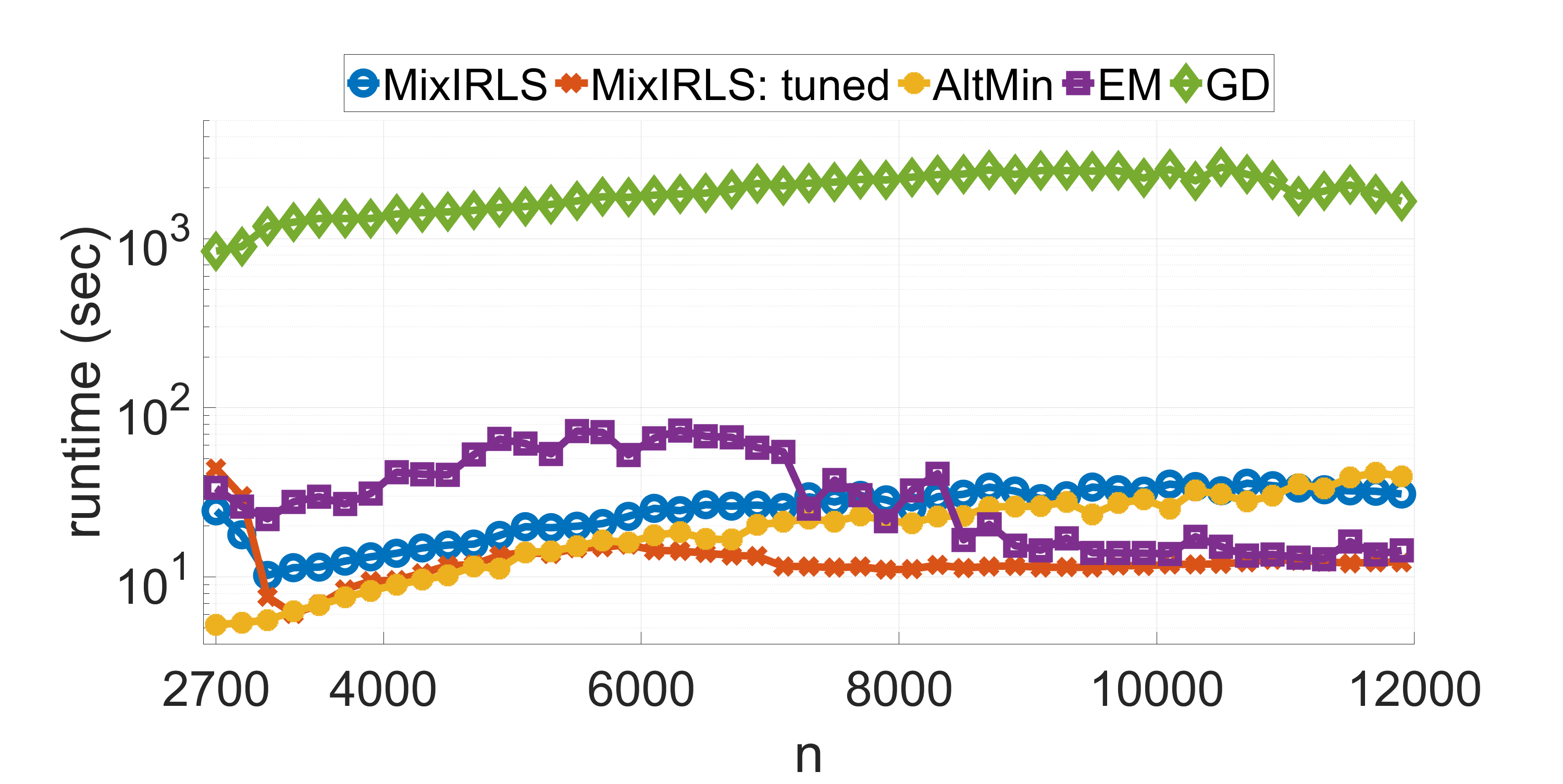}
	}
	\subfloat{
		\includegraphics[width=0.5\linewidth]{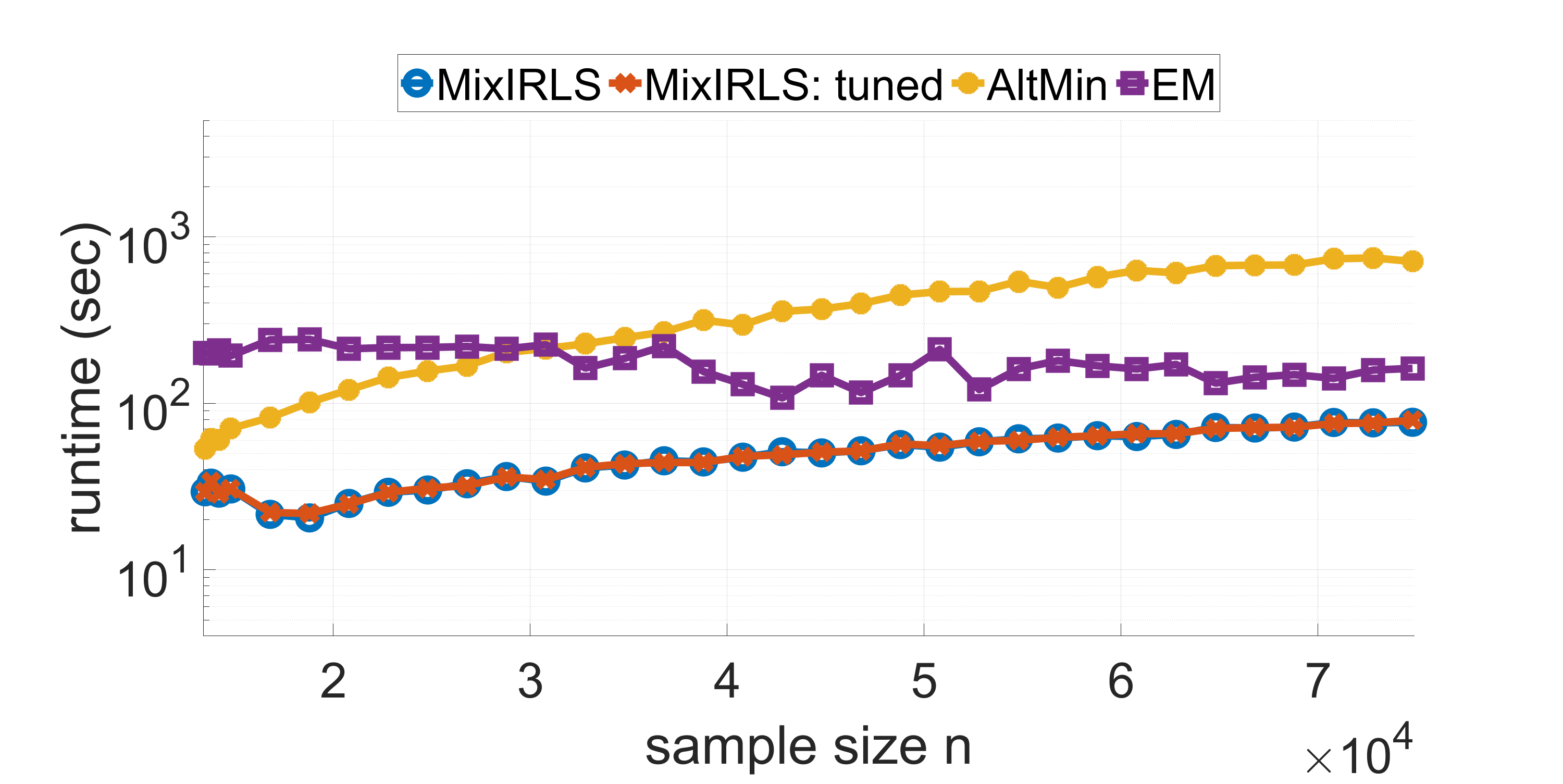}
	}
	\caption{Median runtime comparison in the setting of \cref{fig:varying_n}.}
\label{fig:varying_n_time}
\end{figure}

\begin{figure}[t]
\centering
	\subfloat{
		\includegraphics[width=0.5\linewidth]{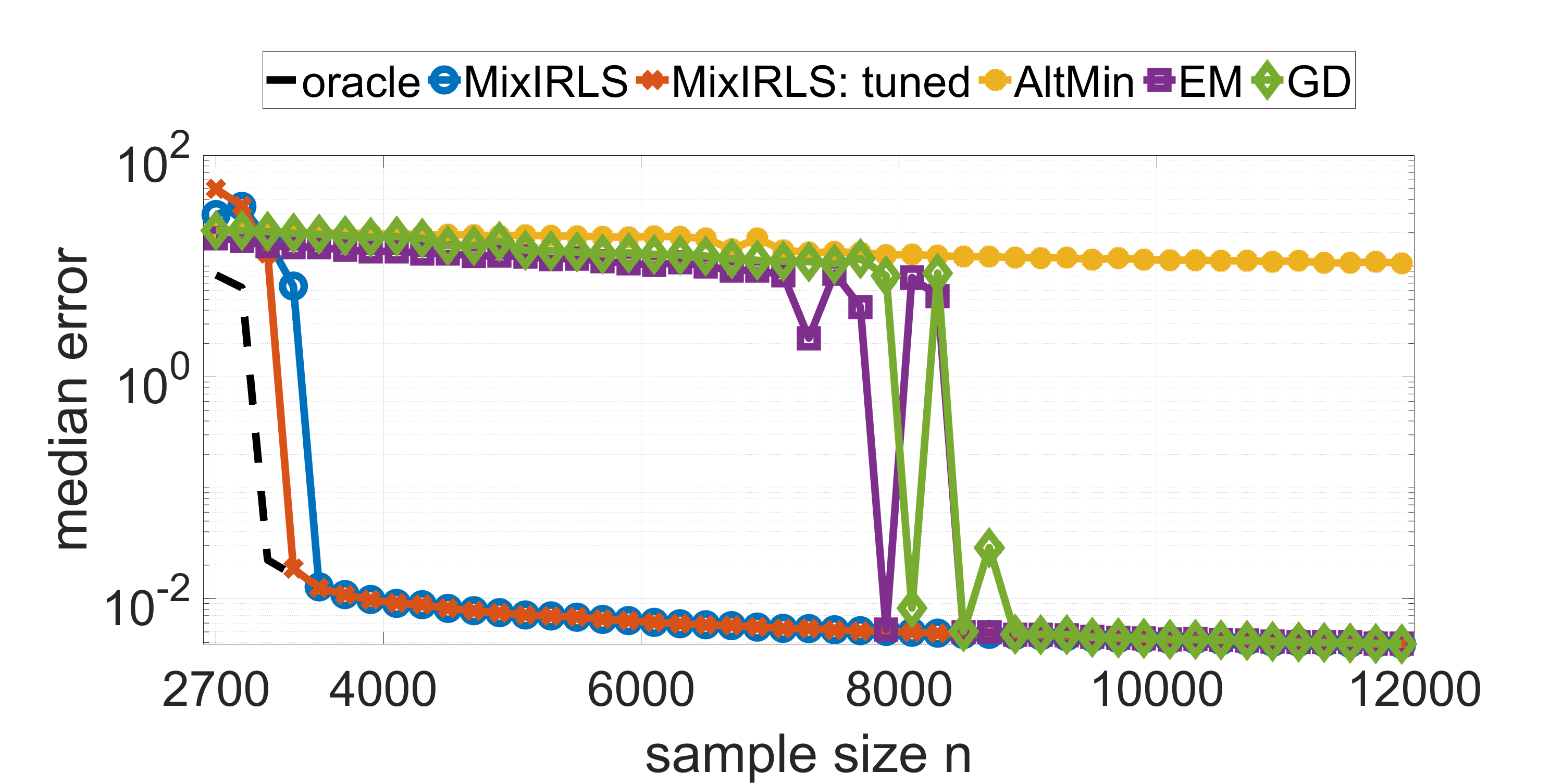}
	}
	\subfloat{
		\includegraphics[width=0.5\linewidth]{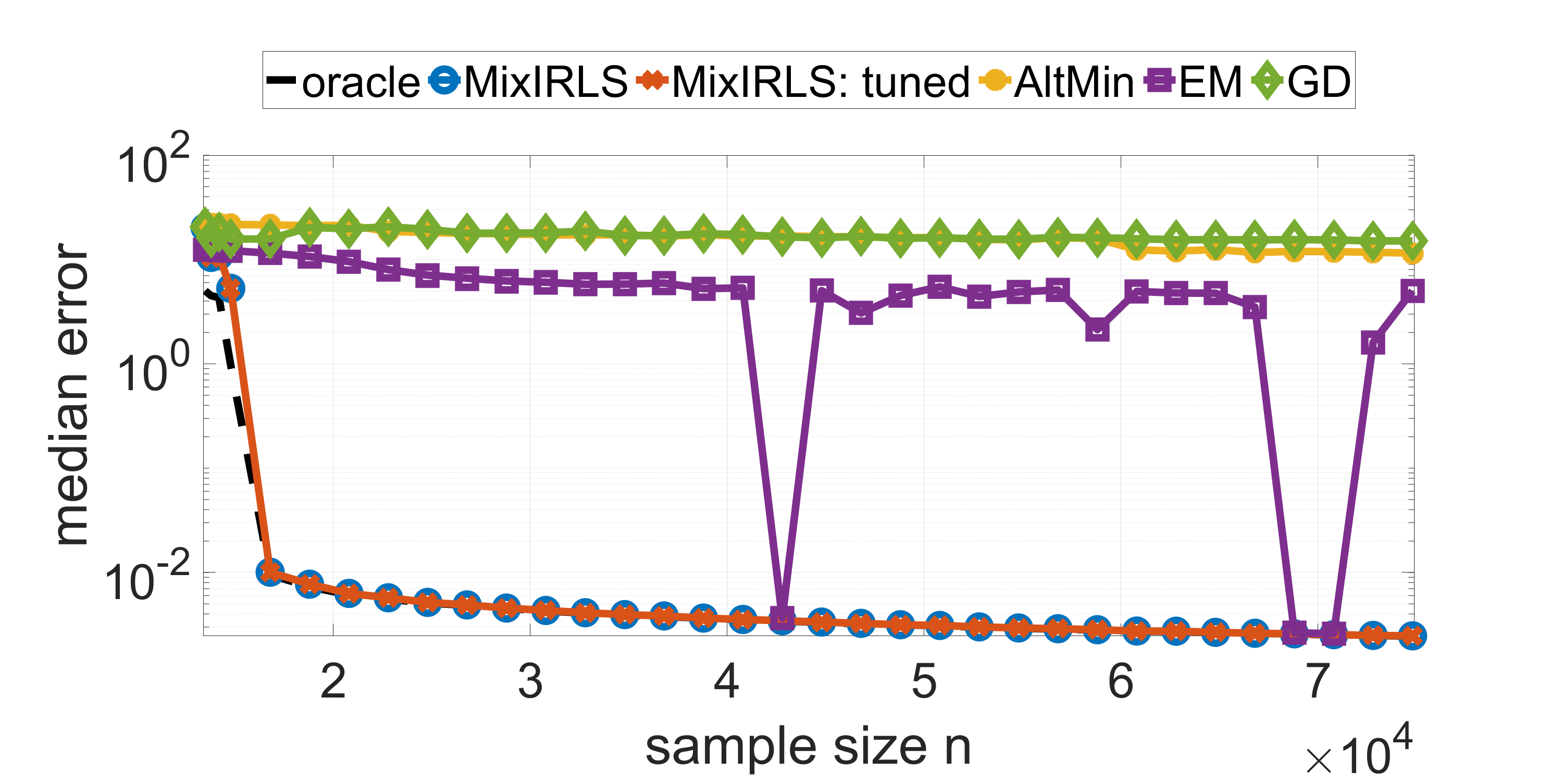}
	}
	\caption{Same setting as in \cref{fig:varying_n}, but with median errors instead of failure percentage.}
\label{fig:varying_n_prob}
\end{figure}

\begin{figure}[t]
\centering
	\subfloat{
		\includegraphics[width=0.5\linewidth]{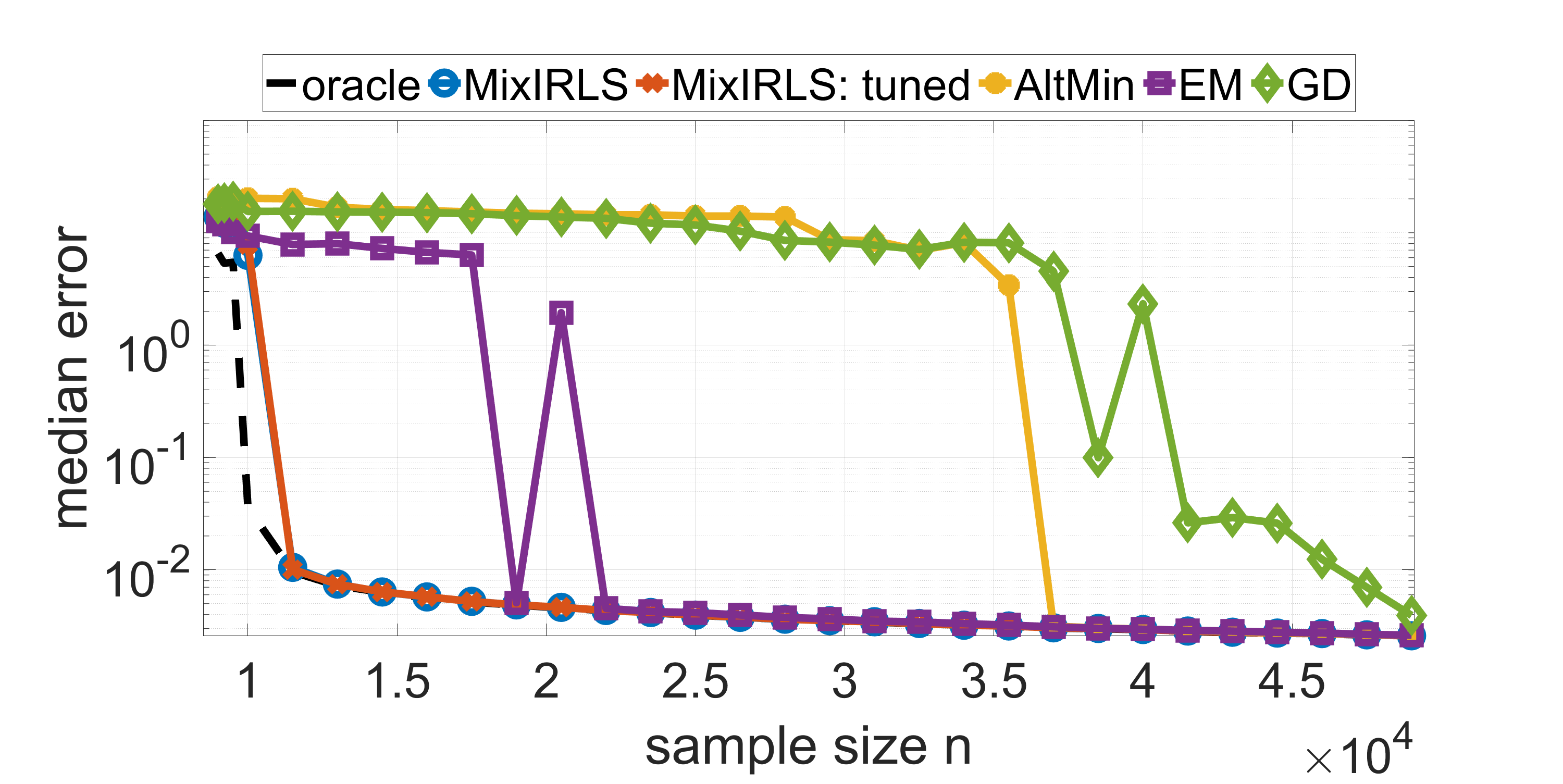}
	}
	\subfloat{
		\includegraphics[width=0.5\linewidth]{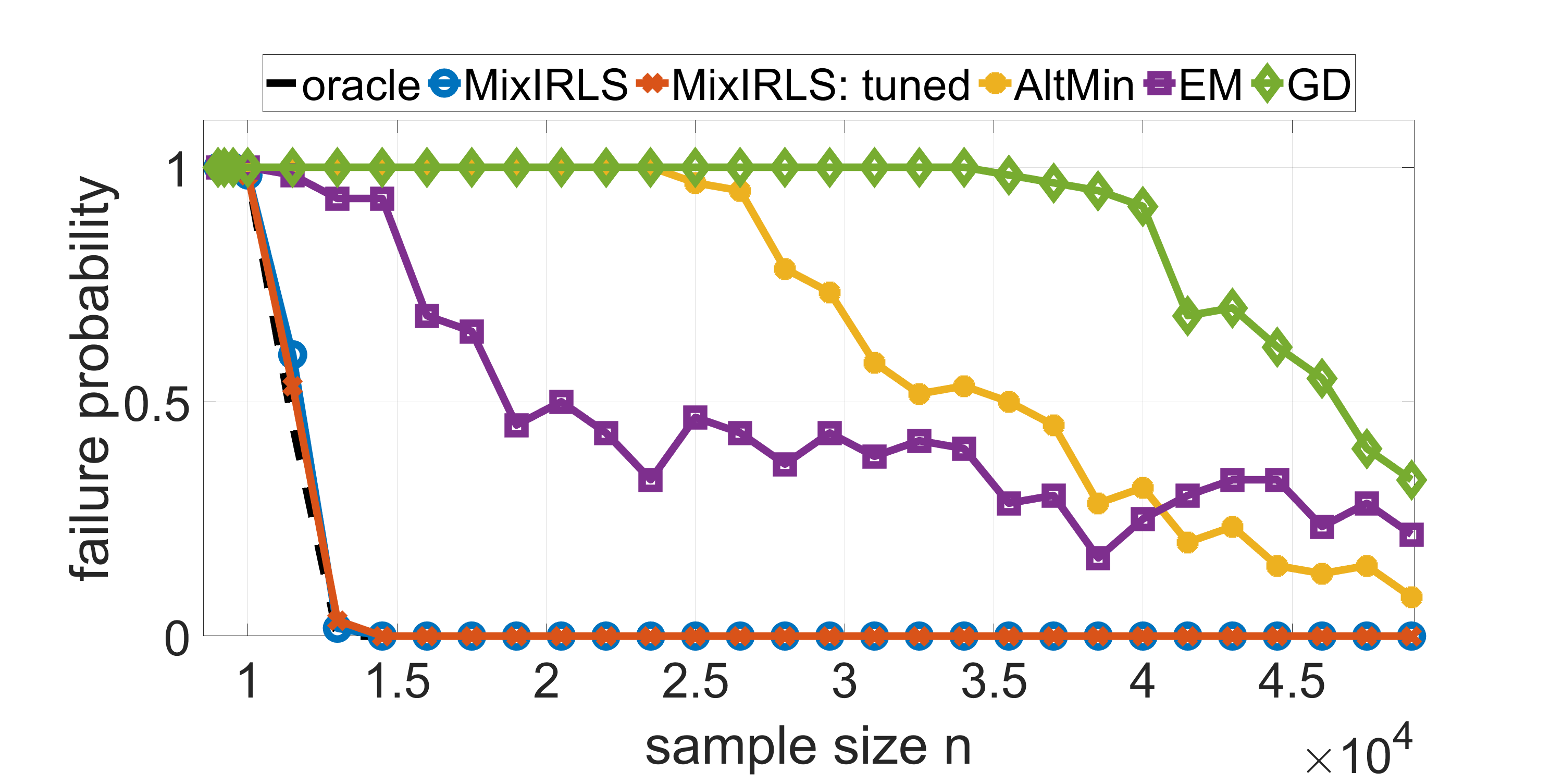}
	}
	\caption{Same setting as in \cref{fig:varying_n}, but with $K=4$ and $p = (0.67, 0.2, 0.1, 0.03)$.}
\label{fig:varying_n_K4}
\end{figure}

\begin{figure}[t]
\centering
	\subfloat{
		\includegraphics[width=0.5\linewidth]{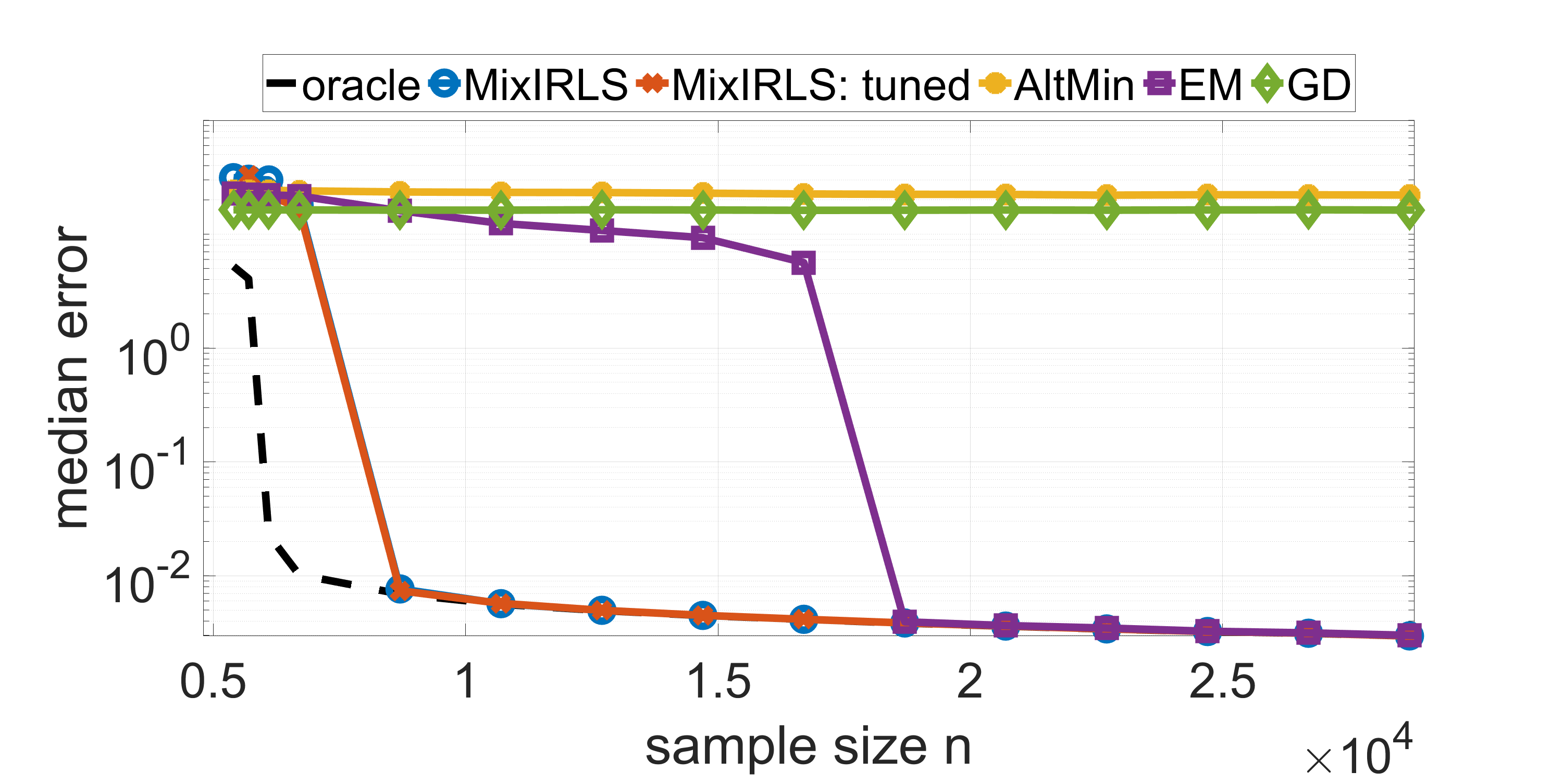}
	}
	\subfloat{
		\includegraphics[width=0.5\linewidth]{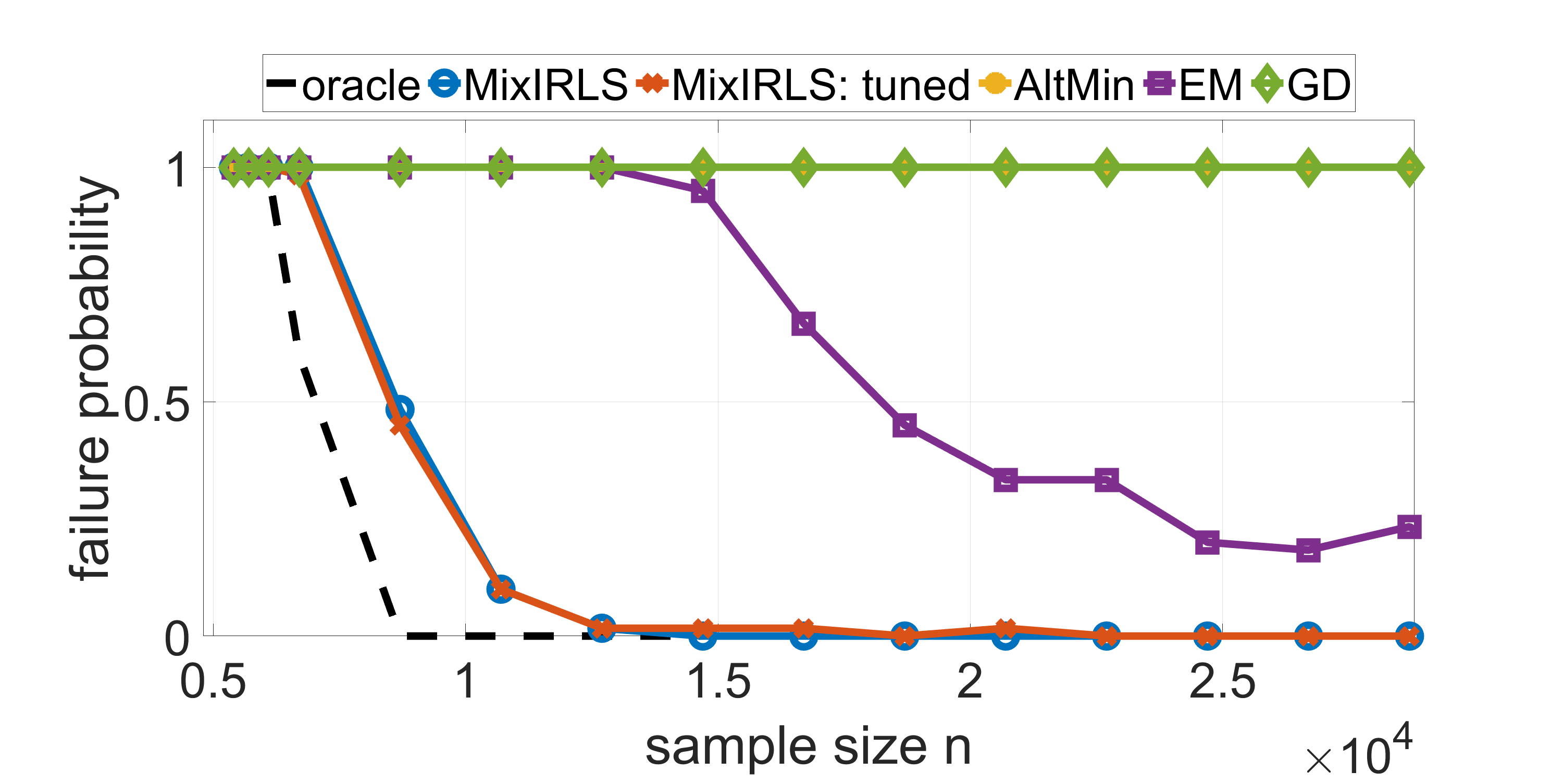}
	}
	\caption{Same setting as in \cref{fig:varying_n}, but with $K=5$ and $p = (0.4, 0.3, 0.15, 0.1, 0.05)$.}
\label{fig:varying_n_K5_moderateImbalance}
\end{figure}

\begin{figure}[t]
\centering
	\subfloat{
		\includegraphics[width=0.5\linewidth]{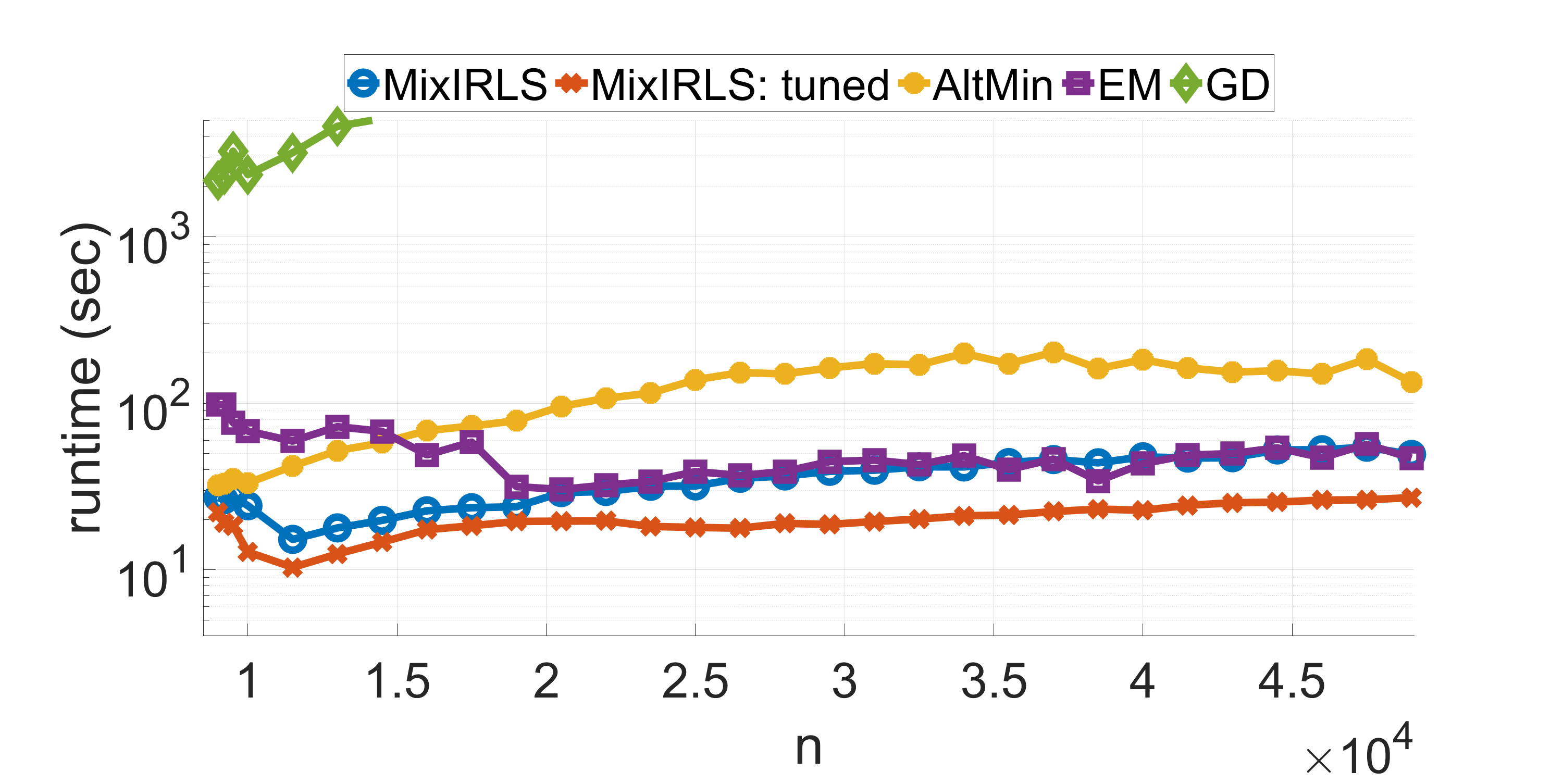}
	}
	\subfloat{
		\includegraphics[width=0.5\linewidth]{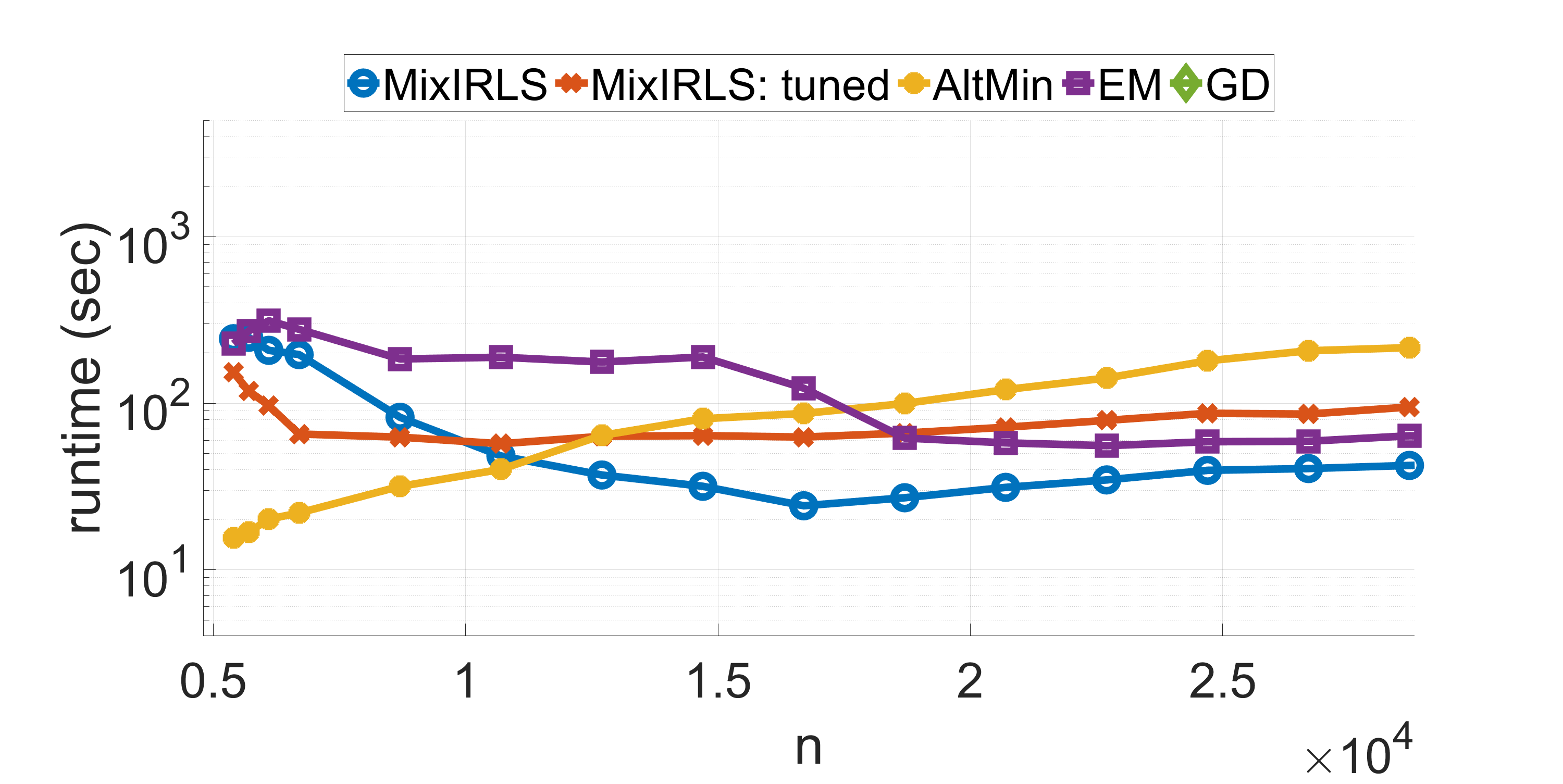}
	}
	\caption{Median runtimes comparison in the setting of \cref{fig:varying_n_K4} (left panel) and of \cref{fig:varying_n_K5_moderateImbalance} (right panel).}
	
\label{fig:varying_n_K45_time}
\end{figure}

A common approach to study performance of algorithms is a study of their phase transition from failure to success as a function of sample size and dimension.
\Cref{fig:heatmap} depicts the results of such a simulation, conducted in a noiseless setting $(\sigma = 0)$. The simulation covers a broad range of values on a 2D grid for both the sample size and the dimensions. As in previous simulations, \MIRLS recovers the linear models very close to the information limit, with negligible differences from the oracle's performance. In contrast, the other methods need much larger samples sizes to succeed in the recovery.
In this simulation, we additionally included our implementation of the \ILTS algorithm \cite{shen2019iterative}. In contrast to the other algorithms, \ILTS gets as input estimates for the mixture proportions $p_k$. In \cref{fig:heatmap}, we show the performance of an \texttt{ILTS:latent} version, which is supplied with the exact mixture proportions (this information is inaccessible to the other algorithms except for the oracle).

\begin{figure}[t]
\centering
	\subfloat{
		\includegraphics[width=0.5\linewidth]{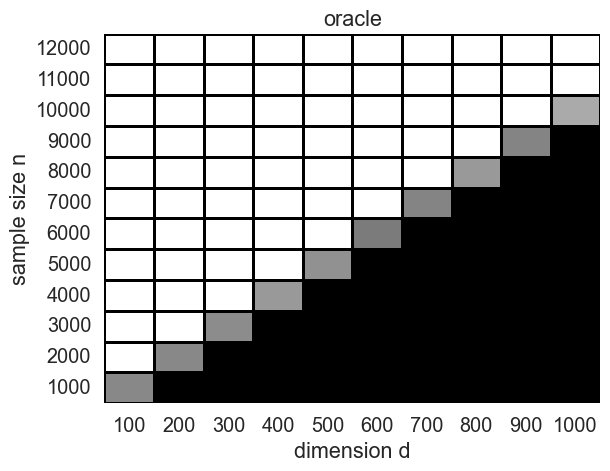}
	}
	\subfloat{
		\includegraphics[width=0.5\linewidth]{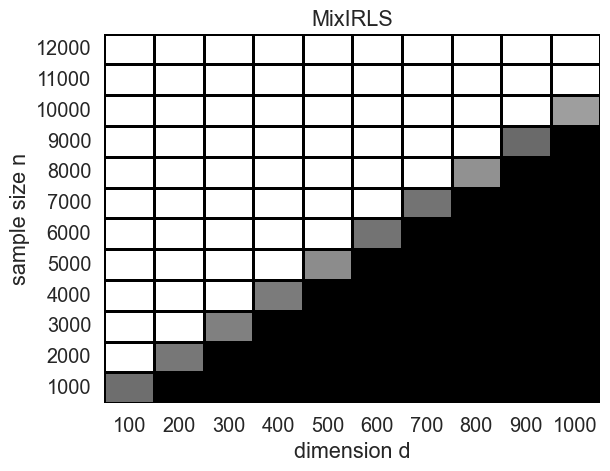}
	}
	\\
	\subfloat{
		\includegraphics[width=0.5\linewidth]{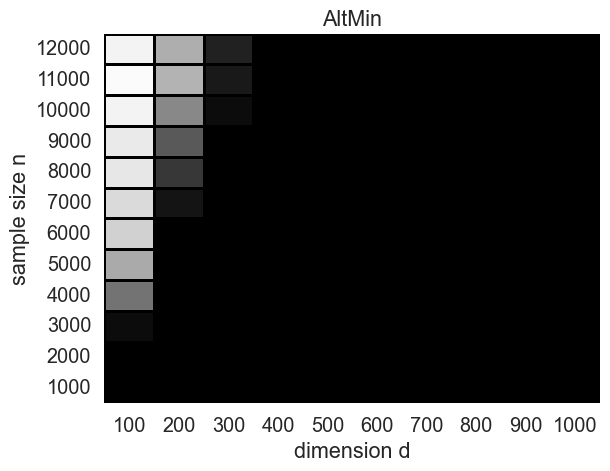}
	}
	\subfloat{
		\includegraphics[width=0.5\linewidth]{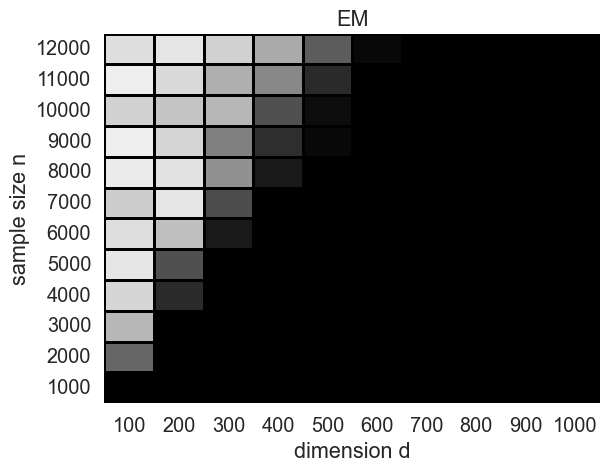}
	}
	\\
	\subfloat{
		\includegraphics[width=0.5\linewidth]{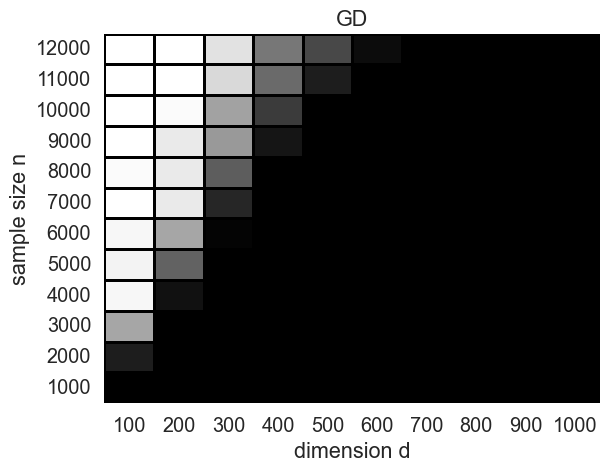}
	}
	\subfloat{
		\includegraphics[width=0.5\linewidth]{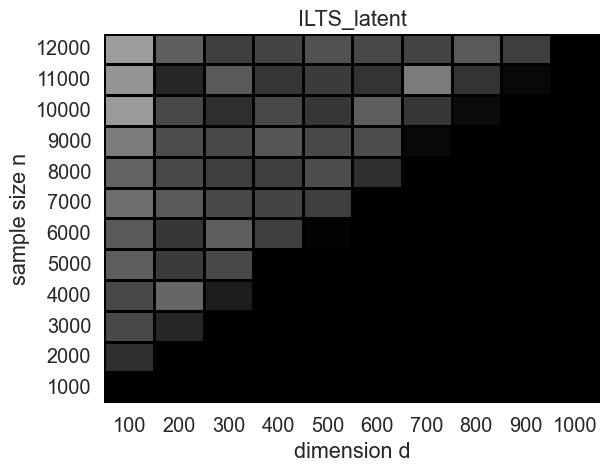}
	}
	\caption{The success percentage of various MLR algorithms, compared to an oracle, as a function of the dimension $d$ and the sample size $n$, with $K=3$, $p = (0.7, 0.2, 0.1)$ and no noise $\sigma=0$. White cell means 100\% success. Comparison of the top two panels show that \MIRLS recovery is nearly as good as that of the oracle, whereas other methods require many more samples to succeed. The result for \MIRLST is very similar to that of \MIRLS, and is thus omitted.}
\label{fig:heatmap}
\end{figure}

Next, we further explore the robustness of the algorithms beyond \cref{fig:robustness} in two ways.
First, \cref{fig:robustness} showed only the median error of the algorithms. In \cref{fig:robustness_prob}, we show the corresponding failure probability.
Second, \cref{fig:robustness} showed only the robustness to outliers and to overparameterization. In \cref{fig:noise}, we show the stability of the algorithms to varying noise levels.

\begin{figure}[t]
\centering
	\subfloat{
		\includegraphics[width=0.5\linewidth]{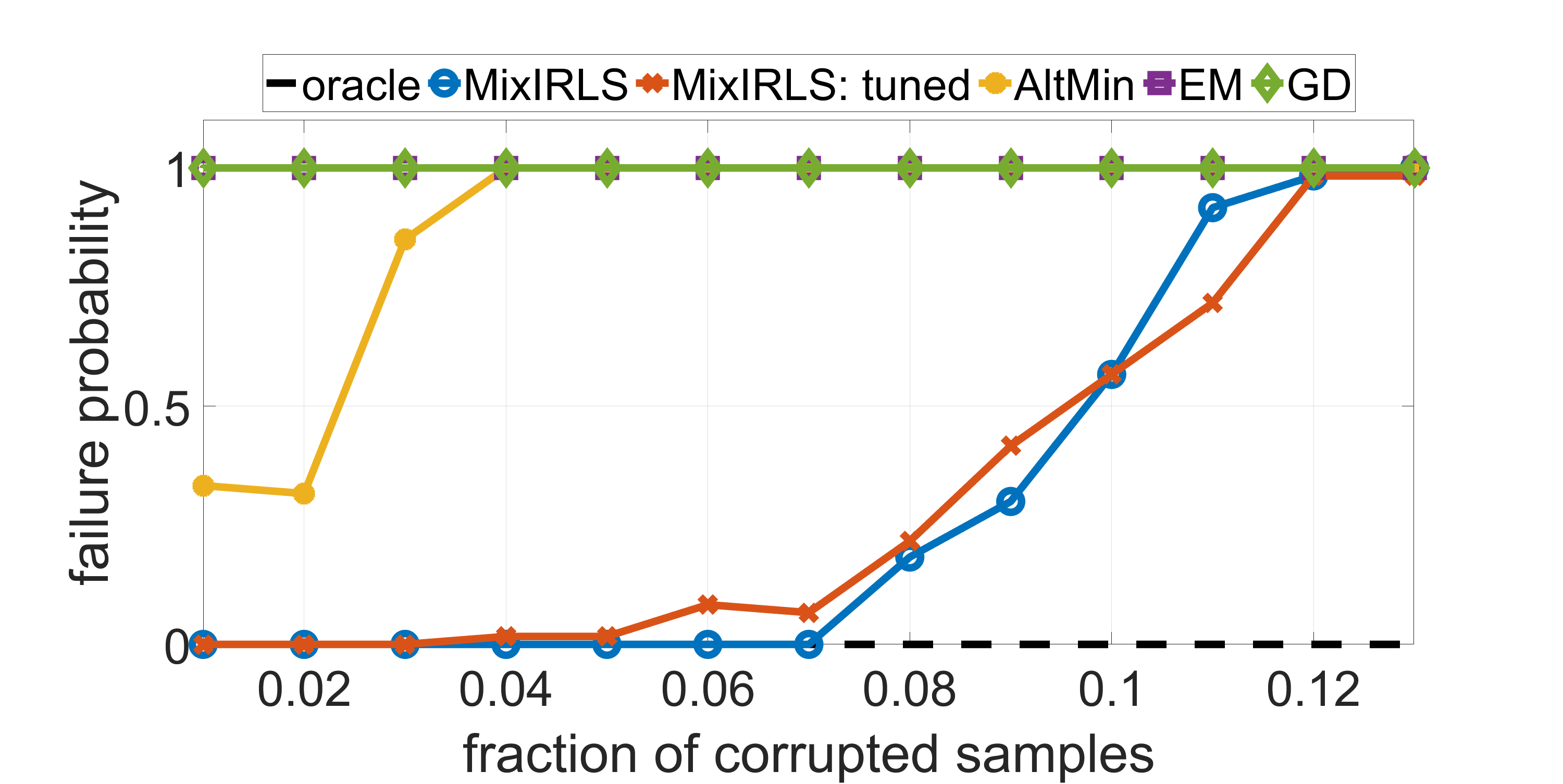}
	}
	\subfloat{
		\includegraphics[width=0.5\linewidth]{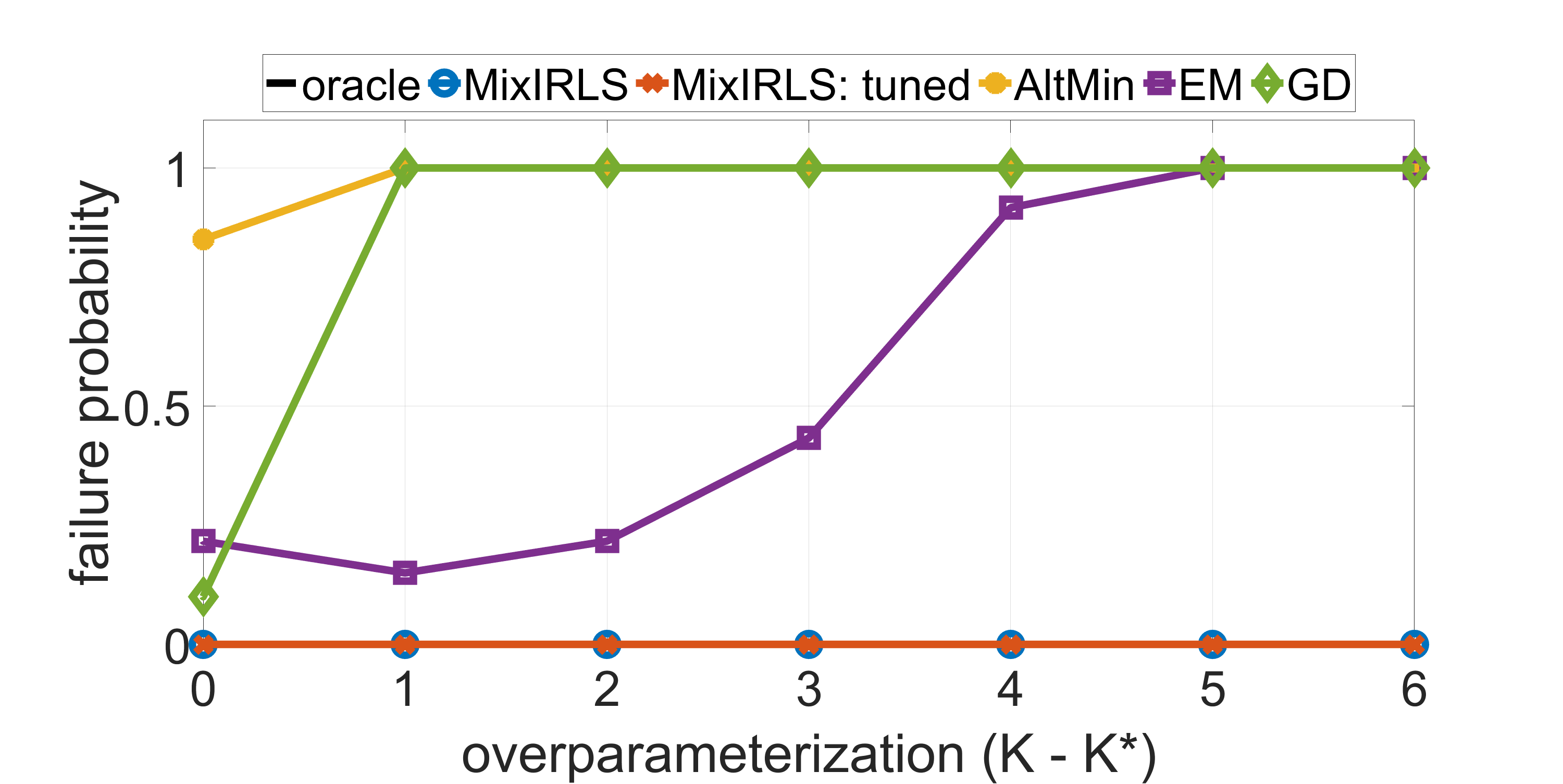}
	}
	\caption{Same setting as in \cref{fig:robustness}, but with y-axis showing the failure percentage instead of the median error.}
\label{fig:robustness_prob}
\end{figure}

\begin{figure}[t]
\centering
	\subfloat{
		\includegraphics[width=0.5\linewidth]{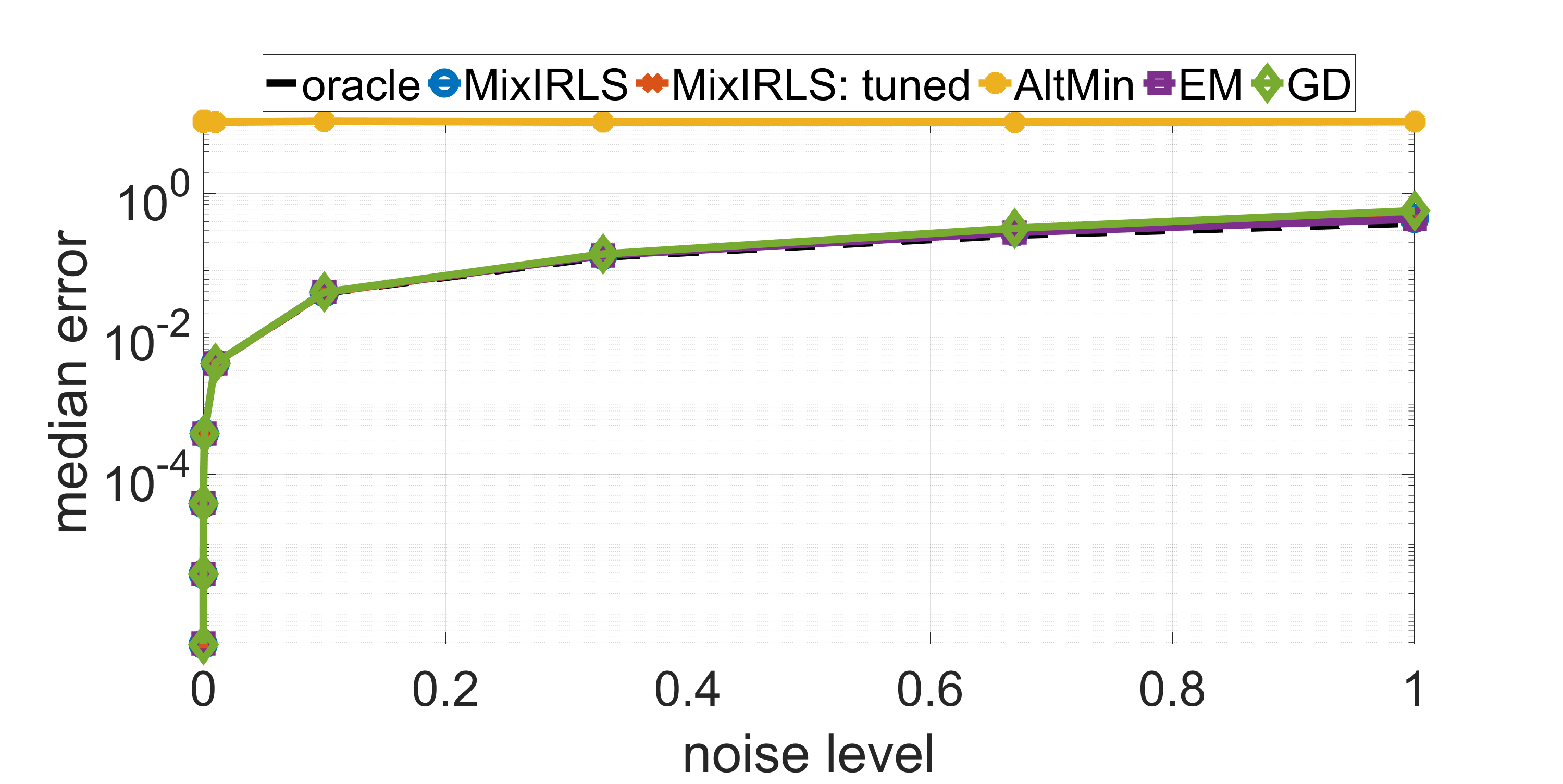}
	}
	\subfloat{
		\includegraphics[width=0.5\linewidth]{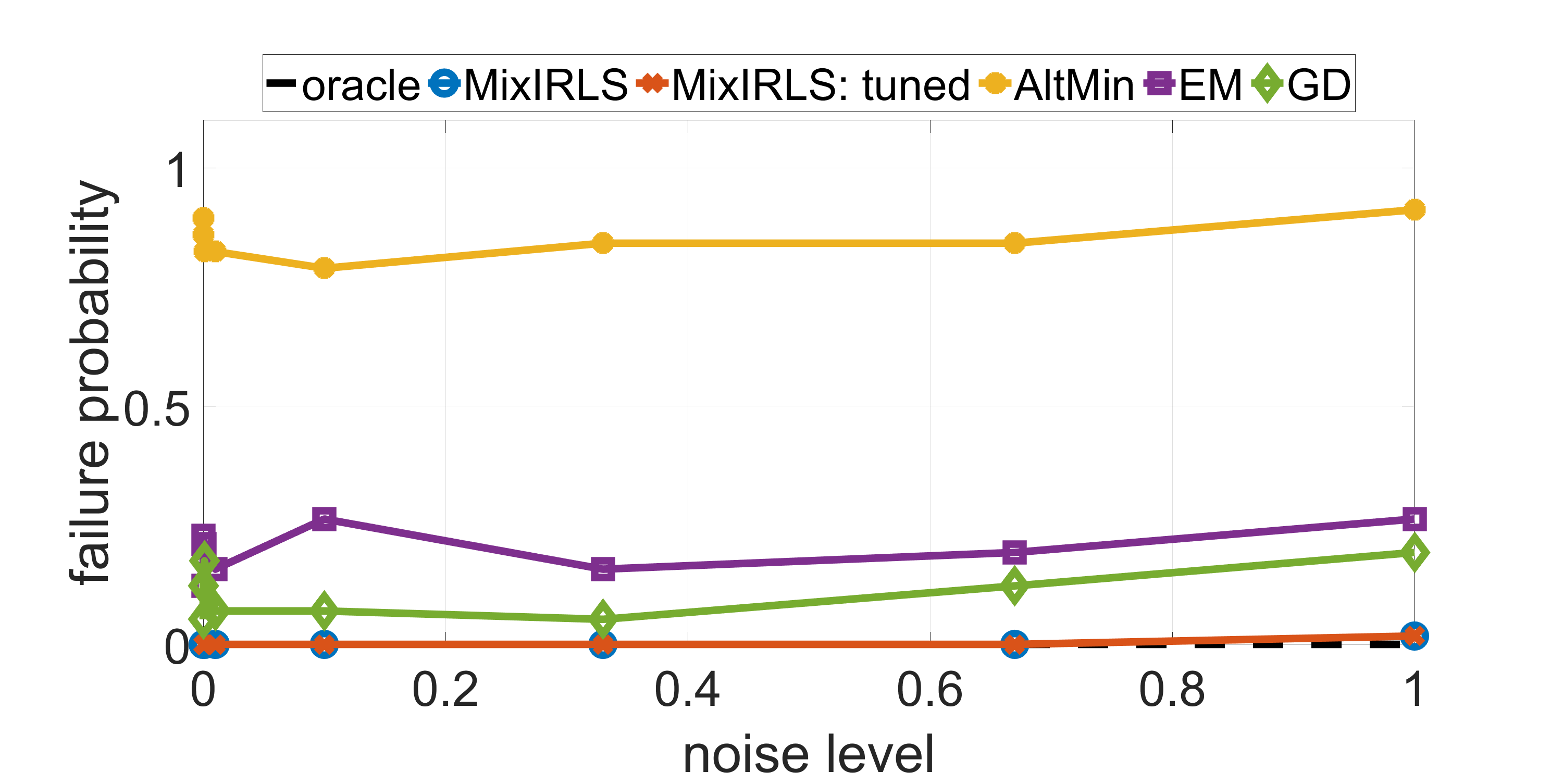}
	}
	\caption{Comparison of the stability of several MLR algorithms to additive Gaussian noise of mean 0 and varying standard deviation $\sigma \in [0,1]$, for the same values of $d, K$ and $p$ as in \cref{fig:robustness}.}
\label{fig:noise}
\end{figure}

Finally, we compare the performance of the algorithms on a perfectly balanced mixture, with $p = (1/3, 1/3, 1/3)$. The results, depicted in \cref{fig:balanced_varying_n,fig:balanced_corruptions,fig:balanced_overparam,fig:balanced_noise},
show that in this setting \MIRLS loses its advantage and performs comparably to other methods in terms of sample complexity, but holds its lead in terms of robustness to outliers and to overparameterization.

\begin{figure}[t]
\centering
	\subfloat{
		\includegraphics[width=0.5\linewidth]{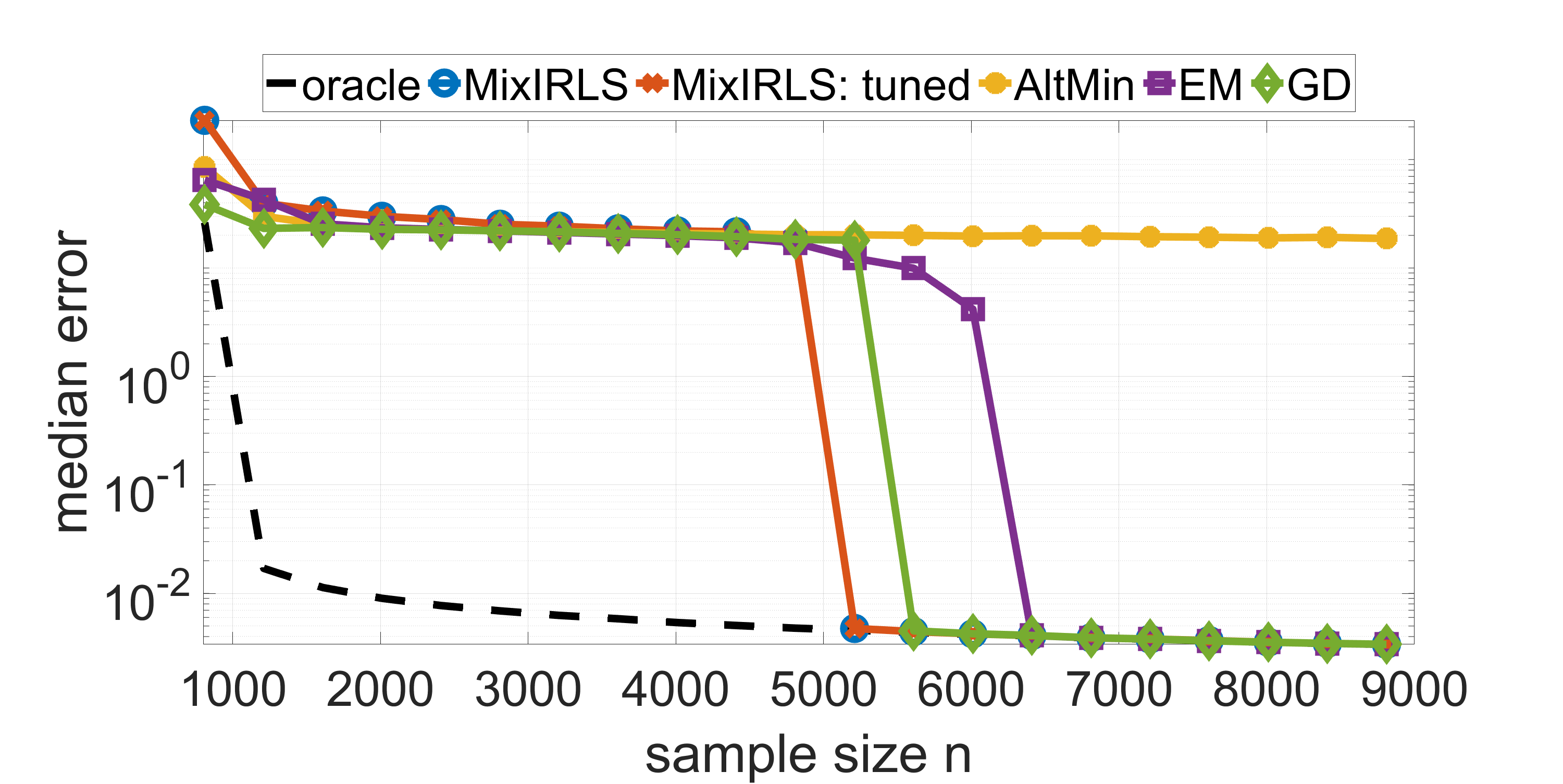}
	}
	\subfloat{
		\includegraphics[width=0.5\linewidth]{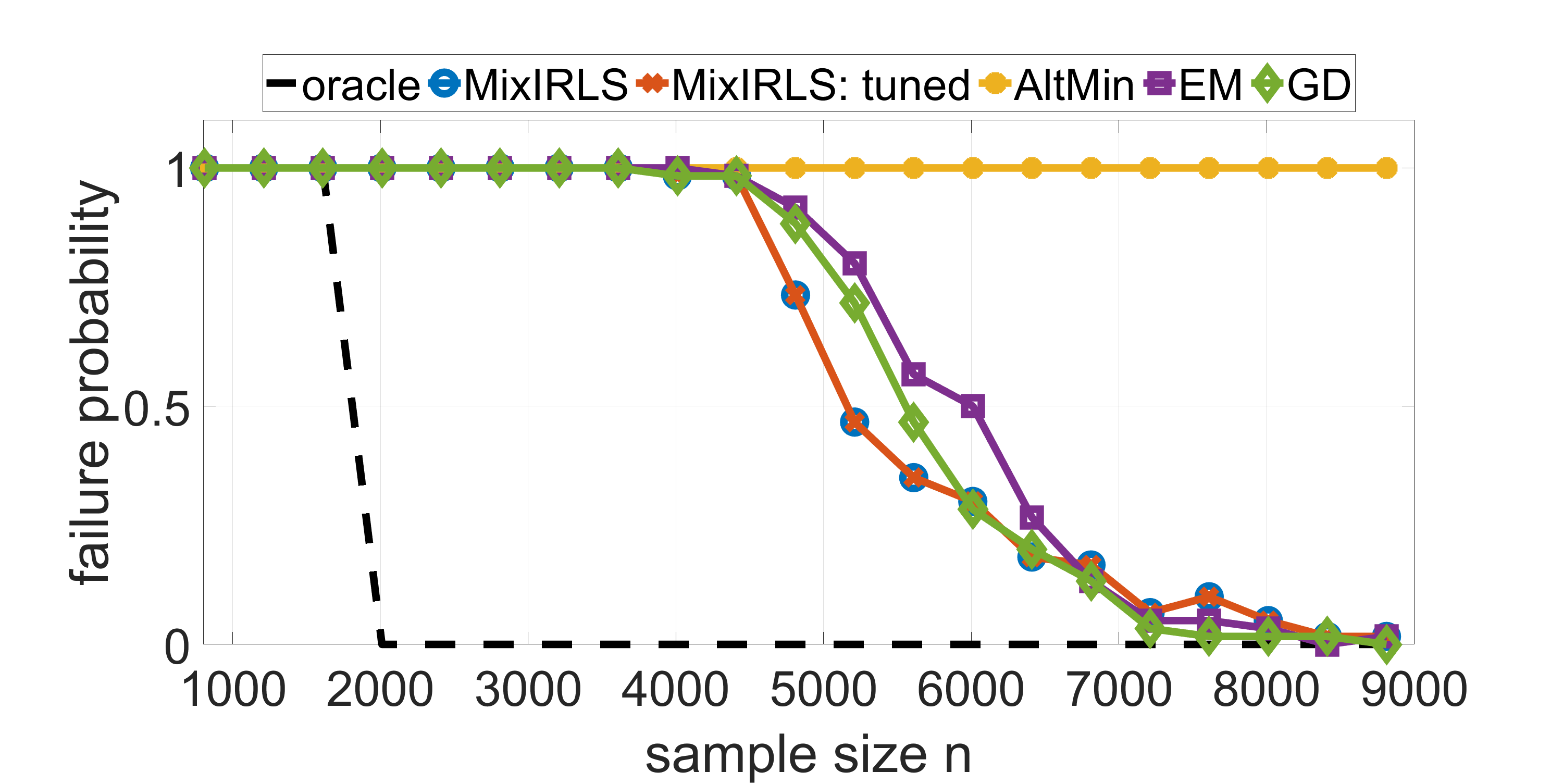}
	}
	\caption{Same setting as in \cref{fig:varying_n}, but with $p = (1/3, 1/3, 1/3)$.}
\label{fig:balanced_varying_n}
\end{figure}

\begin{figure}[t]
\centering
	\subfloat{
		\includegraphics[width=0.5\linewidth]{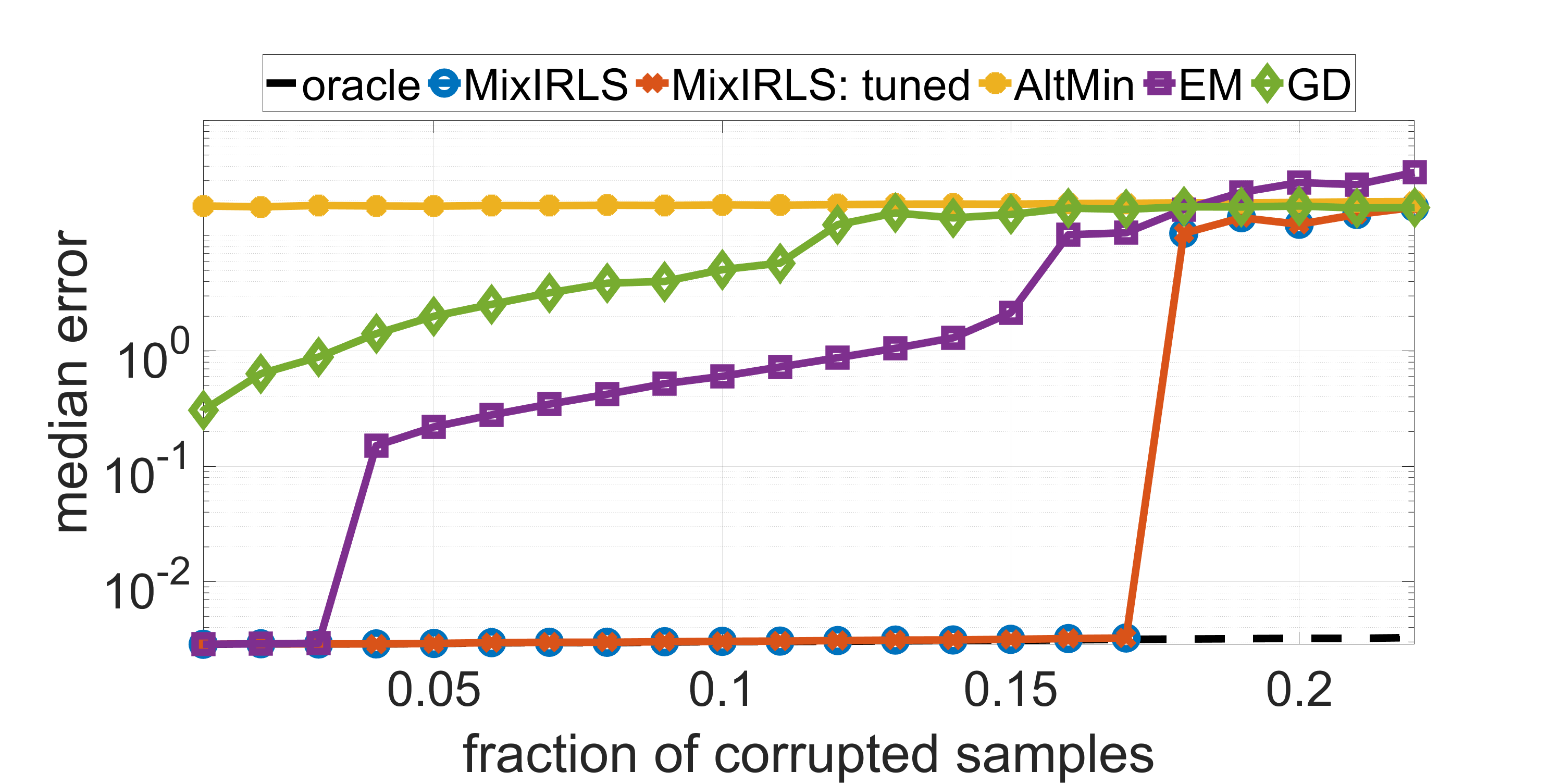}
	}
	\subfloat{
		\includegraphics[width=0.5\linewidth]{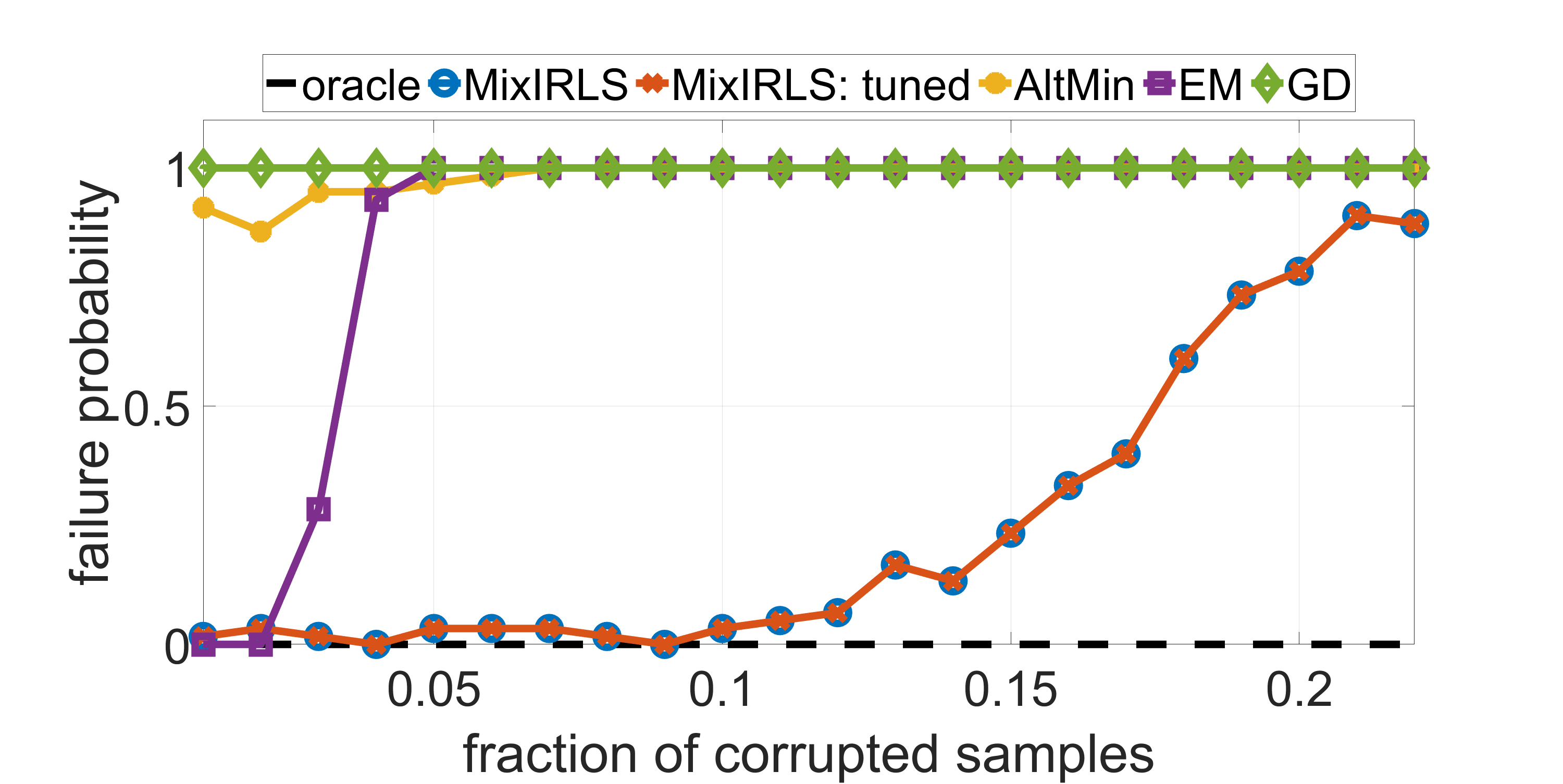}
	}
	\caption{Same setting as in \cref{fig:robustness}(left), but with $p = (1/3, 1/3, 1/3)$.}
\label{fig:balanced_corruptions}
\end{figure}

\begin{figure}[t]
\centering
	\subfloat{
		\includegraphics[width=0.5\linewidth]{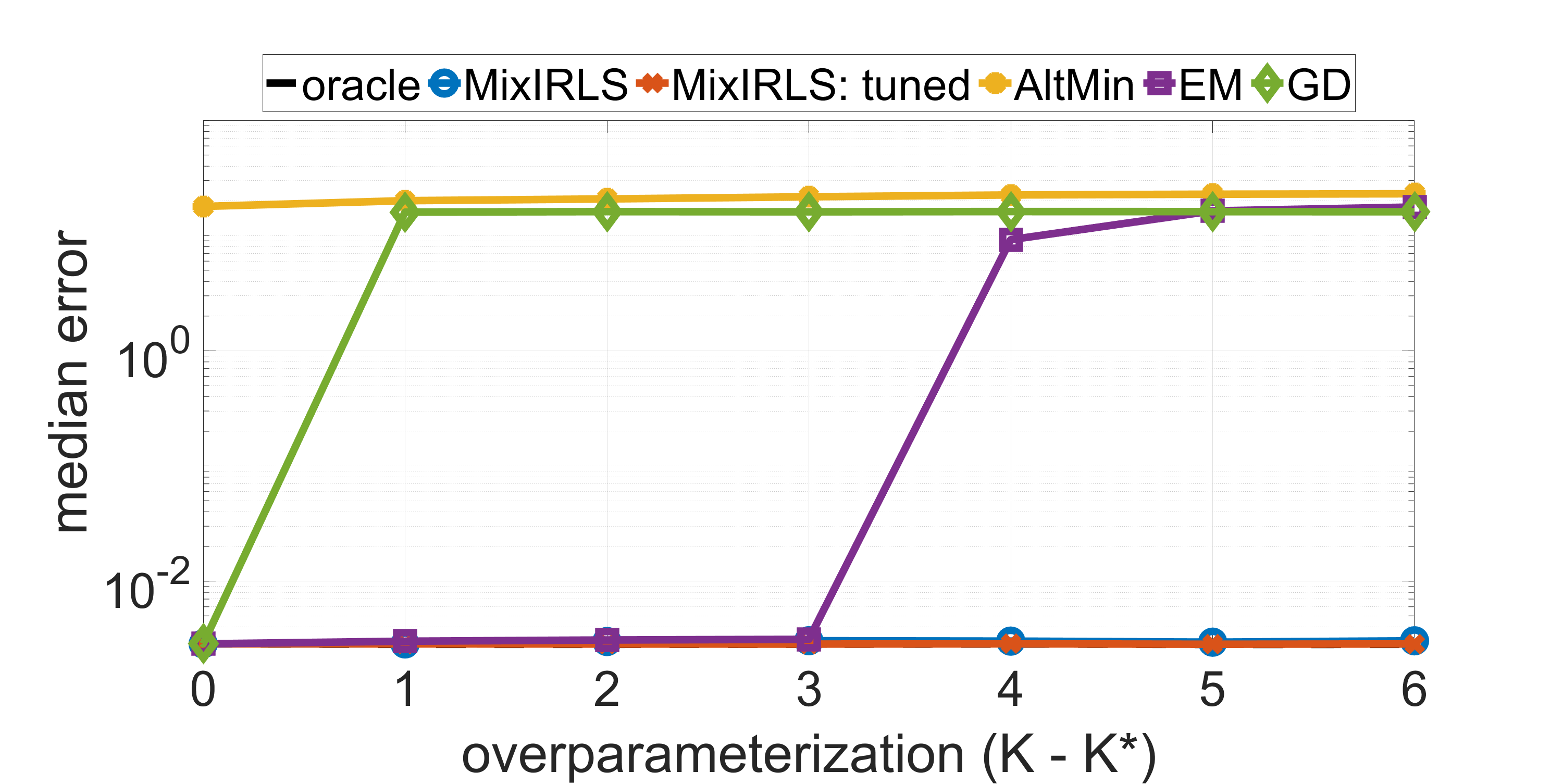}
	}
	\subfloat{
		\includegraphics[width=0.5\linewidth]{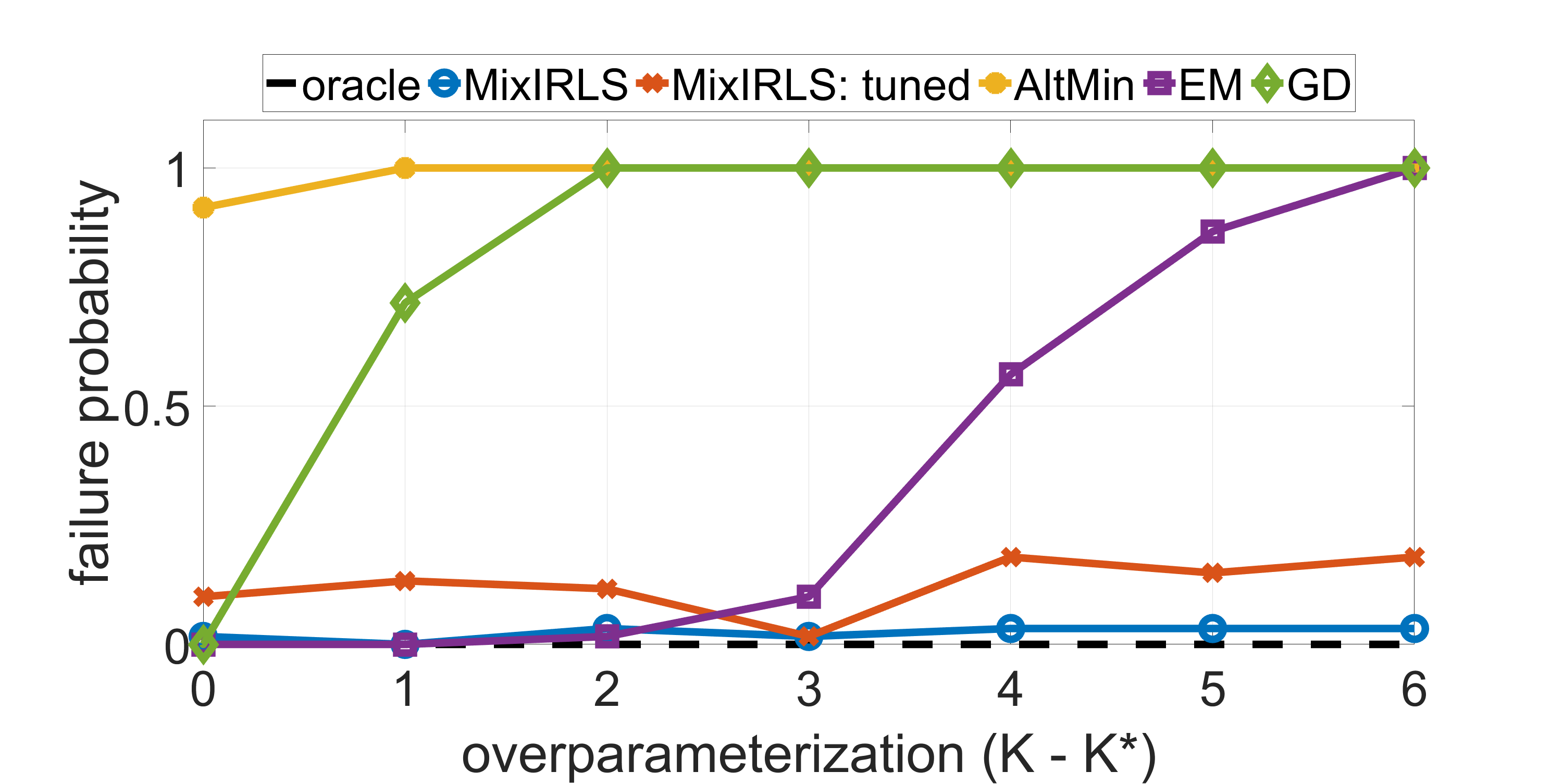}
	}
	\caption{Same setting as in \cref{fig:robustness}(right), but with $p = (1/3, 1/3, 1/3)$.}
\label{fig:balanced_overparam}
\end{figure}

\begin{figure}[t]
\centering
	\subfloat{
		\includegraphics[width=0.5\linewidth]{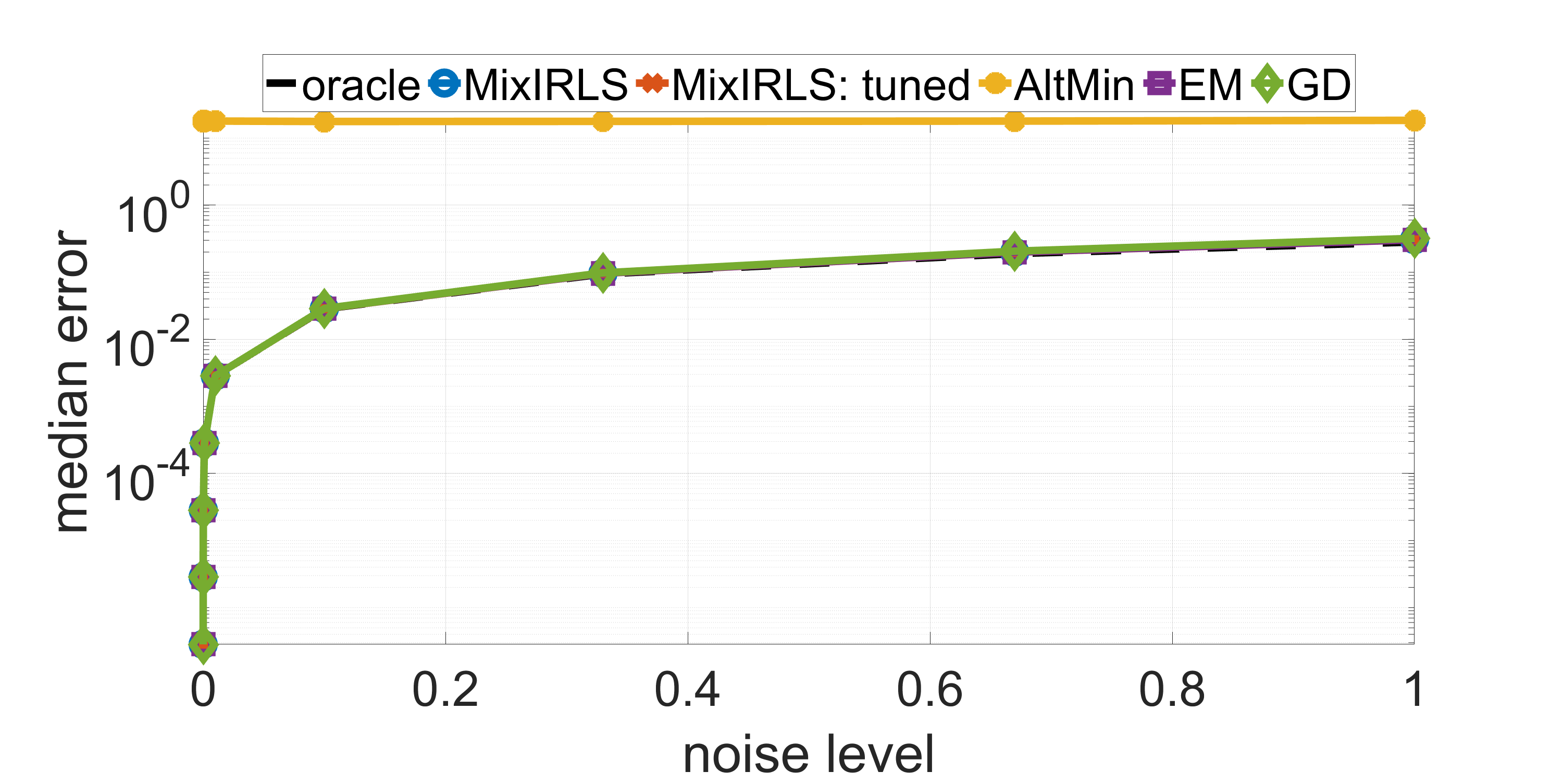}
	}
	\subfloat{
		\includegraphics[width=0.5\linewidth]{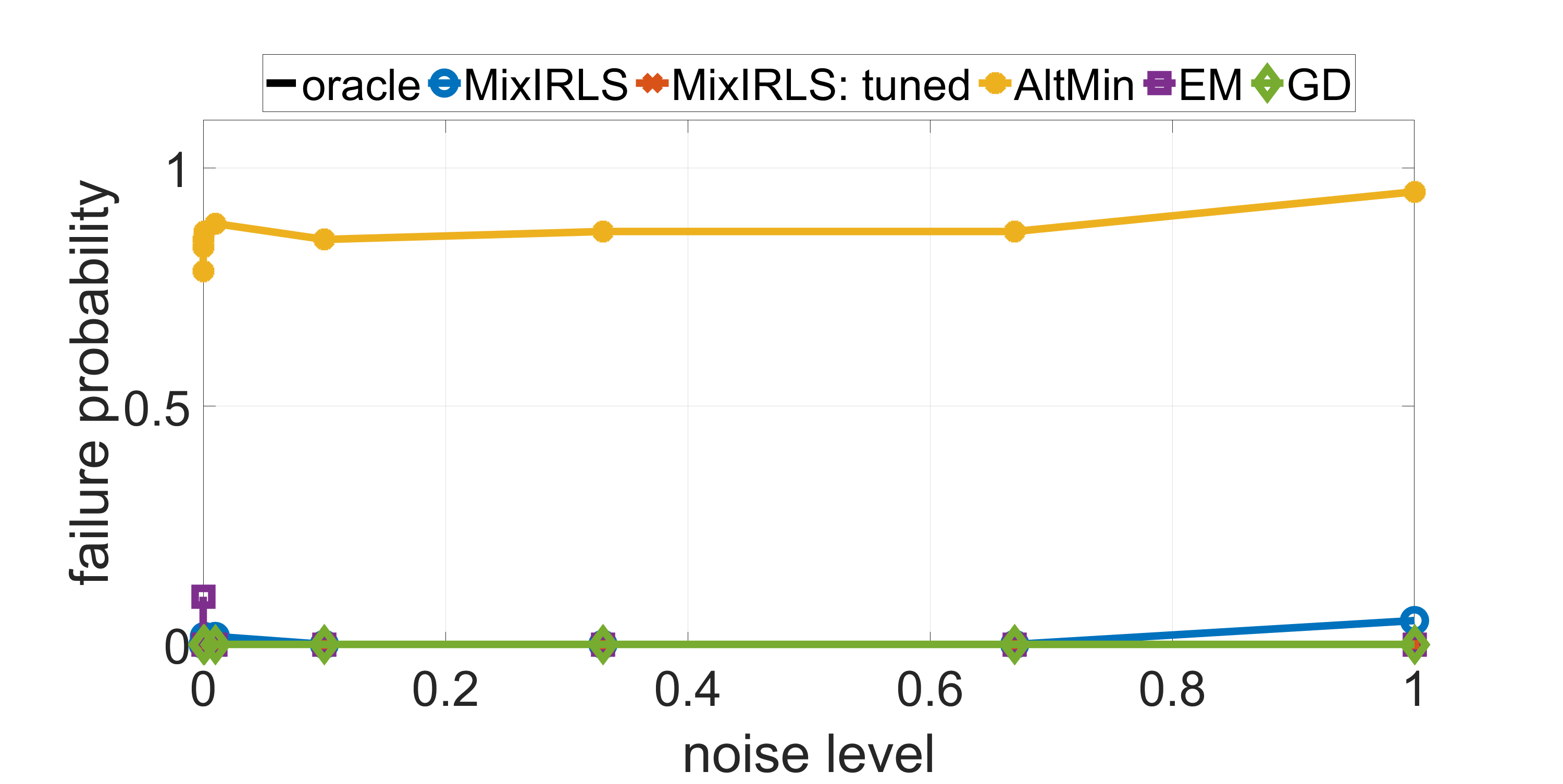}
	}
	\caption{Same setting as in \cref{fig:noise}, but with $p = (1/3, 1/3, 1/3)$.}
\label{fig:balanced_noise}
\end{figure}

%%%%%%%%%%%%%%%%%%%%%%%%%%%%%
\section{Additional Experimental Results}\label{sec:additional_experiments}
As discussed in the main text, \MIRLS finds $K=2$ components in the tone perception experiment given its default parameters (detailed in \cref{sec:additional_details}). With the value of the sensitivity parameter $w_\text{th}$ set to $0.1$ instead of $0.01$, \MIRLS finds $K=3$ components, as depicted in \cref{fig:tone_perception_outliersK3}(right). With $w_\text{th} = 0.5$, \MIRLS already finds $K=4$ components. \Cref{fig:tone_perception_outliersK3}(left) shows that equipped with a prior knowledge of $f = 3\%$ corruptions, \MIRLS identifies reasonable outliers. We note that this value of $f$ was chosen arbitrarily, and we do not know the true number of outliers in Cohen's data.

In \cref{fig:medical_wine}, we showed the median estimation errors of the algorithms for the medical insurance cost and the wine quality datasets. \Cref{fig:life_fish} shows the results for the WHO life expectancy and the fish market datasets. In addition, \cref{table:experiments_minErrors} shows the minimal estimation error across different random initializations for all four datasets. Interestingly, even though \AltMin performs consistently worse than \EM in terms of the median error, it outperforms it in terms of the minimal error. Moreover, in the special case of $K=2$, \AltMin achieves the lowest minimal error in all four datasets. However, in general, the tuned variant of \MIRLS outperforms the compared methods, including \AltMin, also in terms of the minimal error.

\begin{figure*}[ht]
\centering
	\subfloat{
		\includegraphics[width=0.5\linewidth]{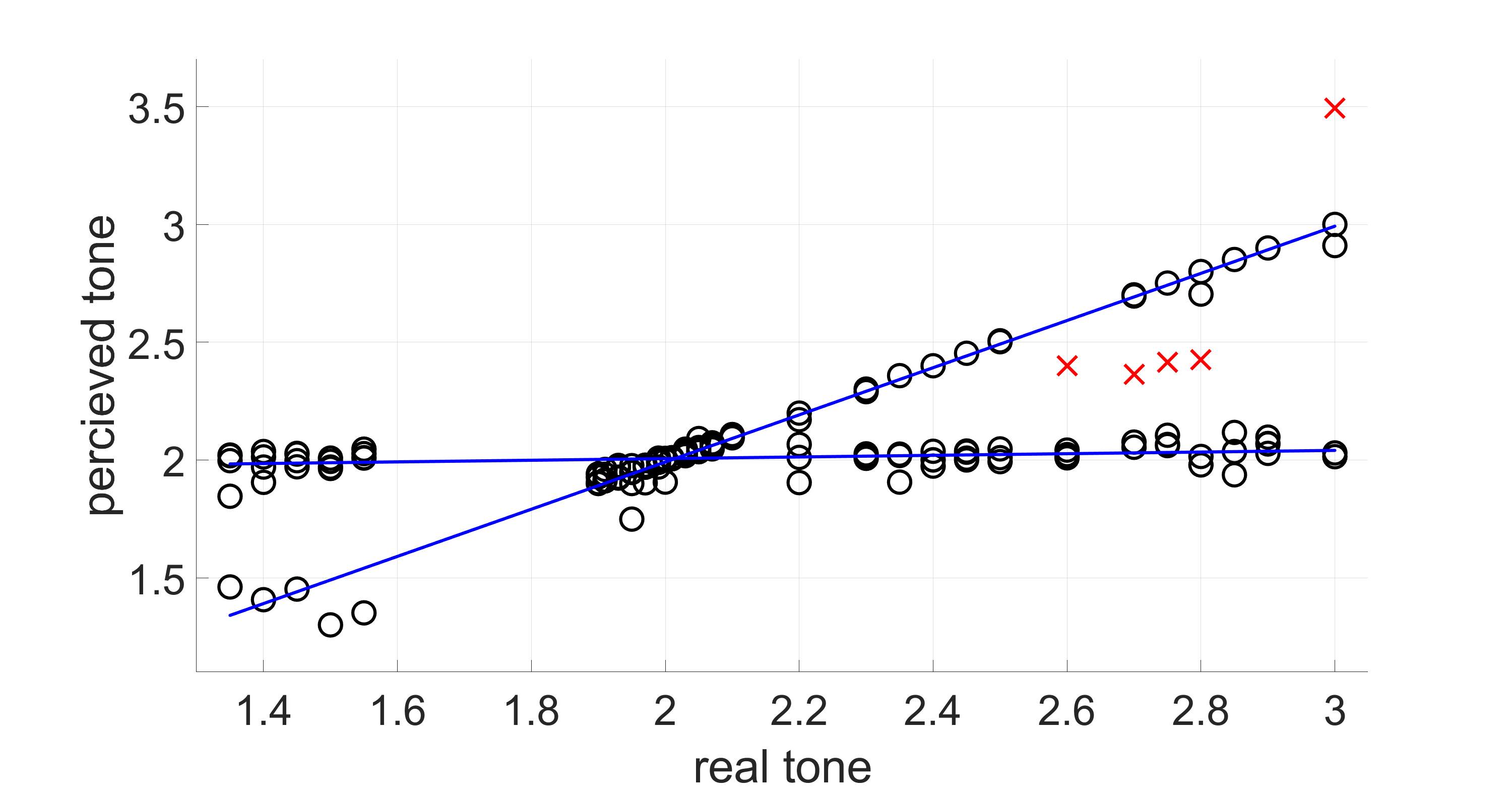}
	}
	\subfloat{
		\includegraphics[width=0.5\linewidth]{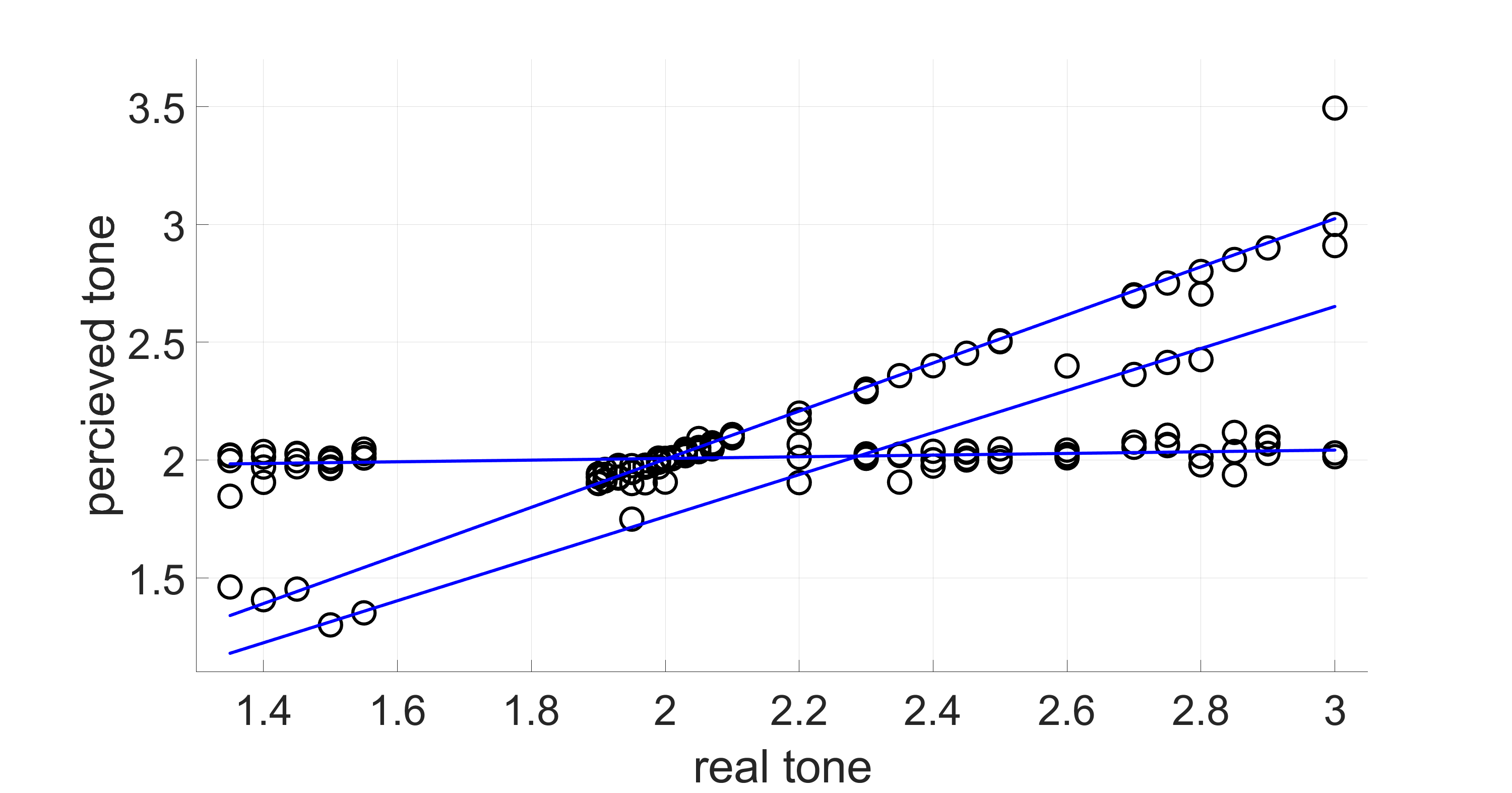}
	}
	\caption{Same as \cref{fig:tone_perception}, but with a given corruption fraction of $0.03$ (left panel), and with an increased value of the parameter $w_\text{th}$ (right panel). Marked with red X are outliers identified by \MIRLS.}
\label{fig:tone_perception_outliersK3}
\end{figure*}

\begin{figure}[t]
\centering
	\subfloat{
		\includegraphics[width=0.5\linewidth]{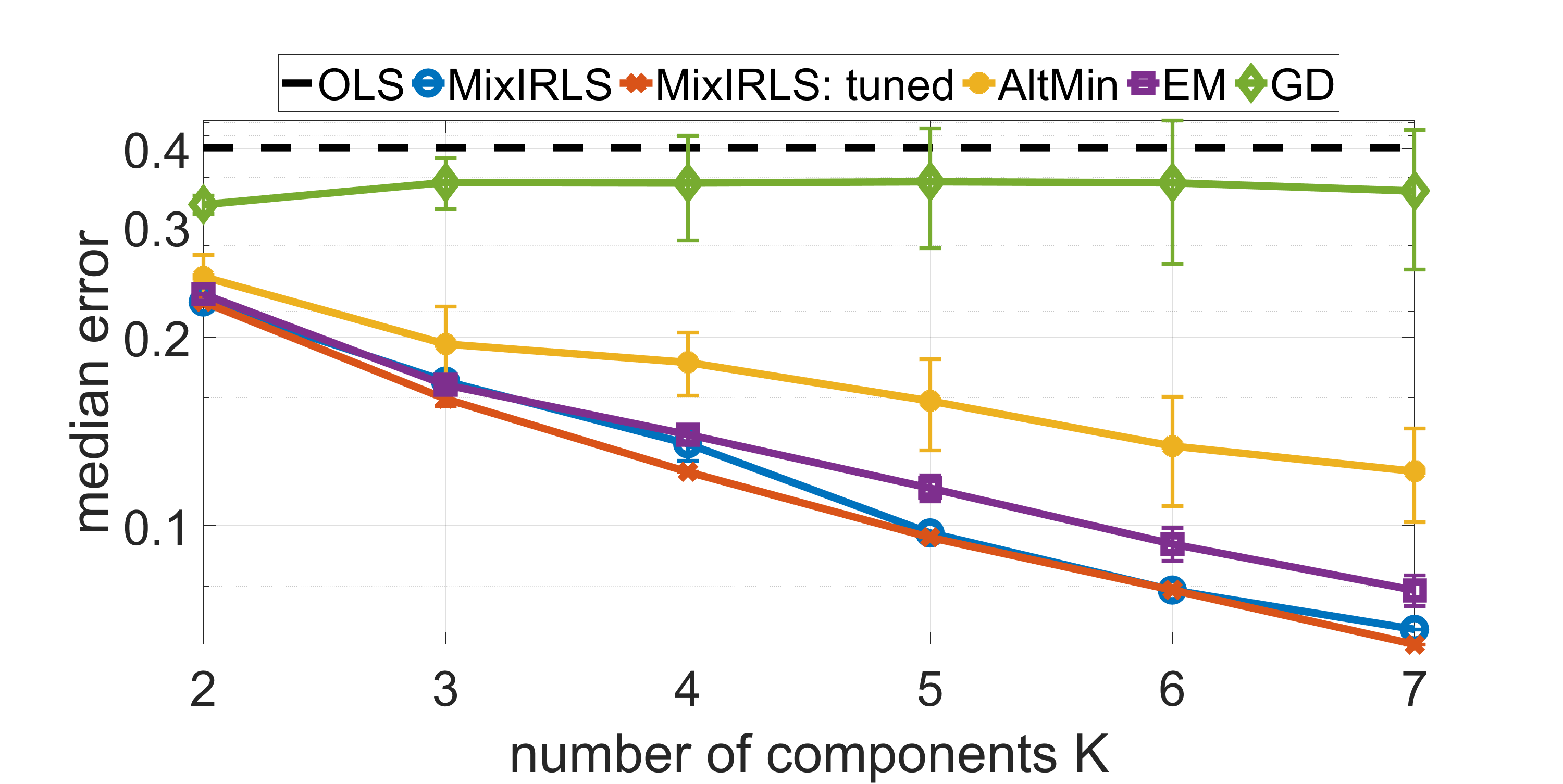}
	}
	\subfloat{
		\includegraphics[width=0.5\linewidth]{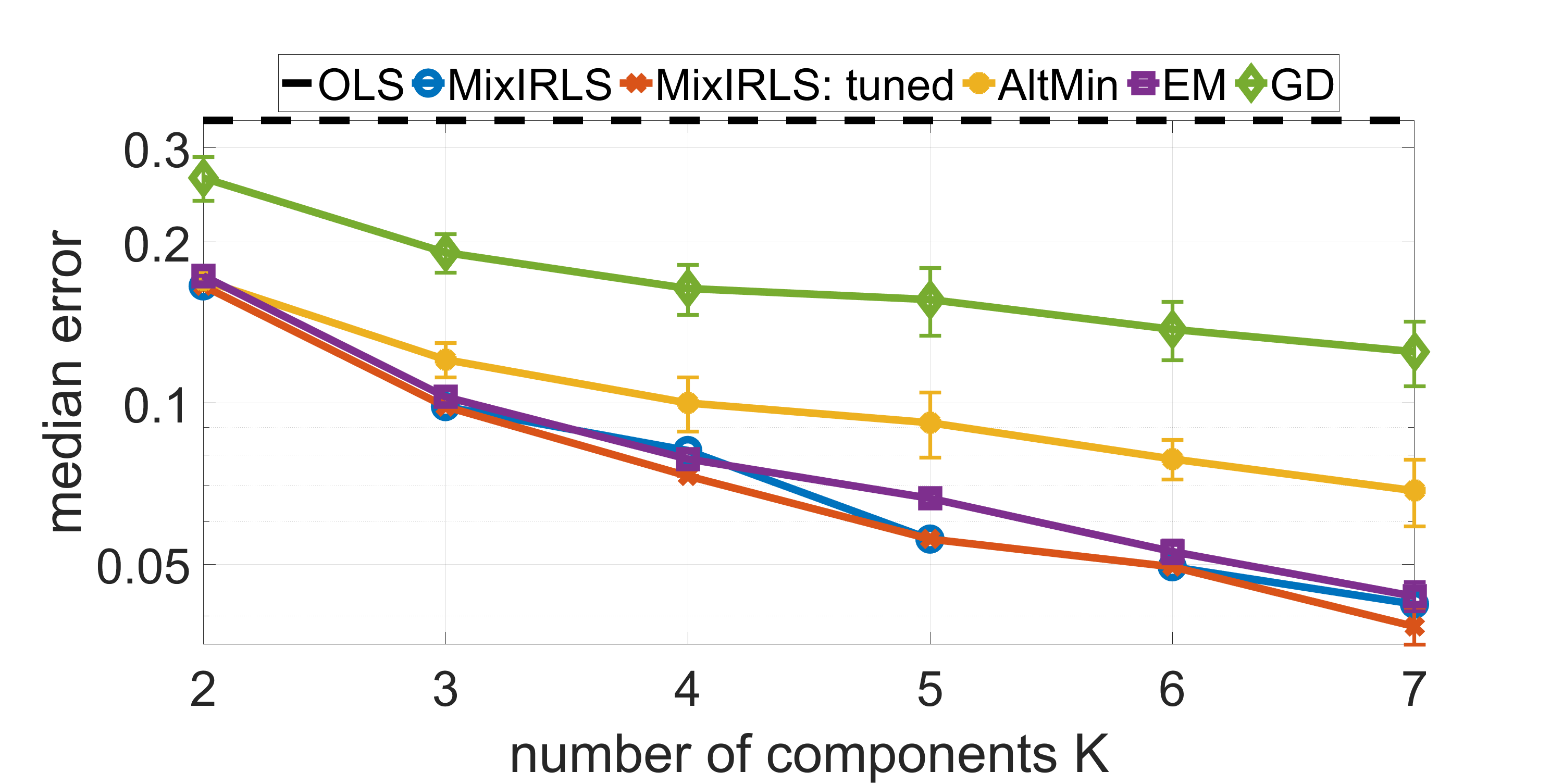}
	}
	\caption{Same as in \cref{fig:medical_wine}, but on the WHO life expectancy and the fish market datasets.}
\label{fig:life_fish}
\end{figure}

\renewcommand{\arraystretch}{1.1}
\begin{table*}[ht]
\caption{The minimal error, as defined in \eqref{eq:objective_observed}, achieved by several MLR algorithms, across 50 realizations of the experiment, each with a different random initialization. The corresponding median errors are depicted in \cref{fig:medical_wine,fig:life_fish}.}
\centering
 \begin{tabular}{c || c | c c c c c c}
Dataset & Algorithm & $K=2$ & $K=3$ & $K=4$ & $K=5$ & $K=6$ & $K=7$ \\
 \hline
 medical & 
	\MIRLS   & 0.1594 & 0.1086 & \textbf{0.0725} & 0.0579 & 0.0450 & \textbf{0.0428} \\ &
	\MIRLST           & 0.1594 & \textbf{0.0905} & \textbf{0.0725} & \textbf{0.0538} & \textbf{0.0435} & 0.0430 \\ &
	\AltMin          & \textbf{0.1591} & 0.0950 & 0.0900 & 0.0653 & 0.0638 & 0.0628 \\ &
	\EM              & 0.1598 & 0.1230 & 0.0817 & 0.0699 & 0.0589 & 0.0438 \\ &
	\GD              & 0.2676 & 0.2567 & 0.3063 & 0.2767 & 0.2458 & 0.2920 \\ 
 \hline
 wine & 
	\MIRLS          & 0.4827 & 0.3836 & 0.2856 & 0.0802 & 0.0587 & 0.0108 \\ &
	\MIRLST           & 0.4827 & \textbf{0.2974} & \textbf{0.1437} & \textbf{0.0764} & 0.0438 & \textbf{0.0000} \\ &
	\AltMin          & \textbf{0.4747} & \textbf{0.2974} & \textbf{0.1437} & 0.0776 & \textbf{0.0311} & \textbf{0.0000} \\ &
	\EM              & 0.5593 & 0.3490 & 0.1857 & 0.1100 & 0.0485 & 0.0233 \\ &
	\GD              & 0.5852 & 0.5043 & 0.4592 & 0.4048 & 0.3974 & 0.3391 \\ 
 \hline
 WHO & 
	\MIRLS   & 0.2276 & 0.1604 & \textbf{0.1201} & 0.0973 & 0.0789 & 0.0663 \\ &
	\MIRLST           & 0.2276 & \textbf{0.1517} & 0.1213 & \textbf{0.0928} & 0.0789 & \textbf{0.0646} \\ &
	\AltMin          & \textbf{0.2246} & 0.1610 & 0.1272 & 0.1042 & 0.0974 & 0.0830 \\ &
	\EM              & 0.2315 & 0.1604 & 0.1228 & 0.0984 & \textbf{0.0776} & 0.0653 \\ &
	\GD              & 0.2990 & 0.2682 & 0.2281 & 0.2400 & 0.2008 & 0.1946 \\ 
 \hline
 fish & 
	\MIRLS   & 0.1656 & \textbf{0.0985} & 0.0808 & \textbf{0.0557} & \textbf{0.0452} & \textbf{0.0318} \\ &
	\MIRLST           & 0.1656 & \textbf{0.0985} & \textbf{0.0731} & \textbf{0.0557} & \textbf{0.0452} & 0.0327 \\ &
	\AltMin          & \textbf{0.1637} & \textbf{0.0985} & 0.0784 & 0.0703 & 0.0551 & 0.0504 \\ &
	\EM              & 0.1729 & 0.1027 & 0.0771 & 0.0596 & 0.0473 & 0.0362 \\ &
	\GD              & 0.1814 & 0.1511 & 0.1188 & 0.1221 & 0.0974 & 0.0887 \\ 
 \hline
 \end{tabular}
 \label{table:experiments_minErrors}
\end{table*}

%%%%%%%%%%%%%%%%%%%%%%%%%%%%%%%%%%%%%%%%%%%%%%%%%%%%%%%%%%%%%%%%%%%%%%%%%%%%%%%
%%%%%%%%%%%%%%%%%%%%%%%%%%%%%%%%%%%%%%%%%%%%%%%%%%%%%%%%%%%%%%%%%%%%%%%%%%%%%%%

\end{document}